\def\expandafter\normalsize\expandafter{%
}
\theoremstyle{plain}
\newtheorem{theorem}{Theorem}[section]
\newtheorem{lemma}[theorem]{Lemma}
\theoremstyle{definition}
\theoremstyle{remark}
\newtheorem{fact}[theorem]{Fact}
\newtheorem{remark}[theorem]{Remark}
\providecommand{\customgenericname}{}
\newcommand{\newcustomtheorem}[2]{%
  \newenvironment{#1}[1]
  {%
   \renewcommand\customgenericname{#2}%
   \renewcommand\theinnercustomgeneric{##1}%
   \innercustomgeneric
  }
  {\endinnercustomgeneric}
}
\newcommand{\bP}{\mathbb{P}}
\newcommand{\bE}{\mathbb{E}}
\newcommand{\cA}{\mathcal A}
\newcommand{\cO}{\mathcal O}
\newcommand{\cQ}{\mathcal Q}
\newcommand{\cS}{\mathcal S}
\icmltitlerunning{An Optimistic Algorithm for online CMDPS with Anytime Adversarial Constraints}
\begin{document}

\twocolumn[
\icmltitle{An Optimistic Algorithm for online CMDPS with Anytime Adversarial Constraints}




\begin{icmlauthorlist}
\icmlauthor{Jiahui Zhu}{xxx}
\icmlauthor{Kihyun Yu}{yyy}
\icmlauthor{Dabeen Lee}{yyy}
\icmlauthor{Xin Liu}{zzz}
\icmlauthor{Honghao Wei}{xxx}
\end{icmlauthorlist}
\icmlaffiliation{xxx}{School of Electrical Engineering \& Computer Science, Pullman, USA}
\icmlaffiliation{yyy}{Department of Industrial \& Systems Engineering, KAIST,  Daejeon, South Korea}
\icmlaffiliation{zzz}{School of Information Science \& Technology, ShanghaiTech University, Shanghai, China}

\icmlcorrespondingauthor{Jiahui Zhu}{jiahui.zhu@wsu.edu}

\icmlkeywords{Machine Learning, ICML}

\vskip 0.3in
]



\printAffiliationsAndNotice{} 

\begin{abstract}
Online safe reinforcement learning (RL) plays a key role in dynamic environments, with applications in autonomous driving, robotics, and cybersecurity. The objective is to learn optimal policies that maximize rewards while satisfying safety constraints modeled by constrained Markov decision processes (CMDPs). Existing methods achieve sublinear regret under stochastic constraints but often fail in adversarial settings, where constraints are unknown, time-varying, and potentially adversarially designed. In this paper, we propose the {\bf O}ptimistic {\bf M}irror {\bf D}escent {\bf P}rimal-{\bf D}ual (OMDPD) algorithm, the first to address online CMDPs with anytime adversarial constraints. OMDPD achieves optimal regret \(\tilde{\mathcal{O}}(\sqrt{K})\) and strong constraint violation \(\tilde{\mathcal{O}}(\sqrt{K})\) without relying on Slater’s condition or the existence of a strictly known safe policy. We further show that access to accurate estimates of rewards and transitions can further improve these bounds. Our results offer practical guarantees for safe decision-making in adversarial environments.

\end{abstract}

\section{Introduction}

Online safe reinforcement learning (RL) has been applied successfully across various domains, including autonomous driving \citep{IseNakFuj_18}, recommender systems \citep{ChoGhaJan_17}, and robotics \citep{AchHelDav_17}. It enables the efficient development of policies that adhere to essential safety requirements, such as collision avoidance, budget compliance, and reliability. In safe RL, the objective is to learn an optimal policy that maximizes cumulative rewards while satisfying \textit{expected cumulative safety constraints} when interacting with an unknown environment. These safety-critical sequential decision-making problems are formally modeled as constrained Markov decision processes (CMDPs) \citep{Alt_99}. We study the problem of learning such policies in episodic CMDPs where the agent must balance exploration and exploitation to minimize \textbf{strong regret}(regret and hard constraint violation), a metric quantifying the cumulative performance loss relative to the optimal safe policy---while avoiding excessive constraint violations.

In many practical scenarios, especially in safety-critical applications, the environment is often dynamic, with conditions that can change unpredictably or even adversarially. Relying solely on stationary models can lead to suboptimal or unsafe outcomes, as these models fail to capture the time-varying nature of constraints and system dynamics. For instance, in autonomous driving \citep{KirLatVerSze_21}, it is crucial to avoid collisions in variable environments influenced by changing traffic flows and weather conditions. Similarly, in cybersecurity \citep{YinChiUgo_20}, adversarial CMDPs can model interactions between system defenders and attackers, requiring defenders to make decisions under uncertainty and strict system constraints while accounting for potential adversarial actions. This necessitates the consideration of adversarial settings, where constraints may be unknown, time-dependent, and potentially adversarially designed to challenge the learning process. Existing methods \citep{EfrManPir_20, MulAlaRam_23, GerStrGen_23, muller2024truly, kitamura2024policy} provide sublinear regret guarantees for stochastic constraints but struggle to generalize to such adversarial cases. The adversarial setting is inherently more challenging due to the dynamic and unpredictable nature of constraints, compounded by the assumption that error cancellation in constraint violations is not allowed. Adversarial CMDPs are thus crucial for handling dynamic environments, ensuring robust and safe decision-making in situations where conventional stochastic models fall short.

Constraint violation is usually used to theoretically evaluate the performance of the safety of a safe RL algorithm. One commonly used constraint violation evaluates the policies in the beverage sense such that it allows error cancellation, as defined by \cite{EfrManPir_20}, involves summing positive (unsafe) and negative (safe) constraint violations, ensuring a sublinear total constraint violation during learning. In this paper we consider a stronger notion of constraint violation, focusing exclusively on the sum of positive errors. To illustrate, consider a cost function \( d(\pi_k) \), which equals \(-1\) when the policy \(\pi_k\) used in episode \( k \) is safe, and \(1\) when it is unsafe. If half of the policies over \( K \) episodes are safe and the other half are unsafe, the weak constraint violation---permitting cancellation---results in  $
[\sum_{k=1}^K d(\pi_k)]^+ = 0,$ where $[\cdot]^+ = \max\{\cdot, 0\}.$ However, under strong constraint violation, which disallows cancellation, the total violation becomes $
\sum_{k=1}^{K} \left[d(\pi_k)\right]^+ = K/2.
$ Clearly, weaker sublinear constraint violations do not ensure relatively safe policies during learning. 

In this work, we aim to address two fundamental research questions:
{\bf RQ1:} Can we design a unified algorithm that achieves the optimal order of regret and hard constraint violation in unknown CMDPs with both stochastic and adversarial costs under minimum assumption?
{\bf RQ2:} What are the bottlenecks for further improving the bound?

CMDPs with cumulative constraints that allow cancellation have been extensively studied under both model-free \citep{WeiLiuYin_22, WeiLiuYin_22-2, WeiGhoShr_23, GhoZhoShr_22, BaiBedAga_22} and model-based approaches \citep{DinWeiYan_20, LiuZhoKal_21, BurHasKal_21, SinGupShr_20, DinWeiYan_20, CheJaiLuo_22, EfrManPir_20}. The study by \cite{QiuWeiYan_20,StrCasMar_24} focuses on CMDPs with only an adversarial reward function. Recent work \citep{GerStrGen_23, stradi2024optimal} considers online learning CMDPs under strong constraint violation for the long-term average cost and sublinear regret and violation results are well established. However, we note that in our scenario, the constraints are sufficiently strict—particularly in adversarial settings—that \textit{an average safe policy fails to guarantee safety for each individual episode}. Consequently, focusing on the long-term average constraint alone is less meaningful under these conditions. Other works such as \cite{DingLav_22, WeiGhoShr_23} consider scenarios where rewards, costs, and transition kernels are non-stationary, assuming bounded total variation. However, all of the above works we mentioned are not applicable to settings with adversarial costs, and they only address weak constraint violations.

\setlength{\tabcolsep}{2pt} 
\begin{table*}[!htb]
\centering
\label{tab:convex result}
\begin{tabular}{|c|c|c|c|c|c|c|c|}
\toprule
Algorithm & Regret &\multicolumn{2}{c|}{Adversarial Violation} & \multicolumn{2}{c|}{Stochastic Violation} & Slater's Condition & Known Safe Policy \\
\hline
\citep{EfrManPir_20} & $\mathcal{O}(\sqrt{K})$ & \multicolumn{2}{c|}{N/A}  &  \multicolumn{2}{c|}{$\mathcal{O}(\sqrt{K})$} & \ding{51} & No\\ 
\hline
\citep{MulAlaRam_23}$^\dagger$ & $\mathcal{O}(\sqrt{K})$ &\multicolumn{2}{c|}{N/A}  & \multicolumn{2}{c|}{$\mathcal{O}(\sqrt{K})$} & \ding{51} & Yes\\ 
\hline
\citep{stradi2024optimal}$^\dagger$  & $\tilde{\mathcal{O}}(\sqrt{K})$ &\multicolumn{2}{c|}{N/A}  & \multicolumn{2}{c|}{$\tilde{\mathcal{O}}(\sqrt{K})$} & \ding{51}& No\\ 
\hline 
\citep{muller2024truly}$^\dagger$  & $\tilde{\mathcal{O}}(K^{0.93})$ &\multicolumn{2}{c|}{N/A}  & \multicolumn{2}{c|}{$\tilde{\mathcal{O}}(K^{0.93})$} & \ding{51} & No \\ 
\hline 
\citep{kitamura2024policy}$^\dagger$ & $\tilde{\mathcal{O}}(K^{\frac{6}{7}})$ &\multicolumn{2}{c|}{N/A}  & \multicolumn{2}{c|}{$\tilde{\mathcal{O}}(K^{\frac{6}{7}})$} & \ding{51} & No\\ 
\hline 
 \cellcolor[gray]{0.8}  \textbf{OMDPD}$^\dagger$ & $\tilde{\mathcal{O}}(\sqrt{K})$  &\multicolumn{2}{c|}{$\tilde{\mathcal{O}}(\sqrt{K})$}  & \multicolumn{2}{c|}{$\tilde{\mathcal{O}}(\sqrt{K})$} & \ding{55} & No \\ 
\bottomrule
\end{tabular}
\caption{Comparison between OMDPD and existing related work. We omit the dependence on the dimension of the action, state space, and the number of steps in CMDPs here. $\dagger:$ Consider the stronger notion of constraint violation, which disallows cancellation. \citep{EfrManPir_20} only consider a weaker version. \citep{muller2024truly} need to access a strictly feasible policy. More discussions can be found in Section \ref{sec: related work}.}
\end{table*}

To address the aforementioned challenges, we propose the {\bf O}ptimistic {\bf M}irror {\bf D}escent {\bf P}rimal-{\bf D}ual (OMDPD) algorithm that ensures optimal regret and strong constraint violation bounds with respect to the number of episodes \( K \), regardless of whether the reward and cost functions are generated stochastically or adversarially. Our contributions are summarized as follows:
\begin{itemize}[leftmargin=*]
    \item We present the {\bf first} work addressing online CMDPs with anytime adversarial constraints. Our work advances the theoretical understanding of CMDPs under unknown adversarial cost functions by proposing a novel unified algorithm, OMDPD, capable of handling both stochastic and adversarial rewards/costs without relying on Slater's condition. OMDPD achieves $\tilde{\mathcal{O}}(\sqrt{K})$ regret and $\tilde{\mathcal{O}}(\sqrt{K})$ strong constraint violation when rewards and costs are either stochastic or adversarial, both of which are optimal with respect to the total number of learning episodes $K$.
    \item It is well known that one of the bottlenecks of forbidding algorithms for online CMDP from achieving a higher bound is because of the estimation errors of reward/cost and transition kernels. We further show that if a perfect simulator (generative model) is given such that we can have an accurate estimate of the reward and transition kernels (cost function is also not known and can be adversarial), our regret bound can be further improved to $\mathcal{O}(1)$ when the reward function(also unknown) is fixed.
\end{itemize}

\section{More Related Work}\label{sec: related work}

\citet{MulAlaRam_23} proposes an augmented Lagrangian method for addressing CMDPs with strong constraint violations under a requirement of a strictly known safe policy. \citet{stradi2024optimal} propose a primal-dual algorithm (CPD-PO), building on the policy optimization framework of \citep{luo2021policy}, which achieves \(\tilde{\mathcal{O}}(\sqrt{K})\) regret. However, neither of these works addresses the adversarial cost setting. In addition, \citet{stradi2024learning} consider the adversarial reward setting but still assume stochastic constraints, requiring strong assumptions such as access to a strictly feasible policy and knowledge of its associated cost. Clearly, \citet{stradi2024learning} also cannot be applied to adversarial constraint scenarios.
Additional studies by \cite{muller2024truly} and \cite{kitamura2024policy} focus on last-iterate convergence under stochastic constraints, achieving rates of $\tilde{\mathcal{O}}(K^{0.93})$ and $\tilde{\mathcal{O}}(K^{6/7})$, respectively. These results crucially rely on a stationary setting. A detailed comparison of the theoretical results between our algorithm and the most existing studies is summarized in Table \ref{tab:convex result}.

\section{Preliminaries}
\textbf{Notation.} For any $n \in \mathbb{N}$, we use the short-hand notation $[n]$ to refer to the set of integers $\{1, \ldots, n\}$. For $x \in \mathbb{R}$, we define the operation $[x]^+ := \max\{0, x\}$ to be the positive truncation of $x$. Throughout the paper, we use $\| \cdot \|$ to denote the Euclidean norm. Additionally, for a given 1-strongly convex function $U$, we define the Bregman divergence
between two points: $\mathcal{D}(a,b)=U(a)-U(b)-\langle \nabla U(a), a-b \rangle$.

We consider a finite-horizon episodic CMDP, which is defined as a tuple $\mathcal{M} = (\mu,\mathcal{S}, \mathcal{A}, {H}, \{\mathbb{P}_h\}_{h=1}^H,   \{r_k\}_{k=1}^K, \allowbreak \{d_k\}_{k=1}^K),$ where $\mu$ is the initial state distribution, $\mathcal{S}$ and $\mathcal{A}$ are the state and action spaces. We assume that both the state space and action space are finite and countable with cardinalities $\vert \mathcal{S}\vert = S, \vert \mathcal{A}\vert = A.$ In the online learning under finite-horizon episodic CMDPs, each episode $k\in[K]$ has $H$ steps and at each $h\in[H],$ we use $\bP_h(s'\vert s,a):\cS\times\cA\times\cS\rightarrow[0,1] $ to denote the transition kernel from state action pair $(s,a)$ to a next state $s'$ at step $h.$ Without loss of generality, we assume that the reward function $\{r_k\}_{k=1}^K$ is a sequence of vectors at each episode $k\in[K],$ in particular,$r_k=(r_{k,1},\dots,r_{k,H}),$ where $r_{k,h}: \mathcal{S} \times \mathcal{A} \to [0, 1],\forall h\in[H],k\in[K].$ Similar, the cost function $d_{k,h}$ at step $h$ in episode $k$ is $d_{k,h }:\mathcal{S}  \times A \to [-1, 1],$ both rewards and costs are bounded for any $h\in [H],k\in[K].$ The transition kernels, reward functions, and cost functions are unknown. In this paper, we consider the  \textit{stochastic reward} where $r_k$ is a random variable distributed according to a distribution $R$ for every $k\in K,$ with two different types of cost functions: {\it stochastic constraint} and {\it adversarial constraint}:

\begin{itemize}[leftmargin=*]
    \item \textbf{Stochastic cost:} In stochastic cost setting, $d_k$ is a random variable distributed according to a fixed probability distribution $D$ for every $k\in[K].$
    \item \textbf{Adversarial cost:} In adversarial cost setting, $d_{k}$ are adversarially-selected and unknown.
\end{itemize}
In online CMDPs, the agent interacts with the CMDP by executing a policy $\pi = \{\pi_1,\pi_2,...,\pi_H\},$ where $\pi_h(\cdot\vert s)\in\Delta (\cA),$ and $\Delta(\cdot)$ is a probability simplex.  We denote by $\pi(\cdot|s)$ the probability distribution for a state $s \in\cS,$ Whenever the agent takes an action $a$ in state $s$ at step $h,$ in episode $k,$ it observes reward $r_{k,h}(s,a)$ sampled from a fixed distribution, and cost $,d_{k,h}(s,a)$
sampled either from a fixed distribution for the stochastic setting or chosen by an adversary for the adversarial setting. Then the value function for the reward and cost under the policy $\pi$ and transition kernel $p$ are defined as:
\begin{align}
     V^{\pi}(r_k,p)&:=\mathbb{E}\left[\sum_{h=1}^H r_{k,h}(s_h, a_h)\vert s_1, \pi, p\right]\\
     V^{\pi}(d_k,p)&:=\mathbb{E}\left[\sum_{h=1}^H d_{k,h}(s_h, a_h)\vert s_1, \pi, p\right].
\end{align}
In the following, we denote by $\Pi$ the set of all the possible policies the agent can choose from. we are interested in solving the following optimization problem:
\begin{align}
     &\pi^* \in \arg\max_{\pi \in \Pi} V^{\pi}(\bar{r},p) \nonumber\\
      &s.t. \quad  V^{\pi}(\bar{d},p) \leq 0, \quad\text{\bf (stochastic cost)}, \label{eq:opt_v} \\
     &s.t. \quad  V^{\pi}(d_k,p) \leq 0, \forall k\in[K] \quad\text{\bf (adversarial cost)}, \nonumber
\end{align}
where $\bar{r}:= \bE_{r\sim R}[r], \bar{d}:=\bE_{d\sim D}[d].$ The solution of this offline optimization problem \eqref{eq:opt_v} is considered as the baseline algorithm which serves to evaluate the performances of online algorithms. The goal of the online CMDP problem is to learn an optimal policy to minimize cumulative regret and strong cumulative violation of constraints after $K$ episodes, which are defined below:
\begin{align}
    \text{Regret}(K)= & \sum_{k=1}^K \left[V^{\pi^*}(\bar{r},p)-V^{\pi_k}(\bar{r},p)\right] \label{eq: regret define}\\
    \text{Violation}(K)\!=\! &\sum_{k=1}^K \!\left[V^{\pi_k}(\bar{d},p)\right]^{+}, \text{\bf (stochastic cost)}\!\!\! \label{eq: vio define-sto}\\
    \text{Violation}(K)\!=\! &\sum_{k=1}^K \!\left[V^{\pi_k}({d_k},p)\right]^{+}, \text{\bf (adversarial cost)}\!\!\!  \label{eq: vio define-adv}
\end{align}
Alternatively, the online optimization problem \eqref{eq:opt_v} can also be represented using the notion of occupancy measure \citep{Alt_99} $\{q_h^\pi(s,a;p)\}_{h=1}^H$ under a policy $\pi$ and transition kernel $p.$ For every$s\in\mathcal{S},a\in\mathcal{A},$ we have the occupancy measure defined as: 
\begin{align}
     q^\pi_h(s,a,s') &= \Pr(s_{h+1} = s', s_h=s,a_h=a\vert p,\pi,s_1) \nonumber \\
    q^{\pi}_h (s,a) &= \sum_{s'\in\mathcal{S}} q^\pi_h(s,a,s') \label{eq:qvec}
\end{align}
It is well known that the CMDP problem can be formulated as an LP problem \cite{Alt_99}, then the optimal occupancy measure can be obtained by solving the following optimization problem:
\begin{align}
 &  \max_{q\in \mathcal{Q}} \bar{r}^\top q \\
 s.t.& ~~ \bar{d}^\top q \leq 0, \textbf{(stochastic cost)} \label{eq:cmdp} \\
  s.t. & ~~ d_k^\top q \leq 0, \forall k\in[K], \textbf{(adversarial cost)},
\end{align}
where \( q \in [0,1]^{SAH} \) is the occupancy measure vector, with its values defined in Eq.~\eqref{eq:qvec}, and \( \mathcal{Q} \) represents the set of all valid occupancy measures. \( \bar{r} \in [0,1]^{SAH} \) and \( \bar{d}(d_k) \in [-1,1]^{SAH} \) denote the reward and cost vectors, respectively, with a slight abuse of notation. On either hand, for any $q,$ the corresponding policy can be reconstructed as:
\begin{align}
     \pi_{h}^q(a|s) = \frac{q_{h}(s, a)}{\sum_{a'} q_{h}(s, a')}. \label{eq:construct pi}
\end{align}
Given all the notations above, we denote the optimal solution to the optimization problem \eqref{eq:opt_v} as $q^*,$ which also serves as the baseline. In this paper, following the standard assumptions, we consider the bandit feedback setting in which the learner observes only the rewards and costs for the chosen actions in the stochastic cost setting. However, in the adversarial cost setting, the full cost vector $ d_k $ is revealed after episode $ k,$ while the reward remains as bandit feedback throughout.

\section{Main Algorithm}
In this section, we introduce our main algorithm and the designs behind that to ensure the optimal order on the regret and violation bounds. 

\subsection{Optimistic Estimates}\label{sec: Optimistic Estimates}
To encourage the exploration of the unknown CMDP, we first need to use the principle of optimistic estimate \citep{AueJakOrt_08}. Let $n_h^{k-1}(s,a) = \sum_{k'=1}^{k-1} \mathds{1}_{\{s_h^{k'}=s, a_h^{k'}=a\}}$ denote the number of times that the state-action pair $(s,a)$ is visited at step $h$ before episode $k$. Here, $(s_h^{k'}, a_h^{k'})$ denotes the state-action pair visited at step $h$ in episode ${k'}$, and $\mathds{1}_{\{\cdot\}}$ is the indicator function. Then the empirical transition kernels, rewards and violations can be calculated as follows:
\begin{align}
     \hat{p}_h^{k-1}(s'|s,a) &:= \frac{\sum_{{k'}=1}^{k-1} \mathds{1}_{\{s_h^{k'}=s, a_h^{k'}=a, s_{h+1}^{k'}=s'\}}}{n_h^{k-1}(s,a) \lor 1},\\
     \hat{r}_h^{k-1}(s'|s,a) &:= \frac{\sum_{{k'}=1}^{k-1} R^{k'}_{h}(s,a)\mathds{1}_{\{s_h^{k'}=s, a_h^{k'}=a\}}}{n_h^{k-1}(s,a) \lor 1},\\
     \hat{d}_h^{k-1}(s'|s,a) &:= \frac{\sum_{{k'}=1}^{k-1} D^{k'}_{h}(s,a)\mathds{1}_{\{s_h^{k'}=s, a_h^{k'}=a\}}}{n_h^{k-1}(s,a) \lor 1}, \label{eq:est-cost}
\end{align}
where $a \lor b := \max\{a,b\}$. Remark that Eq.~\eqref{eq:est-cost} is only used for the stochastic cost case. Then we can define the optimistic rewards, costs, and confidence set of transition kernels $B_{k,h}(s,a)$ as
\begin{align}
   & \tilde{r}_{k,h}(s,a):=\hat{r}^{k-1}_{h}(s,a)+\beta^{r}_{k,h}(s,a) \label{eq: estimation of reward} \\
  &  \tilde{d}_{k,h}(s,a):=\hat{d}^{k-1}_{h}(s,a)-
    \beta^{d}_{k,h}(s,a) \label{eq: estimation of constraint}\\
    \nonumber & B_{k,h}(s,a) := \{ \tilde{p}_h(\cdot|s,a) \in \Delta(S) \mid \forall s' \in S:\\ \nonumber &\left|\tilde{p}_h(s'|s,a) - \hat{p}_h^{k-1}(s'|s,a)\right| \leq \beta^{p}_{k,h}(s,a,s') \}\\
    &B_{k} \!:=\! \left\{\! \tilde{p} \!\mid \forall s,a,h: \tilde{p}_h^k(\cdot|s,a) \in B_{k,h}(s,a) \right\} \label{eq: estimation of transition}
\end{align}
where $\beta^p_{k,h}(s,a,s')>0$ is a UCB-type bonus, which denotes the confidence threshold for the transitions and is defined in Appendix \ref{sec: Optimistic estimates details}. Thus, we can construct a candidate set for selecting the policy at each episode $k$ as:
\begin{align}
     \label{eq:Qk_set}
    \mathcal{Q}_k := \{ q^{\pi}(p) \in \mathbb{R}^{SAH} | \pi \in \Pi, p\in B_k \},
    \end{align}
where we denote $q^\pi(p)\in \mathbb{R}^{SAH}$ as the stacked occupancy measure vector under transition kernel $p,$ a policy $\pi,$ and $\Pi$ is the set of all the feasible policies.

\subsection{Surrogate Objective Function}
The objective of online CMDP learning is twofold: (1) to control the constraint violations over time, and (2) to maximize cumulative reward. Thus, after constructing the feasible candidate set for the policy $\cQ_k,$ our algorithm aims to solve the following optimization problem at each episode:
\begin{equation}
    \max_{q\in\cQ_k, d_k^\top q \leq 0} r_k^\top q. \label{eq:f_kq}
\end{equation}
Inspired by \citep{SinVaz_24,GuoLiuWei_22} we consider the following surrogate objective function with an exponential potential Lyapunov function: $\Phi(x):=\exp(\beta x) -1,$ for some constant $\beta>0:$
\begin{align}
    {f}_k(q)=
\alpha \Bigl(-\tilde{r}_k^\top q+
\Phi'(\lambda_k) \bigl[\tilde{d}_k^\top q\bigr]^+ \Bigr)\!-\!\tfrac{1}{2}\,\|q - q_k\|^2.
\end{align}
Using the prediction (estimation) of the reward and cost function together with the online mirror descent optimization methods we can achieve a tighter bound.

The dual variable \(\lambda_k\) aims to track cumulative constraint violations during learning. Then through analysis of the drift term \(\Phi(\lambda_k) - \Phi(\lambda_{k-1})\), the algorithm adaptively regulates long-term violation behavior: exponential growth in \(\Phi(\lambda_k)\) dynamically amplifies constraint penalties during high-violation regimes, while bounded drift guarantees violations remain controllable. Together, these components enforce an safe exploration.

Specifically, we define the dual variable update as
\begin{align}
    \lambda_k = \lambda_{k-1} + \alpha \bigl[\tilde{d}_k^\top q_k\bigr]^+,
\end{align}
 where $\tilde{d}_k$ is the estimated constraint vector (Eq.~\ref{eq: estimation of constraint}) for the stochastic cost and is replaced with $d_k$ when the cost is adversarial. The operator $[\cdot]^+$ considers only the positively violated cost to efficiently control the hard constraint. We then employ the Lyapunov function $\Phi(\lambda_k)$ to track the evolution of these violations. Then we will show later that the one-step Lyapunov drift can be bounded by:
\begin{align}
\Phi(\lambda_k) - \Phi(\lambda_{k-1})
\;\le\;
\Phi'(\lambda_k) \,\cdot\, \alpha \bigl[\tilde{d}_k q_k\bigr]^+. \label{eq:drift}
\end{align}

Then by using the drift-plus-penalty framework \citep{Nee_10}, we are able to minimize the surrogate cost functions $\{f_k(q) \}_{t=1}^T$ which is the combination of the drift upper bound (Eq.~\ref{eq:drift}) and the cost function. More precisely, by selecting $q^*$ to minimize ${f}_k(q)$ within the feasible set $\mathcal{Q}_k$ and summing over $K$ episodes, we can obtain
\begin{align}
    \!\Phi(\lambda_K)\!+\!\alpha\sum_{k=1}^K
    \bigl(\!\tilde{r}_k^\top q^* \!- \!\tilde{r}_k^\top q_k\!\bigr)
    \!\le \underbrace{\!\sum_{k=1}^K\!
    \bigl(\!{f}_k(q_k) \!- \!{f}_k(\!q^*\!)\!\bigr)}_{(I)} \label{eq:regret-alg}
\end{align}
We refer the term $(I)$ as $Regret_{alg}$. Hence, minimizing this algorithm's regret is crucial to bound the cumulative reward and constraint violation.

\subsection{Optimistic OMD}\label{sec: Optimistic OMD}
To obtain a tight bound on term $(I),$ we adopt the Optimistic Online Mirror Descent (OMD) algorithm to dynamically control the constraints and adapt to evolving environments. The algorithm alternates between two phases in each iteration. In the \textit{optimistic phase}, the algorithm constructs an anticipated occupancy measure, $\hat{q}_{k+1}$, by solving the following regularized optimization problem:
\begin{align*}
    \hat{q}_{k+1} = \arg\min_{q \in \mathcal{Q}_k} \eta_k \langle q, \nabla {f}_k(q_k) \rangle + \mathcal{D}(q, \hat{q}_k),
\end{align*}
where $\eta_k$ is the learning rate, and $\mathcal{D}(q, \hat{q}_k)$ represents the Bregman divergence, ensuring smooth updates. This step predicts the next occupancy measure by incorporating the gradient of the current potential function and regularization. In the \textit{refinement phase}, the algorithm updates its policy by leveraging the predicted gradient $\nabla\hat{f}_{k+1}(\hat{q}_{k+1})$. Following the setup in \citep{rakhlin2013optimization}, we assume $\hat{f}_{k+1} = f_k$. The subsequent occupancy measure, $q_{k+1}$, is obtained by solving:
\begin{align*}
    q_{k+1} = \arg\min_{q \in \mathcal{Q}_k} \eta_{k+1} \langle q, \nabla \hat{f}_{k+1}(\hat{q}_{k+1}) \rangle + \mathcal{D}(q, \hat{q}_{k+1}).
\end{align*}
After we obtain the $q_{k+1}$, we can then construct $\pi_{k+1}$ using Eq.~\eqref{eq:construct pi} and execute the policy and get estimations of the reward function, transition kernels, and cost functions if for the stochastic setting. The optimistic update is critical for enabling a tighter bound by incorporating historical gradients and occupancy measures. The full algorithm is presented in Algorithm~\ref{alg: optomd}.

\begin{algorithm}[tb]
\caption{OMDPD}
    \begin{algorithmic}
        \STATE \textbf{Input:} $q_1, \hat{q}_1\in \mathcal{Q}_1, \tilde{r}_1=\tilde{d}_1=\lambda_1=0$, learning rate $\eta_k$. \textbf{Parameters:}  $\Phi(x) = \exp(\beta x) - 1$, $\alpha = \frac{1}{2(1+\sqrt{L}_{\delta})SAH}$, $L_{\delta}$ is defined in Appendix \ref{sec: Optimistic estimates details}.
        
        \STATE Define function ${f}_k$ by:
            $$
                {f}_k(q) = \alpha(-\tilde{r}^\top_{k} q+\Phi'(\lambda_{k})[\tilde{d}_{k}^\top q]^+)-\frac{1}{2}\|q-q_{k}\|^2
            $$
        
        \FOR{$k = 1$ to $K$}
        
        \STATE Construct the optimistic occupancy measure $\hat{q}_{k+1}$ by:
            $$\hat{q}_{k+1} = \arg\min_{q\in\mathcal{Q}_{k}} \eta_k \langle q, \nabla {f}_{k}(q_{k})\rangle+\mathcal{D}(q,\hat{q}_{k})$$
        \STATE Assume $ \hat{f}_{k+1}=f_k$; Compute $\eta_{k+1}$ and update  ${q}_{k+1}$ by:
            $$
                {q}_{k+1} = \arg\min_{{q}\in\mathcal{Q}_{k}} \eta_{k+1} \langle q, \nabla \hat{f}_{k+1}(\hat{q}_{k+1})\rangle+\mathcal{D}(q,\hat{q}_{k+1})
            $$
        \STATE Construct $\pi_{k+1}$ by $q_{k+1}$ and execute policy, and get estimation $\tilde{r}_{k+1}, \tilde{d}_{k+1}$ by Eq.\eqref{eq: estimation of reward}, \eqref{eq: estimation of constraint}.
        \STATE $d_{k+1}$ is revealed to the agent for the adversarial case.
        \STATE Update ${\lambda}_{k+1}$ as follows:
            \begin{equation*}
            \lambda_{k+1} = 
            \begin{cases}
            \lambda_{k} + \alpha \bigl[\tilde{d}_{k+1} \, q_{k+1}\bigr]^+ (\text{Stochastic Case})\\
            \lambda_{k} + \alpha \bigl[d_{k+1} \, q_{k+1}\bigr]^+ (\text{Adversarial Case})
            \end{cases}
            \end{equation*}
        \STATE Update set $\mathcal{Q}_{k+1}$ by Eq.\eqref{eq: estimation of transition}.
        \ENDFOR \\
        \STATE {\bf Return:} $\pi_{K+1}$ \\
    \end{algorithmic}\label{alg: optomd}
\end{algorithm}

\section{Main Result}
We first provide the main theoretical results of OMDPD.
\begin{theorem}
\label{thm: main}
Choose $\alpha=\frac{1}{2(1+\sqrt{L}_{\delta})SAH}, \beta=\frac{SAH}{8\sqrt{\mathcal{C}}\sqrt{6SAHK}}$ and denote $ \mathcal{C} = \sup_{q_1, q_2 \in Q} \mathcal{D}(q_1, q_2)$, where $L_{\delta}$ is defined in Appendix \ref{sec: Optimistic estimates details}. Let $\nabla_k=\nabla f_k(q_k),\nabla_{k-1}=\nabla f_{k-1}(q_{k-1}) $ and consider $\eta_k=\sqrt{\mathcal{C}} \min \left\{ \frac{1}{\sqrt{\sum_{i=1}^{k-1} \|\nabla_i - \nabla_{i-1}\|_2^2} + \sqrt{\sum_{i=1}^{k-2} \|\nabla_i - \nabla_{i-1}\|_2^2}}, 1 \right\}$. We have with probability at least $1-2\delta$, OMDPD achieves: 
\begin{align*}
\text{Regret}(K)
&\le {\tilde{\mathcal{O}}}\Bigl(
    \sqrt{\mathcal{N}SAH^3K}
    + S^2AH^3 \\
& + \sqrt{\mathcal{C}}\sqrt{SAHK}
    + SAH
\Bigr),\\
\text{Violation}(K)
&\le \tilde{\mathcal{O}}\Bigl(
    \sqrt{\mathcal{N}SAH^3K}
    + S^2AH^3 \\
&+ \sqrt{\mathcal{C}}\sqrt{SAHK}\Bigr)(\text{Both settings})
\end{align*}
\end{theorem}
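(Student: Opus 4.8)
\textbf{Proof proposal for Theorem~\ref{thm: main}.}

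The plan is to combine three ingredients: (i) the drift-plus-penalty inequality \eqref{eq:regret-alg}, which already relates $\Phi(\lambda_K)$ and the optimistic reward gap to the algorithmic regret term $(I)$; (ii) an optimistic-OMD regret bound for $(I)$ that scales with the path length $\sum_k \|\nabla_k - \nabla_{k-1}\|^2$ of the surrogate gradients; and (iii) the standard UCB/confidence-set machinery from Section~\ref{sec: Optimistic Estimates} to pass between the true quantities $V^{\pi_k}(\bar r, p)$, $V^{\pi_k}(\bar d, p)$ and their optimistic counterparts $\tilde r_k^\top q_k$, $\tilde d_k^\top q_k$. First I would establish that, by optimism, $q^* \in \mathcal{Q}_k$ for all $k$ with high probability (the true transition kernel lies in $B_k$ by the concentration inequalities defining $\beta^p$, and $q^*$ is feasible), so $q^*$ is a legitimate comparator in $(I)$; simultaneously the optimistic reward overestimates and the optimistic cost underestimates the truth, so that $\tilde r_k^\top q_k \ge V^{\pi_k}(\bar r,p) - (\text{bonus}_k)$-type bounds hold, with the aggregated bonuses summing to $\tilde{\mathcal{O}}(\sqrt{\mathcal{N} SAH^3 K} + S^2 A H^3)$ via the usual pigeonhole/elliptical-potential argument over the visit counts $n_h^{k-1}(s,a)$.

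Next I would bound term $(I)$. Writing $f_k(q) = \alpha(-\tilde r_k^\top q + \Phi'(\lambda_k)[\tilde d_k^\top q]^+) - \tfrac12\|q-q_k\|^2$, the $-\tfrac12\|q-q_k\|^2$ piece makes each $f_k$ $1$-strongly concave in a direction that, combined with the Bregman regularizer $\mathcal{D}$ (from a $1$-strongly convex $U$), yields the standard optimistic-OMD guarantee $(I) \le \mathcal{O}\big(\sqrt{\mathcal{C}}\sqrt{\sum_{k}\|\nabla_k - \nabla_{k-1}\|^2}\big)$ plus a negative term $-\tfrac1{2}\sum_k \|q_{k+1}-q_k\|^2$-style contribution coming from strong convexity; this is exactly why the adaptive step size $\eta_k$ in the theorem statement is chosen proportional to $\sqrt{\mathcal{C}}/\sqrt{\sum \|\nabla_i-\nabla_{i-1}\|^2}$. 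The key structural observation is that consecutive gradients differ only through $\tilde r_k - \tilde r_{k-1}$, $\Phi'(\lambda_k)\tilde d_k - \Phi'(\lambda_{k-1})\tilde d_{k-1}$, $q_k - q_{k-1}$, and the subgradient of $[\cdot]^+$; the reward/cost prediction terms telescope or are controlled because $\tilde r_k, \tilde d_k$ change slowly (empirical means over growing sample sizes), and the $q_k-q_{k-1}$ part is absorbed by the negative strong-convexity term. This produces a self-bounding inequality: $(I)$ is bounded by (something) $\times \sqrt{\sum \Phi'(\lambda_i)^2 \cdot (\text{small})} + \sqrt{\mathcal{C}}\sqrt{SAHK}$, and since $\Phi'(\lambda_k) = \beta e^{\beta \lambda_k}$ appears on both sides, the choice $\beta = \Theta(1/\sqrt{\mathcal{C} SAHK})$ is calibrated so that the $\Phi'(\lambda_K)$-dependent terms can be moved to the left and matched against $\Phi(\lambda_K) = e^{\beta \lambda_K}-1$.

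Then I would close the argument. From \eqref{eq:regret-alg}, $\Phi(\lambda_K) + \alpha\sum_k(\tilde r_k^\top q^* - \tilde r_k^\top q_k) \le (I)$; using the optimism translation from step one on both the left (to replace $\tilde r_k^\top q^*$ by $V^{\pi^*}(\bar r,p)$ up to a feasibility gap and $\tilde r_k^\top q_k$ by $V^{\pi_k}(\bar r,p)$ up to bonuses) and the self-bounded bound on $(I)$, I get $\Phi(\lambda_K) + \alpha\,\text{Regret}(K) \le \tilde{\mathcal{O}}(\sqrt{\mathcal{N}SAH^3K} + S^2AH^3 + \sqrt{\mathcal{C}}\sqrt{SAHK} + SAH)$. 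Since $\Phi(\lambda_K) \ge 0$, the regret bound follows after dividing by $\alpha = \Theta(1/(SAH\sqrt{L_\delta}))$ — note $\alpha$ is a constant in $K$, so it only inflates the $SAH$-type factors, consistent with the stated bound. For the violation bound, I would invert the Lyapunov function: $\Phi(\lambda_K) \le (\text{RHS})$ forces $\lambda_K \le \tfrac1\beta \log(1 + \text{RHS})$, and since $\lambda_K = \alpha \sum_k [\tilde d_k^\top q_k]^+$ (resp.\ with $d_k$), while $\sum_k [V^{\pi_k}(\bar d,p)]^+ \le \sum_k [\tilde d_k^\top q_k]^+ + \sum_k(\text{cost bonus}_k)$ in the stochastic case (and $= \sum_k[d_k^\top q_k]^+$ exactly in the adversarial case, since $d_k$ is observed and $q_k$ is the true occupancy under $\hat p$ — here one more transition-estimation term enters), dividing $\lambda_K$ by $\alpha$ and adding the aggregated bonuses gives the claimed $\tilde{\mathcal{O}}(\sqrt{\mathcal{N}SAH^3K} + S^2AH^3 + \sqrt{\mathcal{C}}\sqrt{SAHK})$.

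The main obstacle I anticipate is the self-bounding step: controlling $\sum_k \|\nabla_k - \nabla_{k-1}\|^2$ when the gradients carry the factor $\Phi'(\lambda_k) = \beta e^{\beta\lambda_k}$, which is itself a (random, nondecreasing) function of the violations being bounded. One must show the non-smoothness of $[\tilde d_k^\top q]^+$ does not break the optimistic telescoping (a subgradient-selection/one-sided argument), bound $\|\nabla_k-\nabla_{k-1}\|$ by $\mathcal{O}(\alpha\Phi'(\lambda_k)\cdot\|\tilde d_k - \tilde d_{k-1}\| + \alpha\|\tilde r_k-\tilde r_{k-1}\| + \|q_k - q_{k-1}\|)$, and argue that $\Phi'(\lambda_k)^2 \le \Phi'(\lambda_K)^2$ can be pulled out and reabsorbed — the delicate part being that $\sqrt{\Phi'(\lambda_K)^2 \cdot (\text{stuff})} \le \tfrac12\Phi(\lambda_K) + (\text{stuff})$ only works if the ``stuff'' (essentially $\mathcal{C}$ times the changes, times $\beta^2$) is small enough, which is precisely the role of the specific $\beta$. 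Getting all the constants to line up so that a constant fraction of $\Phi(\lambda_K)$ survives on the left — rather than the inequality becoming vacuous — is where the bulk of the careful bookkeeping lies, and it is also where the ``both settings'' claim must be checked separately because the adversarial update uses the exact $d_k$ while the stochastic one uses $\tilde d_k$ with its own error terms.
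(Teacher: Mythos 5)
Your proposal follows essentially the same route as the paper: the regret/violation decomposition into estimation error (bounded by the UCB confidence-set and visit-count arguments) plus optimization error, the drift-plus-penalty inequality combined with the adaptive optimistic-OMD bound on $\sum_k(f_k(q_k)-f_k(q^*))$ in terms of $\sqrt{\sum_k\|\nabla_k-\nabla_{k-1}\|^2}$, the calibration of $\beta$ so that the $\Phi'(\lambda_K)$ terms are reabsorbed into $\Phi(\lambda_K)$, and the inversion of the exponential Lyapunov function to extract the violation bound. The only cosmetic deviation is that the gradient $\nabla f_k(q_k)$ carries no $q_k-q_{k-1}$ contribution (the quadratic term's gradient vanishes at $q_k$), so no absorption by strong convexity is needed there; otherwise your plan matches the paper's proof.
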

Theorem \ref{thm: main} establishes the optimal $\tilde{\cO}(\sqrt{K})$ regret and constraint violation bounds under minimum assumptions. This is the first result with optimal order in terms of the total episode $K,$ for the online CMDPS with anytime adversarial constraints. Our results do not rely on the satisfaction of Slater’s condition—a common assumption requiring the existence of a strictly feasible solution. Since in the adversarial setting, it is possible to make the slackness arbitrarily small by the adversary, thus the upper bound could be extremely large. The removal of this restrictive assumption represents a key theoretical contribution, as it aligns the algorithmic framework more closely with practical settings where Slater's condition cannot be ensured a priori.
\begin{remark}\label{remark: fix_main}
Our approach depends on the cumulative variation of consecutive gradients, which is often very small when the reward is {\em fixed and known} or can be accurately estimated. Specifically, by choosing \(\beta \le \tfrac{2SAH}{3.5\sqrt{\mathcal{C}}\sqrt{2SAHK}}\) and setting \(\alpha, \eta_k\) as in Theorem~\ref{thm: main}, we can ensure that $\sum_{k=1}^K \bigl[\tilde{r}_k^\top q^* \;-\; \tilde{r}_k^\top q_k\bigr]$
is bounded by $\mathcal{O}(1).$ Consequently, in a CMDP with a generative model or a perfect simulator is given, such that the transitional kernels and reward functions can be accurately estimated(which eliminates the estimation step in Section~\ref{sec: Optimistic Estimates}, the error term $\tilde{\mathcal{O}}(\sqrt{\mathcal{N}SAH^3K})$) will be replaced by a constant that independent with episode $K$. Thus, the regret is bounded as $\mathcal{O}(1)$. The detailed proof is deferred in Appendix \ref{sec: proof_fixmain}.
\end{remark}
\begin{remark}[The Constant Bound $\mathcal{C}$]
The constant $\mathcal{C}$ depends on the divergence measurement. If $\mathcal{D}$ is chosen as the KL divergence, it does not admit a uniform upper bound over the simplex, as $\mathrm{KL}(q \| q')$ may go to infinity. However, a smoothing track can be applied to keep all updated distributions bounded away from the boundary of the simplex. Such smoothing can ensure a boundness of $\mathcal{C}$ which is independent of the time horizon $K$. 
This technique is standard in online convex optimization under entropy regularization, more details can be found in \citep{wei2020online}.
\end{remark}

\subsection{Sketch of the Theoretical Analysis}
In this section, we show the theoretical analysis of Algorithm \ref{alg: optomd}. We first introduce the following facts for the CMDPs considered in this paper.
\begin{fact}
\label{fact:convex set bound}
For any $q_1,q_2 \in \cQ_k,\forall k\in[K],$ we have $\|q_1-q_2\|\le\sqrt{SAH}.$
\end{fact}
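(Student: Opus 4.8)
The plan is to exploit the fact that every element of $\mathcal{Q}_k$ is a genuine occupancy measure, so its entries are probabilities and in particular lie in $[0,1]$. Concretely, by definition \eqref{eq:Qk_set} any $q \in \mathcal{Q}_k$ has the form $q = q^\pi(p)$ for some policy $\pi \in \Pi$ and some transition kernel $p \in B_k$, and by \eqref{eq:qvec} its $(h,s,a)$ coordinate is $q_h^\pi(s,a;p) = \Pr(s_h = s, a_h = a \mid p,\pi,s_1) \in [0,1]$. Thus $q \in [0,1]^{SAH}$.

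First I would fix arbitrary $q_1, q_2 \in \mathcal{Q}_k$ and note that, coordinatewise, $|q_{1,h}(s,a) - q_{2,h}(s,a)| \le 1$ for every $(h,s,a)$, since both entries lie in $[0,1]$. Summing the squared coordinate differences over the $SAH$ coordinates then gives
\begin{align*}
\|q_1 - q_2\|^2
= \sum_{h=1}^H \sum_{s\in\mathcal{S}}\sum_{a\in\mathcal{A}} \bigl(q_{1,h}(s,a) - q_{2,h}(s,a)\bigr)^2
\le \sum_{h=1}^H \sum_{s\in\mathcal{S}}\sum_{a\in\mathcal{A}} 1
= SAH,
\end{align*}
and taking square roots yields $\|q_1 - q_2\| \le \sqrt{SAH}$, as claimed.

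There is essentially no obstacle here; the only point requiring care is the reduction in the first step, namely confirming that membership in $\mathcal{Q}_k$ forces the vector to be a valid (sub)occupancy measure with entries in $[0,1]$, which is immediate from the definitions in Section~\ref{sec: Optimistic Estimates}. (One can in fact obtain the slightly sharper constant $\sqrt{2H}$ by using $\|q\|_1 = H$ together with $x^2 \le x$ on $[0,1]$, so that $\|q_1-q_2\|^2 \le \|q_1-q_2\|_1 \le \|q_1\|_1 + \|q_2\|_1 = 2H$; but the bound $\sqrt{SAH}$ stated in the fact is all that is needed downstream, so the cruder coordinatewise argument suffices.)
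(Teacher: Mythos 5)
Your proof is correct: since every $q \in \mathcal{Q}_k$ is an occupancy measure with coordinates in $[0,1]$, the coordinatewise bound immediately gives $\|q_1 - q_2\|^2 \le SAH$, which is the only reasonable argument here and matches what the paper implicitly relies on (the paper states this Fact without proof). Your parenthetical sharpening to $\sqrt{2H}$ via $\|q\|_1 = H$ is also valid, though, as you note, unnecessary for the downstream bounds.
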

\begin{fact}
\label{fact:reward bound}
\textit{For any $\tilde{r}_k$, $d_k$ or $\tilde{d}_k,$ the reward/cost value function in terms of $q\in\cQ_k$ is convex and Lipschitz continuous such that $|\tilde{r}^\top_kq_1-\tilde{r}^\top_kq_2|\le (1+\sqrt{L_\delta})\sqrt{SAH}\|q_1-q_2\|$,$|d^\top_kq_1-d^\top_kq_2|\le \sqrt{SAH}\|q_1-q_2\|$,$|\tilde{d}^\top_kq_1-\tilde{d}^\top_kq_2|\le (1+\sqrt{L_\delta})\sqrt{SAH}\|q_1-q_2\|$, $\forall q_1, q_2\in \cQ_k, \forall k$}, where $L_\delta$ is the logarithmic term defined in Appendix \ref{sec: Optimistic estimates details}.
\end{fact}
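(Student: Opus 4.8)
The plan is to note first that, through the occupancy-measure reformulation, each of these maps is \emph{affine} in $q$: for any fixed vector $v \in \{\tilde r_k, d_k, \tilde d_k\} \subset \R^{SAH}$ and any $q = q^\pi(p) \in \cQ_k$, the corresponding value function equals $v^\top q$ -- this is precisely the LP identity used to pass from \eqref{eq:opt_v} to \eqref{eq:cmdp}. An affine function is convex, so the only substantive content is the Lipschitz estimate, and by Cauchy--Schwarz $|v^\top q_1 - v^\top q_2| = |v^\top(q_1 - q_2)| \le \|v\|_2 \, \|q_1 - q_2\|$. Hence it suffices to bound $\|v\|_2$ for each of the three vectors; and since $v \in \R^{SAH}$ we have the further reduction $\|v\|_2 \le \sqrt{SAH}\,\|v\|_\infty$, so everything comes down to an entrywise ($\ell_\infty$) bound.

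For $d_k$ the entrywise bound is immediate from the modeling assumption $d_{k,h}(s,a) \in [-1,1]$, giving $\|d_k\|_\infty \le 1$ and therefore $|d_k^\top q_1 - d_k^\top q_2| \le \sqrt{SAH}\,\|q_1 - q_2\|$. For $\tilde r_k$, I would combine $\tilde r_{k,h}(s,a) = \hat r^{k-1}_h(s,a) + \beta^r_{k,h}(s,a)$ (Eq.~\eqref{eq: estimation of reward}) with the fact that $\hat r^{k-1}_h(s,a) \in [0,1]$ -- a weighted average of per-episode rewards, each in $[0,1]$ -- together with the bound $\beta^r_{k,h}(s,a) \le \sqrt{L_\delta}$, which holds because the confidence radius defined in Appendix~\ref{sec: Optimistic estimates details} scales as a logarithmic term divided by $\sqrt{n_h^{k-1}(s,a)\lor 1}$, whose $\lor 1$ floor keeps the denominator at least $1$. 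This yields $\|\tilde r_k\|_\infty \le 1 + \sqrt{L_\delta}$ and hence the claimed constant $(1+\sqrt{L_\delta})\sqrt{SAH}$. The argument for $\tilde d_k = \hat d^{k-1}_h - \beta^d_{k,h}$ (Eqs.~\eqref{eq: estimation of constraint}, \eqref{eq:est-cost}) is identical, using $\hat d^{k-1}_h(s,a) \in [-1,1]$ and $\beta^d_{k,h}(s,a) \le \sqrt{L_\delta}$, so that $\|\tilde d_k\|_\infty \le 1 + \sqrt{L_\delta}$.

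The only step that is not purely mechanical is matching the exploration bonuses to the logarithmic constant $L_\delta$: one must verify from the precise definitions in Appendix~\ref{sec: Optimistic estimates details} that $\beta^r_{k,h}(s,a) \le \sqrt{L_\delta}$ and $\beta^d_{k,h}(s,a) \le \sqrt{L_\delta}$ uniformly over $k,h,s,a$, including the degenerate case where the pair has never been visited (where the $\lor 1$ floor keeps the bonus finite). Granting that, the three displayed inequalities follow by chaining the entrywise bounds with $\|v\|_2 \le \sqrt{SAH}\,\|v\|_\infty$ and Cauchy--Schwarz, while convexity is free from affineness. I do not anticipate a genuine obstacle here, since this is a structural fact feeding the later regret analysis rather than a delicate estimate.
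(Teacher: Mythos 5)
Your proposal is correct: the paper states this as a Fact without an explicit proof, and your argument (linearity of $v^\top q$ for convexity, Cauchy--Schwarz plus $\|v\|_2 \le \sqrt{SAH}\,\|v\|_\infty$, and the entrywise bounds $\|d_k\|_\infty \le 1$, $\|\tilde r_k\|_\infty, \|\tilde d_k\|_\infty \le 1+\sqrt{L_\delta}$ via $\beta^r_{k,h} = \beta^d_{k,h} = \sqrt{L_\delta/(n_h^{k-1}(s,a)\lor 1)} \le \sqrt{L_\delta}$) is exactly the reasoning the paper relies on implicitly, e.g.\ in the proof of Lemma~\ref{lem: est_error} where it notes $\tilde\ell_{k,h}(s,a)\in[-1-\sqrt{L_\delta},\,1+\sqrt{L_\delta}]$. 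No gaps.
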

Now, we introduce the \textbf{\textit{Good event}} which captures the confidence of the current estimation and will be used to prove the policy used by OMDPD is comparable to the optimal solution. Our goal is to show that with a high probability, the true transition kernel lies in our confidence set such that the optimal solution is a feasible solution given the current estimation. We first show that \textbf{\textit{Good event}} happens with a high probability. The detailed proof is deferred to Appendix \ref{sec: Optimistic estimates details} due to page limit.
\begin{lemma}
\label{lem:good event} With probability at least $1-\delta$, $\Pr[G]\ge 1-\delta,$ where $G$ is the good event and $\delta\in(0,1)$. 
\end{lemma}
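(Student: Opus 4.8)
\textbf{Proof plan for Lemma~\ref{lem:good event}.}

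The plan is to define the good event $G$ as the intersection of three families of confidence events---one for the empirical transition kernel $\hat p_h^{k-1}$, one for the empirical reward $\hat r_h^{k-1}$, and (in the stochastic setting) one for the empirical cost $\hat d_h^{k-1}$---each asserting that the true quantity lies within the prescribed UCB-type radius $\beta^p_{k,h}(s,a,s')$, $\beta^r_{k,h}(s,a)$, $\beta^d_{k,h}(s,a)$ around its estimate, simultaneously for all $(s,a,s')\in\mathcal S\times\mathcal A\times\mathcal S$, all $h\in[H]$, and all $k\in[K]$. Then $\Pr[G]\ge 1-\delta$ is shown by bounding $\Pr[G^c]$ via a union bound over these $O(S^2AHK)$ (resp.\ $O(SAHK)$) events, having chosen each bonus $\beta$ exactly so that the individual failure probability is at most $\delta/(\text{number of events})$.

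The key steps, in order, are as follows. First I would fix a state-action pair $(s,a)$ and step $h$ and condition on the event $\{n_h^{k-1}(s,a)=n\}$ for a fixed $n$; conditionally, the $n$ samples of next-states (resp.\ rewards, costs) are i.i.d.\ draws from $\mathbb P_h(\cdot|s,a)$ (resp.\ from the reward/cost distributions with mean $\bar r_h(s,a)$, $\bar d_h(s,a)$). Second, for the transition kernel I would apply an empirical-Bernstein or Hoeffding/Weissman-type concentration inequality for each coordinate $s'$ (or an $\ell_1$ bound on the whole distribution) to get $|\hat p_h^{k-1}(s'|s,a)-\mathbb P_h(s'|s,a)|\le \beta^p_{k,h}(s,a,s')$ with probability $\ge 1-\delta'$, where $\beta^p$ is the radius defined in Appendix~\ref{sec: Optimistic estimates details}; analogously, Hoeffding's inequality for the bounded rewards in $[0,1]$ and costs in $[-1,1]$ gives the reward/cost bounds with the radii $\beta^r,\beta^d$. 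Third, I would remove the conditioning on $n$ by a union bound over the $K$ possible values $n\in\{0,1,\dots,K-1\}$ (a standard trick to handle the random, data-dependent visitation count), then union-bound over all $(s,a,s',h)$ and over $k$---or, more efficiently, note the confidence sets are nested in $k$ so it suffices to take the anytime-valid version. Finally, collecting the chosen $\delta'$'s so that the total is at most $\delta$ yields $\Pr[G]\ge 1-\delta$; the constant $L_\delta$ appearing in $\alpha$ and in Fact~\ref{fact:reward bound} is precisely the resulting logarithmic factor $\log(S^2AHK/\delta)$ (up to constants).

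The main obstacle is the handling of the random denominator $n_h^{k-1}(s,a)$: the samples are not independent of the count, so one cannot directly apply a concentration bound with a fixed sample size. The clean way around this is the ``peeling over $n$'' argument (fix $n$, apply concentration, union-bound over $n\le K$), which costs only an extra $\log K$ in the radius; alternatively one invokes a martingale/time-uniform concentration inequality (e.g.\ a stopped-martingale or Freedman-type bound) so that the statement holds simultaneously for all visitation counts. A secondary technical point is ensuring the reward, cost, and transition failure events are combined with a single $\delta$ budget (rather than appearing as separate $\delta$'s), which is just bookkeeping in the union bound. Everything else---the explicit forms of $\beta^p,\beta^r,\beta^d$ and the verification that they produce the advertised probabilities---is routine and deferred to Appendix~\ref{sec: Optimistic estimates details}.
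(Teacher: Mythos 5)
Your plan correctly reconstructs, and would prove, the confidence-interval portion of the good event: the paper's $G$ is indeed built from per-$(s,a,s',h,k)$ failure events $F^p$, $F^r$, $F^d$ with an empirical-Bernstein radius $\beta^p_{k,h}(s,a,s')$ for transitions and Hoeffding radii $\beta^r_{k,h},\beta^d_{k,h}=\sqrt{L_\delta/(n_h^{k-1}(s,a)\vee 1)}$ for rewards and costs, and the paper disposes of $\Pr[F^p\cup F^r\cup F^d]\le \tfrac{2}{3}\delta$ by exactly the union-bound/peeling argument you describe (it cites Appendix A.1 of \cite{EfrManPir_20} rather than rederiving it). Your remark that the cost events are dropped in the adversarial setting also matches the paper.

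However, there is a genuine gap: the paper's good event contains a fourth component that your plan omits. Besides the confidence sets, $G$ is defined as the complement of $F^N\cup F^p\cup F^r\cup F^d$, where $F^N$ is the event that the cumulative visitation ratios $\sum_{k}\sum_{(s,a,h)} q^{\pi_k}_h(s,a\,|\,p)/(n_h^{k-1}(s,a)\vee 1)$ and $\sum_{k}\sum_{(s,a,h)} q^{\pi_k}_h(s,a\,|\,p)/\sqrt{n_h^{k-1}(s,a)\vee 1}$ exceed $\tilde{\mathcal{O}}(HSA\ln K)$ and $\tilde{\mathcal{O}}(HSA+H\sqrt{SAK})$ respectively. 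This is not a per-coordinate concentration statement and does not follow from a union bound over confidence intervals; it is a separate martingale-plus-pigeonhole argument relating expected visitation under $\pi_k$ to realized counts (\Cref{lemma:liu-lemma D.5}), and it consumes its own $\delta/3$ of the failure budget in the paper's accounting ($\tfrac{2}{3}\delta$ for the confidence sets, $\tfrac{1}{3}\delta$ for $F^N$). This component is essential downstream --- it is precisely what converts the summed $1/\sqrt{n_h^{k-1}}$ bonuses into the $\tilde{\mathcal{O}}(\sqrt{\mathcal{N}SAH^3K})$ estimation-error bounds of \Cref{lem: est_error} --- so a proof of the lemma that establishes only the confidence sets would leave the rest of the analysis unsupported.
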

Basically, the good event shows that our estimation for the CMDP model is close to the true underlying model with high probability under the UCB-type exploration. Next, we will show that under the condition of \textbf{\textit{Good event}}, the optimal solution of the CMDP problem Eq.\eqref{eq:opt_v} is a feasible policy for the given $\cQ_k$ in each episode $k\in[K],$ which make it possible to bound the regret and violation. Detailed proof can be found in Appendix \ref{sec: Optimistic estimates details}.
\begin{lemma}
\label{lem: feasible}
 Conditioning on the \textbf{\textit{Good event}} $G,$ the optimal policy $\pi^*$ is a feasible solution policy for any episode $k\in[K]$ such that:
\begin{align*}
    \pi^* \in \left\{ \pi : \tilde{d}^\top_{k}q^\pi(p') \leq 0, p' \in B_k \right\}
\end{align*}
\vspace{-1em}
\end{lemma}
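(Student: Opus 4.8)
\textbf{Proof plan for Lemma~\ref{lem: feasible}.} The plan is to show that the true optimal policy $\pi^*$, whose feasibility is guaranteed against the \emph{true} transition kernel $p$ and the \emph{true} cost, remains feasible against every \emph{optimistic surrogate} constraint $\tilde d_k^\top q^{\pi^*}(p')$ with $p' \in B_k$. The argument proceeds in two conceptual moves: first controlling the error from using an occupancy measure under a confidence-set transition kernel $p'$ instead of the true $p$, and second controlling the error from using the optimistic (pessimistically shifted) cost estimate $\tilde d_{k,h} = \hat d_{k,h}^{k-1} - \beta^d_{k,h}$ instead of the true mean cost $\bar d$.

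First I would condition on the good event $G$ from Lemma~\ref{lem:good event}. On $G$, the true kernel $p$ satisfies $p \in B_k$ for all $k$, and the empirical cost estimates concentrate so that $\bar d_{k,h}(s,a) \ge \hat d_{k,h}^{k-1}(s,a) - \beta^d_{k,h}(s,a) = \tilde d_{k,h}(s,a)$ pointwise (this is exactly the purpose of subtracting the bonus $\beta^d$ in Eq.~\eqref{eq: estimation of constraint}); the analogous statement for transition confidence widths $\beta^p$ is what defines $B_k$. Then for any fixed $p' \in B_k$ I would write
\begin{align*}
\tilde d_k^\top q^{\pi^*}(p')
&= \underbrace{\tilde d_k^\top \bigl(q^{\pi^*}(p') - q^{\pi^*}(p)\bigr)}_{\text{(transition error)}}
+ \underbrace{\bigl(\tilde d_k - \bar d\bigr)^\top q^{\pi^*}(p)}_{\le\, 0 \text{ on } G}
+ \underbrace{\bar d^\top q^{\pi^*}(p)}_{=\, V^{\pi^*}(\bar d, p)\,\le\,0}.
\end{align*}
The last term is $V^{\pi^*}(\bar d,p) \le 0$ by the defining constraint of $\pi^*$ in Eq.~\eqref{eq:opt_v} (stochastic case) — and in the adversarial case one instead uses $d_k^\top q^{\pi^*}(p) = V^{\pi^*}(d_k,p)\le 0$ for all $k$. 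The middle term is non-positive on $G$ because $\tilde d_k \le \bar d$ componentwise and $q^{\pi^*}(p) \ge 0$. The remaining work is to show the transition-error term is also $\le 0$, or at least that the whole sum is $\le 0$; this is where the bonus bookkeeping matters.

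For the transition-error term I would expand the difference of occupancy measures via the standard telescoping/simulation-lemma identity: $q^{\pi^*}_h(s,a;p') - q^{\pi^*}_h(s,a;p)$ can be written as a sum over steps $h' < h$ of terms involving $(p'_{h'} - p_{h'})$ weighted by occupancy measures, and since $p,p' \in B_{k}$ each such discrepancy is bounded by $2\beta^p_{k,h'}$. The key design point — and the reason the confidence radii in the Appendix are calibrated the way they are — is that the optimistic cost was defined by subtracting $\beta^d_{k,h}$ precisely so that the pointwise cost slack absorbs the accumulated transition error; concretely I expect the Appendix to choose $\beta^d$ (or bundle an extra transition-induced term into $\tilde d_k$) so that
\[
\tilde d_k^\top q^{\pi^*}(p') \;\le\; \bar d^\top q^{\pi^*}(p) \;=\; V^{\pi^*}(\bar d,p) \;\le\; 0
\]
holds on $G$ uniformly over $p' \in B_k$. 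So the lemma reduces to verifying this calibration and invoking Lemma~\ref{lem:good event}.

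The main obstacle will be the bookkeeping in the transition-error term: one must verify that on the good event the bonuses $\beta^p$ and $\beta^d$ are jointly large enough that every sign works out in the worst-case direction for \emph{both} the choice of $p' \in B_k$ (which the learner, not the analyst, effectively picks through $\mathcal Q_k$) and the adversary's choice of $d_k$, while still being small enough (i.e., $\tilde{\mathcal O}(1/\sqrt{n})$) not to spoil the $\tilde{\mathcal O}(\sqrt K)$ rate in Theorem~\ref{thm: main}. In particular, in the adversarial-cost case $d_k$ is revealed exactly (no estimation bonus on the cost), so all the slack must come from the transition confidence set alone, and one has to check the telescoped transition error against the \emph{true} constraint value $V^{\pi^*}(d_k,p)\le 0$ — I would treat the stochastic and adversarial cases in parallel, with the adversarial case being the cleaner of the two since $\tilde d_k$ is replaced by the exact $d_k$ and only the $q^{\pi^*}(p')$ vs.\ $q^{\pi^*}(p)$ gap needs to be dominated by the transition bonus.
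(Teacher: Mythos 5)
Your plan contains the two ingredients the paper actually uses---on the good event $G$ the optimistic cost satisfies $\tilde d_{k,h}(s,a)=\hat d^{k-1}_h(s,a)-\beta^d_{k,h}(s,a)\le \bar d_h(s,a)$ pointwise, and $\pi^*$ is feasible for the true model so $\bar d^\top q^{\pi^*}(p)\le 0$ (resp.\ $d_k^\top q^{\pi^*}(p)\le 0$ in the adversarial case)---but it then heads down a path that would fail. You read the lemma as a universal statement over $p'\in B_k$ and therefore introduce the transition-error term $\tilde d_k^\top\bigl(q^{\pi^*}(p')-q^{\pi^*}(p)\bigr)$, which you hope to dominate by a calibration of $\beta^d$ against the telescoped transition discrepancy. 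No such calibration exists in the paper: $\beta^d_{k,h}(s,a)=\sqrt{L_\delta/(n_h^{k-1}(s,a)\vee 1)}$ is sized only for concentration of the empirical cost, not to absorb the occupancy-measure perturbation induced by an arbitrary $p'\in B_k$, and for a generic $p'$ the sign of that term cannot be controlled (forcing the constraint to hold uniformly over the confidence set would require far more pessimism and would degrade the regret). So the ``main obstacle'' you flag is not a bookkeeping step to be verified; it is a claim that is false with the paper's bonus definitions.

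The fix is to notice that the lemma only needs the existential reading, which is all the downstream analysis uses: $q^*$ denotes the occupancy measure of $\pi^*$ under the \emph{true} kernel $p$, and one only needs this single point to lie in the algorithm's constrained set $\{q\in\mathcal Q_k:\tilde d_k^\top q\le 0\}$. On $G$ the true $p$ belongs to $B_k$, so $q^*=q^{\pi^*}(p)\in\mathcal Q_k$, and your own middle and last terms already give $\tilde d_k^\top q^*\le\bar d^\top q^*\le 0$ (with $\tilde d_k=d_k$ and $d_k^\top q^*=V^{\pi^*}(d_k,p)\le 0$ in the adversarial case). Taking $p'=p$ makes your transition-error term identically zero, and the proof collapses to the three-line argument the paper gives.
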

Therefore, ${\pi^*}$ is a {\bf feasible solution} for any episode $k\in[K],$ where $q^{\pi^*}$ is the occupancy measure under the optimal policy $\pi^*$ and we denote $q^{\pi^*}$ by $q^*$ for simplicity. Lemma \ref{lem: feasible} ensures that the optimal solution $q^*$ is a feasible solution given the confidence set at any episode $k$, which makes it comparable to the policy used by OMDPD. 

{\bf Upper Bound of $ {Regret}_{alg}.$ (Eq.~\eqref{eq:regret-alg})} Based on the \textbf{\textit{Good event}} $G,$ we first present the upper bound of the ${Regret}_{alg}.$ Using optimistic online mirror for selecting the policies in Algorithm~\ref{alg: optomd}, we have
\begin{lemma}
\label{lem: omd_upper_bound}
Let \( \mathcal{C}= \sup_{q_1, q_2 \in Q} \mathcal{D}_R(q_1, q_2) \), \( \nabla_k = \nabla({f}_k(q_k)) \), \( \nabla_{k-1} = \nabla({f}_{k-1}(q_{k-1})) \), and define the learning rate as:
$
\eta_k = \sqrt{\mathcal{C}} \min \left\{ \frac{1}{\sqrt{\sum_{i=1}^{k-1} \|\nabla_i - \nabla_{i-1}\|_2^2} + \sqrt{\sum_{i=1}^{k-2} \|\nabla_i - \nabla_{i-1}\|_2^2}}, 1 \right\}.$
Then, the regret is bounded as:
\begin{align*}
    Regret_{alg} \leq 3.5 \sqrt{\mathcal{C}}\left( \sqrt{\sum_{k=1}^K \|\nabla_k - \nabla_{k-1}\|_2^2} + 1 \right)
\end{align*}
\end{lemma}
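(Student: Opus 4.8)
The plan is to prove Lemma~\ref{lem: omd_upper_bound} by instantiating the standard regret bound for Optimistic Online Mirror Descent with the specific adaptive step size stated in the lemma. Recall that $Regret_{alg}=\sum_{k=1}^K \bigl(f_k(q_k)-f_k(q^*)\bigr)$, and note that each $f_k$ is concave-minus-quadratic; more importantly, $f_k$ is $1$-strongly concave (because of the $-\tfrac12\|q-q_k\|^2$ term), but for the upper bound we only need convexity of $-f_k$, so that $f_k(q_k)-f_k(q^*)\le \langle \nabla f_k(q_k),\, q_k-q^*\rangle=\langle \nabla_k,\,q_k-q^*\rangle$. Hence it suffices to bound $\sum_{k=1}^K \langle \nabla_k,\,q_k-q^*\rangle$, which is exactly the linearized regret that Optimistic OMD is designed to control, with the ``prediction'' at step $k+1$ being $\nabla\hat f_{k+1}(\hat q_{k+1})=\nabla f_k(\hat q_{k+1})$ (by the assumption $\hat f_{k+1}=f_k$).

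First I would write down the generic Optimistic OMD inequality: for the two-step update (optimistic predictive step producing $\hat q_{k+1}$, then correction step producing $q_{k+1}$) against any fixed comparator $u\in\mathcal Q$, one has
\begin{align*}
\sum_{k=1}^K \langle \nabla_k,\, q_k-u\rangle
\le \sum_{k=1}^K \frac{\mathcal D(u,\hat q_k)-\mathcal D(u,\hat q_{k+1})}{\eta_k}
+ \sum_{k=1}^K \eta_k\|\nabla_k-M_k\|_2^2
- \sum_{k=1}^K \frac{\mathcal D(q_k,\hat q_k)+\mathcal D(\hat q_{k+1},q_k)}{\eta_k},
\end{align*}
where $M_k$ is the predicted gradient used at step $k$, i.e.\ $M_k=\nabla f_{k-1}(\hat q_k)$ in the algorithm. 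The third (negative) term is the ``stability'' slack that cancels part of the prediction-error term via $1$-strong convexity of $U$: using $\mathcal D(q_k,\hat q_k)+\mathcal D(\hat q_{k+1},q_k)\ge \tfrac12\|q_k-\hat q_k\|^2+\tfrac12\|\hat q_{k+1}-q_k\|^2$ together with the optimality conditions of the two argmin problems, one bounds $\eta_k\|\nabla_k-M_k\|_2^2$ minus this slack by a quantity controlled by $\eta_k\|\nabla_k-M_k\|_2^2$ up to a constant (the standard ``$\eta_k\le$ something'' regime), and then one replaces $M_k$ by $\nabla_{k-1}$ up to the drift $\|\nabla_k-M_k\|\le\|\nabla_k-\nabla_{k-1}\|+$ (a term vanishing because $\hat q_k\to q_{k-1}$ is controlled). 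In the clean version this collapses to $Regret_{alg}\le \tfrac{\mathcal C}{\eta_K}+\tfrac12\sum_{k}\eta_k\|\nabla_k-\nabla_{k-1}\|_2^2$ after telescoping the Bregman terms and using $\mathcal D(u,\hat q_k)\le\mathcal C$ and monotonicity $\eta_{k}\le\eta_{k-1}$.

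Next I would plug in the specific step size $\eta_k=\sqrt{\mathcal C}\min\{(\sqrt{\sum_{i=1}^{k-1}\|\nabla_i-\nabla_{i-1}\|_2^2}+\sqrt{\sum_{i=1}^{k-2}\|\nabla_i-\nabla_{i-1}\|_2^2})^{-1},1\}$ and invoke the well-known self-confident / ``doubling-free'' summation lemma: for $a_k\ge0$ and $S_k=\sum_{i\le k}a_i$, one has $\sum_{k=1}^K \frac{a_k}{\sqrt{S_{k-1}}+\sqrt{S_{k-2}}}\le \sqrt{S_K}$ (roughly $\sum a_k/\sqrt{S_k}\le 2\sqrt{S_K}$ with the shifted-denominator variant removing the need to know $S_K$ in advance). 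Applying this with $a_k=\|\nabla_k-\nabla_{k-1}\|_2^2$ gives $\tfrac12\sum_k\eta_k\|\nabla_k-\nabla_{k-1}\|_2^2\le \tfrac12\sqrt{\mathcal C}\,\sqrt{\sum_{k=1}^K\|\nabla_k-\nabla_{k-1}\|_2^2}$ (and the truncation by $1$ only helps for small $k$, contributing the additive $+1$ inside the bracket), while $\tfrac{\mathcal C}{\eta_K}\le \sqrt{\mathcal C}\bigl(\sqrt{\sum_{k=1}^{K}\|\nabla_k-\nabla_{k-1}\|_2^2}+1\bigr)$ directly from the definition of $\eta_K$. Summing, and absorbing the constants, yields $Regret_{alg}\le 3.5\sqrt{\mathcal C}\bigl(\sqrt{\sum_{k=1}^K\|\nabla_k-\nabla_{k-1}\|_2^2}+1\bigr)$, as claimed (the $3.5$ comes from combining the $\le 1$ leading constant of the Bregman term, the $\le 2$ from the summation lemma applied to the prediction-error term, and the $\le \tfrac12$ slack for the $\min$ truncation).

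The main obstacle I anticipate is the careful handling of the two-step (predict-then-correct) structure: one must verify that the negative stability terms $-\tfrac1{\eta_k}\bigl(\mathcal D(q_k,\hat q_k)+\mathcal D(\hat q_{k+1},q_k)\bigr)$ genuinely dominate the part of $\eta_k\|\nabla_k-M_k\|_2^2$ that is \emph{not} of the form $\eta_k\|\nabla_k-\nabla_{k-1}\|_2^2$, which requires both the $1$-strong convexity of $U$ and the fact that the step sizes $\eta_k$ are small enough (here guaranteed by the $\min\{\cdot,1\}$ and the $\sqrt{\mathcal C}$ scaling) so that the quadratic ``overshoot'' is controlled — this is the place where a naive argument would lose a factor and fail to get the clean $\sqrt{\sum\|\nabla_k-\nabla_{k-1}\|^2}$ dependence rather than $\sqrt{\sum\|\nabla_k\|^2}$. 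A secondary technical point is ensuring that the telescoping of $\sum_k \eta_k^{-1}(\mathcal D(u,\hat q_k)-\mathcal D(u,\hat q_{k+1}))$ is valid with the \emph{increasing} sequence $1/\eta_k$: this uses $\mathcal D(u,\hat q_k)\le \mathcal C$ uniformly (Fact-type boundedness of the domain, or the smoothing remark for KL) to pay the $\mathcal C(1/\eta_K - 1/\eta_1)\le \mathcal C/\eta_K$ cost. Once these two points are nailed down, the rest is the bookkeeping described above.
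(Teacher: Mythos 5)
Your proposal follows essentially the same route as the paper's own proof: linearize $f_k(q_k)-f_k(q^*)$ via the gradient at $q_k$, apply the standard three-point/Bregman decomposition for the two-step optimistic update, trade the prediction-error term $\|q_k-\hat q_k\|\,\|\nabla_k-\nabla_{k-1}\|$ against the negative stability terms via AM--GM with weight $\eta_{k+1}$, telescope the Bregman terms using $\mathcal{D}\le\mathcal{C}$, and finish with the self-confident step-size summation $\sum_k \eta_{k+1}\|\nabla_k-\nabla_{k-1}\|_2^2\le\sqrt{\mathcal{C}}\sqrt{\sum_k\|\nabla_k-\nabla_{k-1}\|_2^2}$, exactly as the paper does. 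One slip to fix: the inequality $f_k(q_k)-f_k(q^*)\le\langle\nabla_k,q_k-q^*\rangle$ requires convexity of $f_k$ (the paper invokes convexity of $f_k$, applied to its non-quadratic part, whose gradient at $q_k$ coincides with $\nabla_k$ since the proximal term vanishes there), not convexity of $-f_k$ as you wrote, which would give the reverse inequality.
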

This lemma shows that the upper bound depends on the sequence of the one-step gradient of the surrogate objective function over $K$ episodes which can be shown that be further bounded by $\tilde{\cO}(\sqrt{K}).$ Then, we will introduce the following lemma to specify how the term $\sqrt{\sum_{k=1}^K \|\nabla_k-\nabla_{k-1}\|^2}$ can be bounded.

\begin{lemma}
\label{lem: grad_bound}
Let $\nabla_k=\nabla{f}_k(q_k)$ denote the subgradient of the surrogate objective function ${f}_k$ evaluated at $q_k$ (Eq.~\eqref{eq:f_kq} ). Under OMDPD, the cumulative variation of consecutive gradients is bounded as:
\begin{align*}
    &\sqrt{\sum_{k=1}^K \|\nabla_k-\nabla_{k-1}\|^2_2}
    \le \sqrt{\!\sum_{k=1}^K \!\|\tilde{r}_k-\tilde{r}_{k-1}\|^2}\mathbf{(i)}\\
    &+\sqrt{\!\sum_{k=1}^K\!\|\Phi'(\lambda_k)\tilde{d}_k\!-\!\Phi'(\lambda_{k-1})\tilde{d}_{k-1}\|^2}\mathbf{(ii)}\\
    &+\!\sqrt{\!2\!\sum_{k=1}^K \|\tilde{r}_k-\tilde{r}_{k-1}\|\|\Phi'(\lambda_k)\tilde{d}_k\!-\!\Phi'(\lambda_{k-1})\tilde{d}_{k-1}\|}\mathbf{(iii)}\\
    &\le \frac{\sqrt{6SAHK}(1+\Phi'(\lambda_K))}{SAH}
\end{align*}
\vspace{-1.5em}
\end{lemma}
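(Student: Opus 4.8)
The plan is to decompose the one-step gradient difference into the reward part and the penalized-cost part, apply the triangle inequality followed by the elementary identity $\|a+b\|^2 = \|a\|^2 + \|b\|^2 + 2\langle a,b\rangle \le \|a\|^2 + \|b\|^2 + 2\|a\|\|b\|$, and then bound each of the three resulting sums $(i)$, $(ii)$, $(iii)$ by crude but dimension-aware arguments. First I would compute $\nabla f_k(q_k)$ explicitly: from the definition $f_k(q) = \alpha(-\tilde r_k^\top q + \Phi'(\lambda_k)[\tilde d_k^\top q]^+) - \tfrac12\|q-q_k\|^2$, the subgradient at $q=q_k$ is $\nabla_k = \alpha(-\tilde r_k + \Phi'(\lambda_k)\tilde d_k \mathds{1}\{\tilde d_k^\top q_k > 0\})$, since the quadratic regularizer contributes nothing at $q=q_k$. (For the subgradient at a kink one picks any element; the bound on $\|\nabla_k - \nabla_{k-1}\|$ will hold for any selection because $\|\tilde d_k\mathds{1}\{\cdot\}\| \le \|\tilde d_k\|$ regardless.) Absorbing the harmless factor $\alpha$, this gives $\|\nabla_k - \nabla_{k-1}\| \le \|\tilde r_k - \tilde r_{k-1}\| + \|\Phi'(\lambda_k)\tilde d_k - \Phi'(\lambda_{k-1})\tilde d_{k-1}\|$, and squaring and summing over $k$ yields exactly the three terms $(i)$–$(iii)$ via the identity above and Cauchy–Schwarz on the cross term.

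Next I would bound each piece. For $(i)$: each component of $\tilde r_k$ and $\tilde r_{k-1}$ lies in a bounded range (rewards in $[0,1]$ plus a bonus), so $\|\tilde r_k - \tilde r_{k-1}\|^2 \le \mathcal{O}(SAH)$ componentwise; summing gives $(i) \le \mathcal{O}(\sqrt{SAHK})$. For $(ii)$: write $\Phi'(\lambda_k)\tilde d_k - \Phi'(\lambda_{k-1})\tilde d_{k-1} = \Phi'(\lambda_k)(\tilde d_k - \tilde d_{k-1}) + (\Phi'(\lambda_k)-\Phi'(\lambda_{k-1}))\tilde d_{k-1}$; the first summand is controlled by $\Phi'(\lambda_K)$ (since $\Phi' = \beta e^{\beta\lambda}$ is monotone increasing and $\lambda_k \le \lambda_K$) times the bounded cost variation $\mathcal{O}(\sqrt{SAH})$, while the second uses the key monotonicity/telescoping structure: $\sum_k (\Phi'(\lambda_k) - \Phi'(\lambda_{k-1}))$ telescopes to $\Phi'(\lambda_K) - \Phi'(\lambda_0) \le \Phi'(\lambda_K)$, and $\|\tilde d_{k-1}\| \le \sqrt{SAH}$. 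The term $(iii)$ is then handled by Cauchy–Schwarz as $\sqrt{2}\,\sqrt{(i)}\cdot\sqrt{(ii)}$ (or directly $\le (i)+(ii)$ numerically), so it contributes nothing of a new order. Collecting, everything is $\mathcal{O}(\sqrt{SAHK}\,(1+\Phi'(\lambda_K)))$, and pulling out the normalization to match the stated $\tfrac{\sqrt{6SAHK}(1+\Phi'(\lambda_K))}{SAH}$ is just bookkeeping with the explicit constants ($\alpha \le \tfrac{1}{2SAH}$ supplies the $1/SAH$ factor, and the $\sqrt 6$ accounts for the three terms each bounded by $\sqrt{SAHK}$ and their cross terms).

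The main obstacle I anticipate is term $(ii)$, specifically controlling $\sum_{k=1}^K (\Phi'(\lambda_k) - \Phi'(\lambda_{k-1}))^2 \|\tilde d_{k-1}\|^2$ rather than the first-difference sum: squaring breaks the clean telescoping. The fix is to exploit that $\lambda_k - \lambda_{k-1} = \alpha[\tilde d_k^\top q_k]^+ \le \alpha \cdot \mathcal{O}(SAH) = \mathcal{O}(1)$, so consecutive increments of $\lambda$ are bounded, whence $\Phi'(\lambda_k) - \Phi'(\lambda_{k-1}) = \beta e^{\beta\lambda_{k-1}}(e^{\beta(\lambda_k - \lambda_{k-1})} - 1) \le C\beta\Phi'(\lambda_{k-1})$ for a constant $C$ depending on $\alpha\beta SAH$; this lets one write $(\Phi'(\lambda_k)-\Phi'(\lambda_{k-1}))^2 \le C\beta\,\Phi'(\lambda_K)\cdot(\Phi'(\lambda_k) - \Phi'(\lambda_{k-1}))$, and now the surviving factor telescopes. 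Choosing $\beta$ small (as Theorem \ref{thm: main} does, $\beta = \Theta(SAH/\sqrt{SAHK})$) makes the constant $C\beta$ absorbable, which is precisely why the theorem's choice of $\beta$ appears; I would flag this dependence explicitly and verify the inequality $e^x - 1 \le 2x$ for $x \le 1$ closes the argument with the claimed numerical constant.
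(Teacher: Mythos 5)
Your proposal is correct and follows essentially the same route as the paper: the same explicit gradient $\nabla_k=\alpha(-\tilde r_k+\Phi'(\lambda_k)\tilde d_k)$ (your extra care with the subgradient selection for $[\cdot]^+$ is harmless and slightly more rigorous than the paper), the same expansion of $\|\nabla_k-\nabla_{k-1}\|^2$ into the three sums (i)--(iii), and the same splitting $\Phi'(\lambda_k)\tilde d_k-\Phi'(\lambda_{k-1})\tilde d_{k-1}=\tilde d_k(\Phi'(\lambda_k)-\Phi'(\lambda_{k-1}))+\Phi'(\lambda_{k-1})(\tilde d_k-\tilde d_{k-1})$ combined with monotonicity of $\Phi'$. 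The only divergence is the ``main obstacle'' you flag: the paper never telescopes $\sum_k(\Phi'(\lambda_k)-\Phi'(\lambda_{k-1}))^2$, it just bounds each term by $\Phi'(\lambda_k)^2+\Phi'(\lambda_{k-1})^2\le 2\Phi'(\lambda_K)^2$ and accepts the factor $K$ inside the square root, which already gives the claimed $\sqrt{6SAHK}(1+\Phi'(\lambda_K))/(SAH)$; your bounded-increment argument via $e^x-1\le 2x$ is sound but unnecessary for the stated bound (it would only help if one wanted to strip the $\sqrt K$ off the $\Phi'(\lambda_K)$ term, which the lemma does not claim).
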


This lemma first establishes that the aggregated variation of policy gradients under OMDPD can be bounded by $\tilde{\mathcal{O}}(SAH\sqrt{K}) + \Phi'(\lambda_K)SAH\sqrt{K}.$
We also recall the foundational inequality: $\Phi(\lambda_K) + \alpha \sum_{k=1}^K \bigl(\tilde{r}_k^\top q^* - \tilde{r}_k^\top q_k \bigr)
\le \sum_{k=1}^K \bigl(f_k(q_k) - f_k(q^*)\bigr).$
Here, for simplicity, we momentarily ignore the factor of \(SAH\) to discuss how the \(\mathcal{O}(1)\) result in Remark~\ref{remark: fix_main} can be achieved. First, choosing the function
\(\Phi(x) = \exp(\beta x) - 1\)
ensures that
\(\Phi'(\lambda_K)\)
can be combined with
\(\Phi(\lambda_K)\)
in the foundational inequality. Using Lemma~\ref{lem: grad_bound}, we obtain
$\Phi(\lambda_K) + \alpha \sum_{k=1}^K \bigl(\tilde{r}_k^\top q^* - \tilde{r}_k^\top q_k \bigr)
\le \sqrt{K} + \Phi'(\lambda_K)\sqrt{K}.$
Then we rearrange to get $\alpha \sum_{k=1}^K \bigl(\tilde{r}_k^\top q^* - \tilde{r}_k^\top q_k \bigr)
\le \exp(\beta \lambda_K)\bigl(\beta \sqrt{K} - 1\bigr) + 1 + \sqrt{K}.$
By choosing the \(\beta\) from Theorem~\ref{thm: main} so that \(\beta \sqrt{K} - 1 \le 0\), it follows that $\sum_{k=1}^K \bigl(\tilde{r}_k^\top q^* - \tilde{r}_k^\top q_k \bigr)\le \sqrt{K}$, which has an \(\mathcal{O}(\sqrt{K})\) bound. Now, when the reward is fixed, terms $\mathbf{(i)}$ and $\mathbf{(iii)}$ vanish (because \(\tilde{r}_k = \tilde{r}_{k-1}\)), which makes $\alpha \sum_{k=1}^K \bigl(\tilde{r}_k^\top q^* - \tilde{r}_k^\top q_k \bigr)
\le \exp(\beta \lambda_K)\bigl(\beta \sqrt{K} - 1\bigr) + 1,$
so that the \(\sqrt{K}\) factor disappears and a suitable \(\beta\) yields an \(\mathcal{O}(1)\) bound. 
The details for showing $\sum_{k=1}^K \bigl[\tilde{r}_k^\top q^* \;-\; \tilde{r}_k^\top q_k\bigr] \le \mathcal{O}(1)$
are deferred to Appendix~\ref{sec: proof_fixmain}. Consequently, using Lemmas~\ref{lem: omd_upper_bound} and \ref{lem: grad_bound}, we can now move on to prove Theorem~\ref{thm: main} in the following sections.

\begin{figure*}
    \centering
    \includegraphics[width=0.8\linewidth]{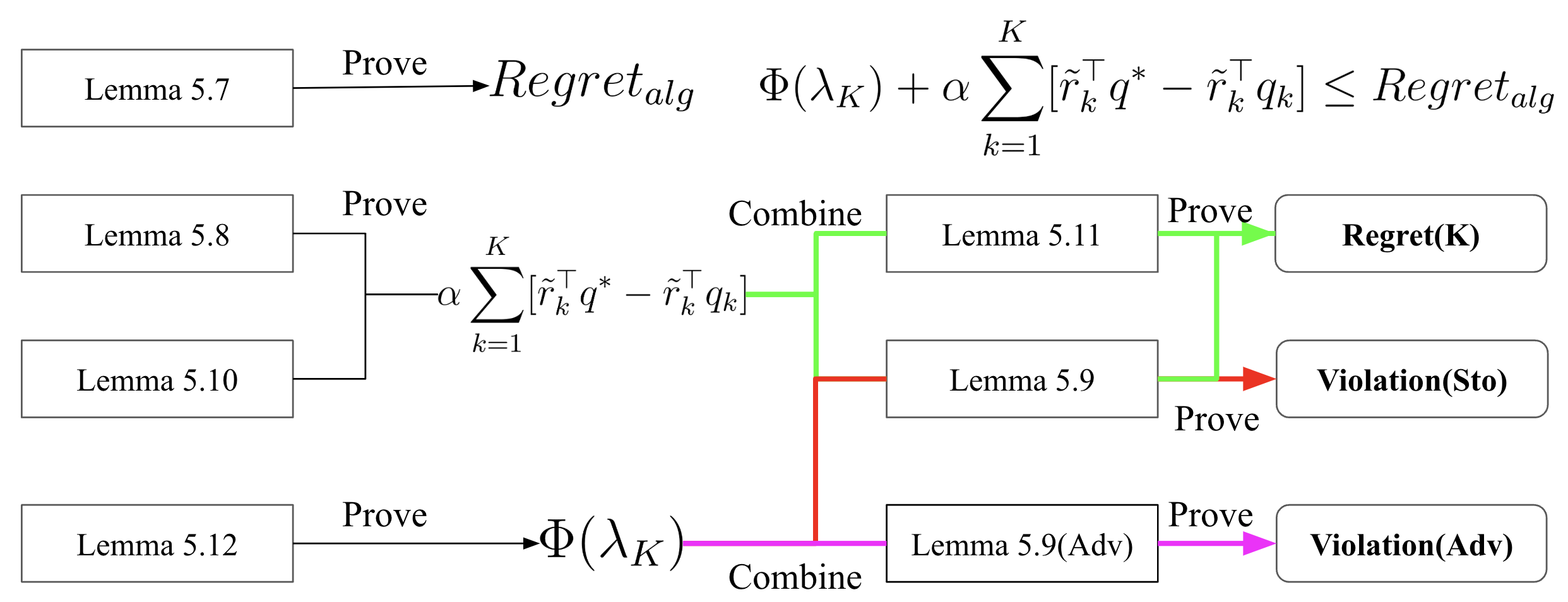}
    \caption{Proof Roadmap of the Theorem \ref{thm: main}}
    \label{fig:proof_flow}
\end{figure*}

\subsection{Proof of the Main Theorem}
To prove the main theorem, we first know that the regret can be expressed as:
\begin{align}
    \text{Regret}(K) 
    &= \underbrace{\sum_{k=1}^K \left[V^{\pi_k}(\tilde{r}_k, \tilde{p}_k) - V^{\pi_k}(\bar{r}, p)\right]}_{\text{Estimation Error}} \nonumber \\
    &\quad + \underbrace{\sum_{k=1}^K \left[V^{\pi^*}(\bar{r}, p) - V^{\pi_k}(\tilde{r}_k, \tilde{p}_k)\right]}_{\text{Optimization Error}}. \label{eq: regret_decomp}
\end{align}
Similarly, the violation in stochastic setting can be formulated as:
\begin{align}
    \text{Violation}(K) 
    &= \underbrace{\sum_{k=1}^K \left[V^{\pi_k}(\bar{d}, p) - V^{\pi_k}(\tilde{d}_k, \tilde{p}_k)\right]^+}_{\text{Estimation Error}} \nonumber \\
    &\quad + \underbrace{\sum_{k=1}^K \left[V^{\pi_k}(\tilde{d}_k, \tilde{p}_k)\right]^+}_{\text{Optimization Error}}. \label{eq: violation_decomp}
\end{align}
In the constrained adversarial setting, we do not explicitly perform estimation on constraint $d$, so the only source of estimation error arises from the unknown transition kernel. 

Overall, the decomposition operation separates the regret and violation into two distinct components: (i) \textit{estimation error}, which arises due to inaccuracies in the estimated model parameters, and (ii) \textit{optimization error}, which is influenced by the online learning algorithm. In the following, we will analyze and bound each term individually. 

To better illustrate the analysis of the main theorem, we provide a proof roadmap in Figure \ref{fig:proof_flow}, which establishes the inequality $
\Phi(\lambda_K) + \alpha \sum_{k=1}^K (\tilde{r}^\top_k q^* - \tilde{r}^\top_k q_k) \leq Regret_{alg},$ along with the resulting regret and violation
bounds.

\subsection{Upper Bound of Estimation Error}
In the following lemma, we provide an upper bound on the estimation errors for both regret and violation.
\begin{lemma}
\label{lem: est_error}
Let $\tilde p_k$ denote the transition kernel in the candidate set $B_k$, and let $\tilde r_k$ and $\tilde d_k$ be the estimations used by OMDPD. Then, conditioned on the good event $G$, the estimation errors for the stochastic cost case can be bounded as follows:
\begin{align*} 
   & \!\sum_{k=1}^K \!\left[V^{\pi_k}(\tilde r_k, \tilde p_k) \!-\! V^{\pi_k}(\bar r, p)\right]\!\! \\
    \leq \quad &\!\!\tilde{\mathcal{O}}(\sqrt{\mathcal{N}SAH^3K} \!\!+\!\! S^2 AH^3),\\
   & \!\sum_{k=1}^K \!\left[V^{\pi_k}(\bar d, p) \!-\! V^{\pi_k}(\tilde d_k, \tilde p_k)\right]^+\!\!&\! \\
    \leq \quad & \!\tilde{\mathcal{O}}(\sqrt{\mathcal{N}SAH^3K} \!\!+\!\! S^2 AH^3).
\end{align*}
The estimation error for the adversarial cost case can be bounded as follows:
\begin{align*} 
&    \!\sum_{k=1}^K \left[V^{\pi_k}(d_k, p) \!-\! V^{\pi_k}(d_k, \tilde p_k)\right]^+\!\! \\
    \leq \quad &\tilde{\mathcal{O}}(\sqrt{\mathcal{N}SAH^3K} \!\!+\!\! S^2 AH^3).  
\end{align*}
\vspace{-1em}
\end{lemma}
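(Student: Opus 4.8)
\textbf{Proof proposal for Lemma~\ref{lem: est_error}.}

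The plan is to bound the per-episode gap between the optimistic value $V^{\pi_k}(\tilde r_k,\tilde p_k)$ and the true value $V^{\pi_k}(\bar r, p)$ by splitting it into (a) the reward/cost estimation error, coming from replacing $\bar r$ by $\hat r^{k-1}_h$ and adding the bonus $\beta^r_{k,h}$, and (b) the transition estimation error, coming from replacing $p$ by $\tilde p_k \in B_k$. For part (a), conditioned on the good event $G$, standard concentration gives $|\hat r^{k-1}_h(s,a)-\bar r_h(s,a)|\le\beta^r_{k,h}(s,a)$, so the optimism $\tilde r_{k,h}=\hat r^{k-1}_h+\beta^r_{k,h}$ satisfies $0\le \tilde r_{k,h}-\bar r_h\le 2\beta^r_{k,h}$; summing the bonus over the trajectory and over $k$ gives a term of order $\sum_{k,h}\bE_{\pi_k,p}[\beta^r_{k,h}(s_h,a_h)]$. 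For part (b), I would use the standard value-difference (simulation) lemma to write $V^{\pi_k}(\tilde r_k,\tilde p_k)-V^{\pi_k}(\tilde r_k, p)$ as a telescoping sum $\sum_h \bE_{\pi_k,p}\!\big[\sum_{s'}(\tilde p_{k,h}(s'|s_h,a_h)-p_h(s'|s_h,a_h))\,V^{\pi_k}_{h+1}(\tilde r_k,\tilde p_k)(s')\big]$, bound $|V^{\pi_k}_{h+1}|\le H$, and use $|\tilde p_{k,h}(s'|s,a)-p_h(s'|s,a)|\le 2\beta^p_{k,h}(s,a,s')$ on $G$.

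Next I would convert these trajectory-expectation sums into pulls of state-action(-next-state) pairs and apply a pigeonhole/counting argument. Writing the bonuses explicitly (they are of the Bernstein/Hoeffding form $\beta^p_{k,h}(s,a,s')\asymp \sqrt{\hat p_h^{k-1}(s'|s,a)L_\delta/n^{k-1}_h(s,a)}+L_\delta/n^{k-1}_h(s,a)$ and $\beta^r_{k,h}(s,a)\asymp\sqrt{L_\delta/n^{k-1}_h(s,a)}$, with $L_\delta$ the log term and $\mathcal N$ a reachability/support parameter), the dominant contribution is $\sum_{k=1}^K\sum_{h=1}^H \bE_{\pi_k,p}\big[\sqrt{L_\delta/n^{k-1}_h(s_h,a_h)}\big]$. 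By the standard inequality $\sum_{k}1/\sqrt{n^{k-1}_h(s_h,a_h)}\le \tilde{\mathcal O}(\sqrt{SA K})$ per step (and the analogous next-state version yielding the $\sqrt{\mathcal N}$ factor) together with an azuma-type step to pass from the empirical counts on the realized trajectory back to the expectation under $(\pi_k,p)$, the leading term is $\tilde{\mathcal O}(\sqrt{\mathcal N S A H^3 K})$, while the lower-order $1/n$ bonus terms accumulate to $\tilde{\mathcal O}(S^2AH^3)$. For the violation statements, the only extra wrinkle is the outer $[\cdot]^+$: since each summand is bounded above by its absolute value, $\sum_k[V^{\pi_k}(\bar d,p)-V^{\pi_k}(\tilde d_k,\tilde p_k)]^+\le\sum_k|V^{\pi_k}(\bar d,p)-V^{\pi_k}(\tilde d_k,\tilde p_k)|$, and the same decomposition and counting bounds apply verbatim because $d_{k,h}\in[-1,1]$ gives the same $H$-boundedness of the cost value functions. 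In the adversarial case there is no cost-estimation term at all, so only the transition part of the argument is needed, and it gives the same $\tilde{\mathcal O}(\sqrt{\mathcal N SAH^3K}+S^2AH^3)$ bound.

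The main obstacle I anticipate is the transition-error term: getting the sharp $\sqrt{\mathcal N}$ (rather than $\sqrt S$) dependence requires handling the inner sum $\sum_{s'}|\tilde p_{k,h}(s'|s,a)-p_h(s'|s,a)|\,|V_{h+1}(s')|$ carefully — one must use the Bernstein-form bonus (not the crude $\ell_1$ bound $\|\tilde p-\hat p\|_1\le S\cdot\beta$, which would cost an extra $\sqrt S$) and exploit that $\sum_{s'}\hat p_h^{k-1}(s'|s,a)=1$ so that $\sum_{s'}\sqrt{\hat p_h^{k-1}(s'|s,a)/n}\le\sqrt{\mathcal N/n}$ by Cauchy–Schwarz over the at most $\mathcal N$ reachable successors, while also controlling the law-of-total-variance cross terms between the $V_{h+1}$ under $\tilde p_k$ versus under $p$. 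This is the step where the episodic structure and the precise definition of the bonuses in Appendix~\ref{sec: Optimistic estimates details} must be invoked; once that term is pinned down, the reward term and the $[\cdot]^+$ manipulations are routine, and the adversarial case follows immediately by dropping the estimation-of-cost piece.
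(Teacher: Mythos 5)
Your proposal follows essentially the same route as the paper's proof: split each episode's gap into a reward/cost-estimation part (bounded by $2\beta^\ell_{k,h}$ on the good event and summed via the counting lemma) and a transition part handled through the value-difference lemma, with the $\sqrt{\mathcal N}$ factor coming from Cauchy--Schwarz over the reachable successors of the Bernstein-form bonus and the $H^3$ (rather than $H^4$) coming from centering $V_{h+1}$ at its conditional mean and invoking the Bellman law of total variance --- exactly the ingredients you flag in your final paragraph, which also covers the paper's separate lower-order cross term $\sum_h (p_h-\tilde p_h^k)(V_{h+1}(\cdot;p)-V_{h+1}(\cdot;\tilde p^k))$ that contributes the $\tilde{\mathcal O}(S^2AH^3)$. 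The only caution is that the crude bound $|V^{\pi_k}_{h+1}|\le H$ stated mid-proposal would by itself lose the $\sqrt{SH}$ you need, but you correctly identify and repair this in the obstacle paragraph, so the argument as a whole matches the paper's.
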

The above lemma bounds the error incurred by using the estimated transition kernels, rewards, and costs during the learning process, which improves upon Lemma 29 of \cite{EfrManPir_20} by a factor of $\tilde{\mathcal{O}}(\sqrt{H})$. This improvement is achieved by leveraging a Bellman-type law of total variance to control the expected sum of value estimates for bounding the error from estimating the transition kernel~\citep{azar2017minimax, chen2021findingstochasticshortestpath}. The detailed proof is deferred in Appendix \ref{sec: proof_est_error}. Next, we will bound the optimization error.

\subsection{Upper Bound of Optimization Error}
\textbf{Regret Analysis.} We first focus on bounding the optimization error associated with regret. The following lemma establishes that the cumulative variation of gradients between consecutive episodes under OMDPD is bounded, enabling adaptive regret-violation guarantees.

To further relate the regret optimization error to Algorithm \ref{alg: optomd}, consider:
\begin{align}
\label{eq: opt_error_to_2term}
    \nonumber &\sum_{k=1}^K \left[V^{\pi^*}(\bar{r}, p) - V^{\pi_k}(\tilde{r}_k, \tilde{p}_k)\right] \!\!=\!\! \sum_{k=1}^K \left[\mathbb{E}[\bar{r}^\top q^*] - \mathbb{E}[\tilde{r}_k^\top {q}_k]\right] \\
    &= \underbrace{\sum_{k=1}^K \left[\mathbb{E}[\bar{r}^\top q^*] - \mathbb{E}[\tilde{r}_k^\top q^*]\right]}_{\text{Term 1}} + \underbrace{\sum_{k=1}^K \left[\mathbb{E}[\tilde{r}_k^\top q^*] - \mathbb{E}[\tilde{r}_k^\top {q}_k]\right]}_{\text{Term 2}}.
\end{align}
Notably, \textit{Term 2} corresponds to the violation-regret relationship in Eq.~\eqref{eq:regret-alg}, providing a critical link to our theoretical analysis. 
Consequently, we will first derive an upper bound for \textit{Term 2} in the regret decomposition.
\begin{lemma}
\label{lem: term2_bound}
Based on Lemma \ref{lem: omd_upper_bound}, \ref{lem: grad_bound}, the following upper bound holds:
\begin{align*}
  &  \sum_{k=1}^K \big[\tilde{r}_k^\top q^* - \tilde{r}_k^\top {q}_k\big] \\
    \leq & \quad 2(1\!\!+\!\!\sqrt{L_{\delta}})(SAH\!\!+\!\!4\sqrt{\mathcal{C}}\sqrt{6SAHK})
\end{align*}
\vspace{-0.8em} 
\end{lemma}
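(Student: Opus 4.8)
\textbf{Proof proposal for Lemma~\ref{lem: term2_bound}.}

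The plan is to chain together the two ingredients stated just above, namely Lemma~\ref{lem: omd_upper_bound} and Lemma~\ref{lem: grad_bound}, with the foundational inequality~\eqref{eq:regret-alg}, and then resolve the resulting self-referential dependence on $\lambda_K$. First I would start from~\eqref{eq:regret-alg}, which reads $\Phi(\lambda_K)+\alpha\sum_{k=1}^K(\tilde r_k^\top q^* - \tilde r_k^\top q_k)\le Regret_{alg}$, and upper bound the right-hand side using Lemma~\ref{lem: omd_upper_bound} by $3.5\sqrt{\mathcal{C}}\bigl(\sqrt{\sum_{k=1}^K\|\nabla_k-\nabla_{k-1}\|_2^2}+1\bigr)$. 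Then I would substitute the gradient-variation bound from Lemma~\ref{lem: grad_bound}, $\sqrt{\sum_{k=1}^K\|\nabla_k-\nabla_{k-1}\|_2^2}\le \frac{\sqrt{6SAHK}(1+\Phi'(\lambda_K))}{SAH}$, to obtain $\Phi(\lambda_K)+\alpha\sum_{k=1}^K(\tilde r_k^\top q^* - \tilde r_k^\top q_k)\le 3.5\sqrt{\mathcal{C}}\bigl(\tfrac{\sqrt{6SAHK}}{SAH}(1+\Phi'(\lambda_K))+1\bigr)$.

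Next I would exploit the specific choice $\Phi(x)=\exp(\beta x)-1$, so that $\Phi'(\lambda_K)=\beta\exp(\beta\lambda_K)=\beta(\Phi(\lambda_K)+1)$. Substituting this and collecting the $\Phi(\lambda_K)$ terms on the left gives an inequality of the form $\bigl(1-\tfrac{3.5\sqrt{\mathcal{C}}\sqrt{6SAHK}}{SAH}\beta\bigr)\Phi(\lambda_K)+\alpha\sum_{k=1}^K(\tilde r_k^\top q^*-\tilde r_k^\top q_k)\le C_0$ for an explicit constant $C_0=\mathcal{O}(SAH+\sqrt{\mathcal{C}}\sqrt{SAHK})$. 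Here is where the choice $\beta=\frac{SAH}{8\sqrt{\mathcal{C}}\sqrt{6SAHK}}$ from Theorem~\ref{thm: main} is used: it makes the coefficient of $\Phi(\lambda_K)$ equal to $1-\tfrac{3.5}{8}>0$, so that term can be dropped (since $\Phi(\lambda_K)\ge 0$ because $\lambda_K\ge 0$ by construction of the dual update). What remains is $\alpha\sum_{k=1}^K(\tilde r_k^\top q^*-\tilde r_k^\top q_k)\le C_0$, and dividing by $\alpha=\frac{1}{2(1+\sqrt{L_\delta})SAH}$ yields the stated bound $2(1+\sqrt{L_\delta})(SAH+4\sqrt{\mathcal{C}}\sqrt{6SAHK})$ after tracking the numerical constants.

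I expect the main obstacle to be handling the self-reference cleanly: $\Phi'(\lambda_K)$ appears on the right-hand side while $\Phi(\lambda_K)$ appears on the left, and one must verify that, with the prescribed $\beta$, the net coefficient of the exponential term is strictly positive so it can be discarded rather than blowing up. This requires carefully carrying the $SAH$ factors and the constant $\sqrt{6}$ through, and confirming $3.5\sqrt{\mathcal{C}}\sqrt{6SAHK}\beta/(SAH)<1$ — a purely arithmetic check given the definition of $\beta$, but one that must be done precisely since the argument collapses if the coefficient is nonpositive. A secondary point is to confirm that $\lambda_K\ge 0$ (immediate from $\lambda_1=0$ and the update $\lambda_{k+1}=\lambda_k+\alpha[\tilde d_{k+1}q_{k+1}]^+\ge\lambda_k$), so that $\Phi(\lambda_K)\ge 0$ and dropping it only weakens the inequality in the right direction.
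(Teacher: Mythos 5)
Your proposal is correct and follows essentially the same route as the paper: start from the foundational inequality~\eqref{eq:regret-alg}, bound $Regret_{alg}$ via Lemma~\ref{lem: omd_upper_bound}, substitute the gradient-variation bound of Lemma~\ref{lem: grad_bound}, use $\Phi'(\lambda_K)=\beta\exp(\beta\lambda_K)$ to isolate the exponential term, and discard it because the prescribed $\beta$ makes its net coefficient the right sign (the paper moves $\exp(\beta\lambda_K)$ to the right-hand side with a nonpositive coefficient, which is the mirror image of your keeping $\Phi(\lambda_K)$ on the left with a positive one). The remaining step of dividing by $\alpha=\tfrac{1}{2(1+\sqrt{L_\delta})SAH}$ and tracking constants matches the paper's computation exactly.
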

To complete the upper bound for the optimization error in regret, we now consider \textit{Term 1}. The following lemma provides the required result:
\begin{lemma}
\label{lem: term1_bound}
Under the stochastic rewards setting, with probability at least 1-$2\delta$, we have:
\begin{align*}
    \sum_{k=1}^K \big[\bar{r}^\top q^* - \tilde{r}_k^\top q^*\big] 
    \le SAH\sqrt{\frac{(K-1)}{2}\ln(\frac{2}{\delta})}+SAH
\end{align*}
\vspace{-0.8em} 
\end{lemma}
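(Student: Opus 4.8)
The plan is to bound $\sum_{k=1}^K [\bar r^\top q^* - \tilde r_k^\top q^*]$ by splitting the optimistic reward estimate $\tilde r_k = \hat r^{k-1} + \beta^r_k$ into its empirical part and its bonus part, then controlling each separately. First I would write, for each fixed coordinate $(s,a,h)$,
\[
\bar r_{h}(s,a) - \tilde r_{k,h}(s,a) = \bigl(\bar r_{h}(s,a) - \hat r^{k-1}_{h}(s,a)\bigr) - \beta^r_{k,h}(s,a).
\]
On the good event $G$ (Lemma~\ref{lem:good event}), the concentration that defines $G$ already gives $|\bar r_h(s,a) - \hat r^{k-1}_h(s,a)| \le \beta^r_{k,h}(s,a)$, so $\bar r_h(s,a) - \tilde r_{k,h}(s,a) \le 0$ pointwise, and hence $\bar r^\top q^* - \tilde r_k^\top q^* \le 0$ for every $k$. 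This would immediately give a bound of $0$, which is stronger than stated; the reason the lemma states an $\tilde{\mathcal O}(\sqrt K)$ bound is presumably that the relevant ``good event'' for the \emph{reward} coordinate is a separate high-probability event (the one contributing the second $\delta$ in the ``$1-2\delta$'' of the statement and of Theorem~\ref{thm: main}), and one does not want to fold it into $G$. So the cleaner route, matching the stated constant, is a direct Azuma--Hoeffding / Hoeffding argument on the reward samples.

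Concretely, the second step is to view $\hat r^{k-1}_h(s,a)$ as an empirical mean of i.i.d.\ samples of $r_h(s,a) \in [0,1]$ (the stochastic reward assumption), so that $\bar r_h(s,a) - \hat r^{k-1}_h(s,a)$ is controlled by Hoeffding's inequality: with probability $\ge 1-\delta'$,
\[
\bigl|\bar r_h(s,a) - \hat r^{k-1}_h(s,a)\bigr| \;\le\; \sqrt{\tfrac{1}{2 (n^{k-1}_h(s,a)\vee 1)}\ln\tfrac{2}{\delta'}}.
\]
Summing the coordinate-wise differences against $q^*$ and using $q^*_h(s,a)\le 1$, $\|q^*\|_1 = H$ (it is an occupancy measure over $H$ steps), one gets $\bar r^\top q^* - \tilde r_k^\top q^* \le \sum_{h}\sum_{(s,a)} q^*_h(s,a)\,|\bar r_h - \hat r^{k-1}_h| \le SAH \cdot \max_{s,a,h}|\bar r_h(s,a)-\hat r^{k-1}_h(s,a)|$, and dropping the (nonpositive) $-\beta^r_k$ contribution. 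Alternatively, and more in the spirit of the stated constant $SAH\sqrt{\tfrac{K-1}{2}\ln\tfrac{2}{\delta}}$, I would \emph{not} use per-visit counts at all but instead bound the running empirical mean error uniformly: after $k-1$ episodes the total number of reward samples feeding $\hat r^{k-1}$ is at least $\dots$ — actually the simplest path producing exactly that constant is to bound $\sum_{k=1}^K |\bar r - \hat r^{k-1}|$ for a single coordinate by the sum of deviations of a running average of at most $K-1$ i.i.d.\ bounded variables, which by a Hoeffding/maximal argument is $O(\sqrt{(K-1)\ln(1/\delta)})$, and then the factor $SAH$ comes from summing over coordinates weighted by $q^*$ (whose $\ell_1$ norm per step is $1$, times $H$ steps, times the $SA$ coordinates bounded crudely by their max). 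The ``$+\,SAH$'' additive term absorbs the finitely many early episodes where a coordinate has zero samples and $\hat r^{k-1}=0$, contributing at most $\bar r^\top q^* \le SAH$ in total (or more precisely $\le H$ per such episode, but the stated bound is generous).

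The last step is bookkeeping: invoke Lemma~\ref{lem:good event} for $G$ (probability $\ge 1-\delta$) and the reward-concentration event above (probability $\ge 1-\delta$), take a union bound to get $1-2\delta$, and conclude
\[
\sum_{k=1}^K \bigl[\bar r^\top q^* - \tilde r_k^\top q^*\bigr] \;\le\; SAH\sqrt{\tfrac{K-1}{2}\ln\tfrac{2}{\delta}} + SAH .
\]
I expect the only real subtlety to be the dependence structure: the number of samples $n^{k-1}_h(s,a)$ is itself random and determined by the agent's past policies, so the ``i.i.d.\ empirical mean'' picture requires either a martingale (Azuma) formulation with the reward noise as the increments, or a standard uniform-over-stopping-times Hoeffding bound (a doubling/peeling argument over the possible values of $n^{k-1}_h(s,a)\in\{0,1,\dots,K-1\}$, costing only an extra $\ln K$ inside the square root, which is hidden in the deliberately loose constant). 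That adaptivity issue is the one place where a naive application of Hoeffding is not literally valid; everything else is a weighted sum of bounded deviations.
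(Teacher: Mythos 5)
Your proposal is correct and follows essentially the same route as the paper: drop the nonnegative bonus so the term reduces to $\sum_{k}(\bar r-\hat r_{k-1})^\top q^*$, bound each summand by $\|\bar r-\hat r_{k-1}\|_\infty\,\|q^*\|_1\le SAH$, apply an Azuma--Hoeffding bound to the sum of $K-1$ such terms (union-bounded with the good event to obtain $1-2\delta$), and absorb the leftover term into the additive $+SAH$. Your side observation that the good event $G$ already forces $\bar r^\top q^*-\tilde r_k^\top q^*\le 0$ pointwise is valid given the paper's definition of $F^r$ (the paper simply does not exploit it), and your caveat about the adaptive, data-dependent sample counts is a genuine subtlety that the paper's one-line invocation of Azuma--Hoeffding also glosses over.
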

By combining \textit{Term 1} and \textit{Term 2}, we can obtain the bound of $\sum_{k=1}^K \big[V^{\pi^*}(\bar{r}, p) - V^{\pi_k}(\tilde{r}_k, \tilde{p}_k)\big].$ Finally, by incorporating the estimation error bounds from Lemma~\ref{lem: est_error}, we can establish the complete regret bound, combining both the estimation and optimization errors. The detailed proofs of Lemmas~\ref{lem: grad_bound}, \ref{lem: term2_bound} and \ref{lem: term1_bound} can be found in Appendix~\ref{sec: proof_grad_bound}, \ref{sec: proof_term2_term1}, \ref{sec: term1_bound_proof}.

\textbf{Violation Analysis.} We now analyze the sublinear violation guarantee in stochastic and adversarial settings.

\textbf{Stochastic Setting.}
As discussed earlier, in the stochastic setting, the overall violation can be decomposed into estimation and optimization parts, with Lemma \ref{lem: est_error} addressing the estimation error. We now turn our attention to bounding the optimization error. The following lemma provides the critical result needed for analyzing this optimization error.
\begin{lemma}
\label{lem: violation_bound}
Based on Lemma \ref{lem: omd_upper_bound}, \ref{lem: grad_bound}. Then, the following upper bound holds:
\begin{align*}
    \sum_{k=1}^K \big[\mathbf{d}_k^\top q_k\big]^+ 
    \leq {16(1+\sqrt{L}_{\delta})\sqrt{\mathcal{C}}\sqrt{6SAHK}}\ln (A) 
\end{align*}
where $A=\left(K+8 \sqrt{\mathcal{C}}\left( \frac{\sqrt{6SAHK}}{SAH}\right)+2\right)$; $\mathbf{d}_k=\tilde{d}_{k}$ under stochastic setting and $\mathbf{d}_k={d}_{k}$ in adversarial setting.
\vspace{-0.8em} 
\end{lemma}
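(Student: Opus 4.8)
The plan is to start from the foundational inequality \eqref{eq:regret-alg}, namely
\[
\Phi(\lambda_K) + \alpha\sum_{k=1}^K\bigl(\tilde r_k^\top q^* - \tilde r_k^\top q_k\bigr)\;\le\; Regret_{alg},
\]
and use it to control $\Phi(\lambda_K)=\exp(\beta\lambda_K)-1$ directly, since $\lambda_K = \alpha\sum_{k=1}^K[\mathbf d_k^\top q_k]^+$ is exactly the quantity we want to bound (up to the factor $\alpha$). First I would lower-bound the left-hand side: since $q^*$ is feasible for every $\cQ_k$ by Lemma \ref{lem: feasible}, and the reward terms $\tilde r_k^\top q^* - \tilde r_k^\top q_k$ can be bounded in absolute value using Fact \ref{fact:reward bound} together with Fact \ref{fact:convex set bound}, the sum $\alpha\sum_k(\tilde r_k^\top q^* - \tilde r_k^\top q_k)$ is at least $-\alpha(1+\sqrt{L_\delta})\sqrt{SAH}\sum_k\|q^*-q_k\| \ge -\alpha(1+\sqrt{L_\delta})SAHK$ (or, more sharply, one reuses the Term~2 bound from Lemma \ref{lem: term2_bound}). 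On the right-hand side, I would substitute the bound from Lemma \ref{lem: omd_upper_bound}, $Regret_{alg}\le 3.5\sqrt{\cC}\bigl(\sqrt{\sum_k\|\nabla_k-\nabla_{k-1}\|^2}+1\bigr)$, and then apply Lemma \ref{lem: grad_bound} to replace the gradient-variation sum by $\tfrac{\sqrt{6SAHK}(1+\Phi'(\lambda_K))}{SAH}$. Crucially, $\Phi'(\lambda_K)=\beta\exp(\beta\lambda_K)$, so the right-hand side now contains $\exp(\beta\lambda_K)$ as well.

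The key step is then to collect all $\exp(\beta\lambda_K)$ terms on one side. We get an inequality of the shape
\[
\exp(\beta\lambda_K)\Bigl(1 - c\beta\sqrt{K}\Bigr)\;\le\; c'\sqrt{K} + c'',
\]
for explicit constants $c,c',c''$ depending on $\cC, S,A,H,L_\delta$ (the $c\beta\sqrt K$ term coming from the $\Phi'(\lambda_K)=\beta\exp(\beta\lambda_K)$ contribution inside Lemma \ref{lem: grad_bound}, scaled by $3.5\sqrt{\cC}$ and the $1/SAH$). By the choice $\beta=\frac{SAH}{8\sqrt{\cC}\sqrt{6SAHK}}$ in Theorem \ref{thm: main}, the coefficient $1-c\beta\sqrt K$ is bounded below by a positive constant (e.g.\ $\ge \tfrac12$, after checking the arithmetic that $3.5\sqrt{\cC}\cdot\sqrt{6SAHK}/SAH\cdot\beta\le\tfrac12$), so we obtain $\exp(\beta\lambda_K)\le 2(c'\sqrt K + c'')=:A$ for the stated $A=K+8\sqrt{\cC}\sqrt{6SAHK}/(SAH)+2$ (the precise form of $A$ is whatever falls out of tracking the constants). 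Taking logarithms gives $\beta\lambda_K\le\ln A$, i.e.\ $\lambda_K\le\frac{\ln A}{\beta}=\frac{8\sqrt{\cC}\sqrt{6SAHK}}{SAH}\ln A$, and since $\sum_{k=1}^K[\mathbf d_k^\top q_k]^+=\lambda_K/\alpha$ with $\alpha=\frac{1}{2(1+\sqrt{L_\delta})SAH}$, we arrive at
\[
\sum_{k=1}^K[\mathbf d_k^\top q_k]^+\;=\;\frac{\lambda_K}{\alpha}\;\le\;16(1+\sqrt{L_\delta})\sqrt{\cC}\sqrt{6SAHK}\,\ln A,
\]
which is exactly the claimed bound. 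The same chain works verbatim in the adversarial case since there $\mathbf d_k=d_k$, the dual update uses the true (revealed) cost, and Lemma \ref{lem: grad_bound} is stated for both $\tilde d_k$ and $d_k$.

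The main obstacle I anticipate is the bookkeeping in the "collect the exponential terms" step: one must be careful that $\Phi'(\lambda_K)$ appears \emph{linearly} (not exponentially-compounded) on the right-hand side — this is exactly why the exponential potential $\Phi(x)=\exp(\beta x)-1$ is chosen, so that $\Phi'(\lambda_K)=\beta\Phi(\lambda_K)+\beta$ and the term can be absorbed into $\Phi(\lambda_K)$ on the left with only an $O(\beta)$ loss — and that the residual coefficient $1-c\beta\sqrt K$ stays bounded away from $0$, which pins down why $\beta$ must scale like $1/\sqrt K$ (more precisely like $SAH/(\sqrt\cC\sqrt{SAHK})$). A secondary subtlety is making sure the negative reward contribution $\alpha\sum_k(\tilde r_k^\top q^*-\tilde r_k^\top q_k)$ on the left does not overwhelm $\Phi(\lambda_K)$; this is handled by noting it is only $O(\sqrt K)$ by Lemma \ref{lem: term2_bound} (or $O(SAHK)$ by the crude Lipschitz bound, which still suffices after dividing by the positive constant), so it contributes only to the additive $c'\sqrt K$ (resp.\ to $A$) and not to the exponential's coefficient.
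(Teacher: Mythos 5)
Your proposal is correct and follows essentially the same route as the paper's proof: start from the drift-plus-penalty inequality, lower-bound the reward-difference sum by the crude Lipschitz bound $-(1+\sqrt{L_\delta})SAHK$, substitute Lemmas \ref{lem: omd_upper_bound} and \ref{lem: grad_bound}, collect the $\exp(\beta\lambda_K)$ terms so that the choice $\beta=\frac{SAH}{8\sqrt{\mathcal{C}}\sqrt{6SAHK}}$ leaves a residual coefficient of $\tfrac12$, and then take logarithms and divide by $\alpha\beta$ to recover the stated constant $16(1+\sqrt{L_\delta})\sqrt{\mathcal{C}}\sqrt{6SAHK}$. The identification of the two delicate points (the linear appearance of $\Phi'(\lambda_K)$ and keeping $1-c\beta\sqrt{K}$ bounded away from zero) matches the paper's reasoning exactly.
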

Thus, by combining the estimation bound from Lemma \ref{lem: est_error} with the violation analysis from Lemma \ref{lem: violation_bound}, we have derived the complete violation bound for the stochastic setting.

\textbf{Adversarial Setting.} As defined in Eq.\eqref{eq: vio define-adv}, the estimation error in the adversarial setting differs slightly from that in the stochastic setting, since there is no estimation error associated with the constraints and the only source of estimation error arises from the transition kernel. Consequently, based on Lemma \ref{lem: est_error} under the adversarial case and Lemma \ref{lem: violation_bound}, we obtain the complete violation bound for the adversarial setting. A detailed proof of Lemma~\ref{lem: violation_bound} can be found in Appendix~\ref{sec: violation_whole_proof}.

\section{Simulation}
We evaluate our algorithm in a synthetic and finite-horizon CMDP environment constructed to assess performance under both stochastic and adversarial cost settings. The CMDP consists of a state space $\mathcal{S} = \{0, 1, 2, 3, 4\}$ with five discrete states and an action space $\mathcal{A} = \{0, 1, 2\}$ with three available actions. The decision process unfolds over a fixed horizon of $H = 5$ steps. At each time step, the agent receives a reward $r \in [0,1]^{H \times S \times A}$ sampled uniformly from the unit interval. In the stochastic setting, the cost $c \in [-1,1]^{H \times S \times A}$ is also drawn uniformly and held fixed across episodes. In contrast, the adversarial setting introduces a discrete cost perturbation mechanism: in each episode, the cost is independently sampled from a finite set $\{-1.0, -0.6, -0.2, 0.0, 0.2, 0.6, 1.0\}$, simulating abrupt shifts in constraint feedback. The transition dynamics are time-dependent, where each transition distribution $P_{h,s,a}$ is independently sampled from a Dirichlet distribution with a concentration parameter $\alpha = 0.5 $. A smaller concentration parameter like 0.5 encourages sparsity in the resulting probability vectors, meaning that the sampled distributions are likely to concentrate mass on a small subset of next states. This induces partially deterministic behavior while still preserving stochasticity across transitions. The initial state is sampled uniformly, ensuring that each trajectory starts from a randomly selected state. Throughout all experiments, the cumulative cost constraint threshold is $0$. This controlled CMDP environment enables us to evaluate our algorithm under both stochastic and adversarial constraint settings. We plot the cumulative constraint violation across learning episodes($K=3000$), where both the stochastic and adversarial curves clearly demonstrate the algorithm’s ability to ensure sublinear violation growth. In particular, the observed trend aligns with the theoretical $\mathcal{O}(\sqrt{K}),$ highlighting the algorithm’s robustness in maintaining feasibility over time.
\begin{figure}[H]
    \centering
    \includegraphics[width=1\linewidth]{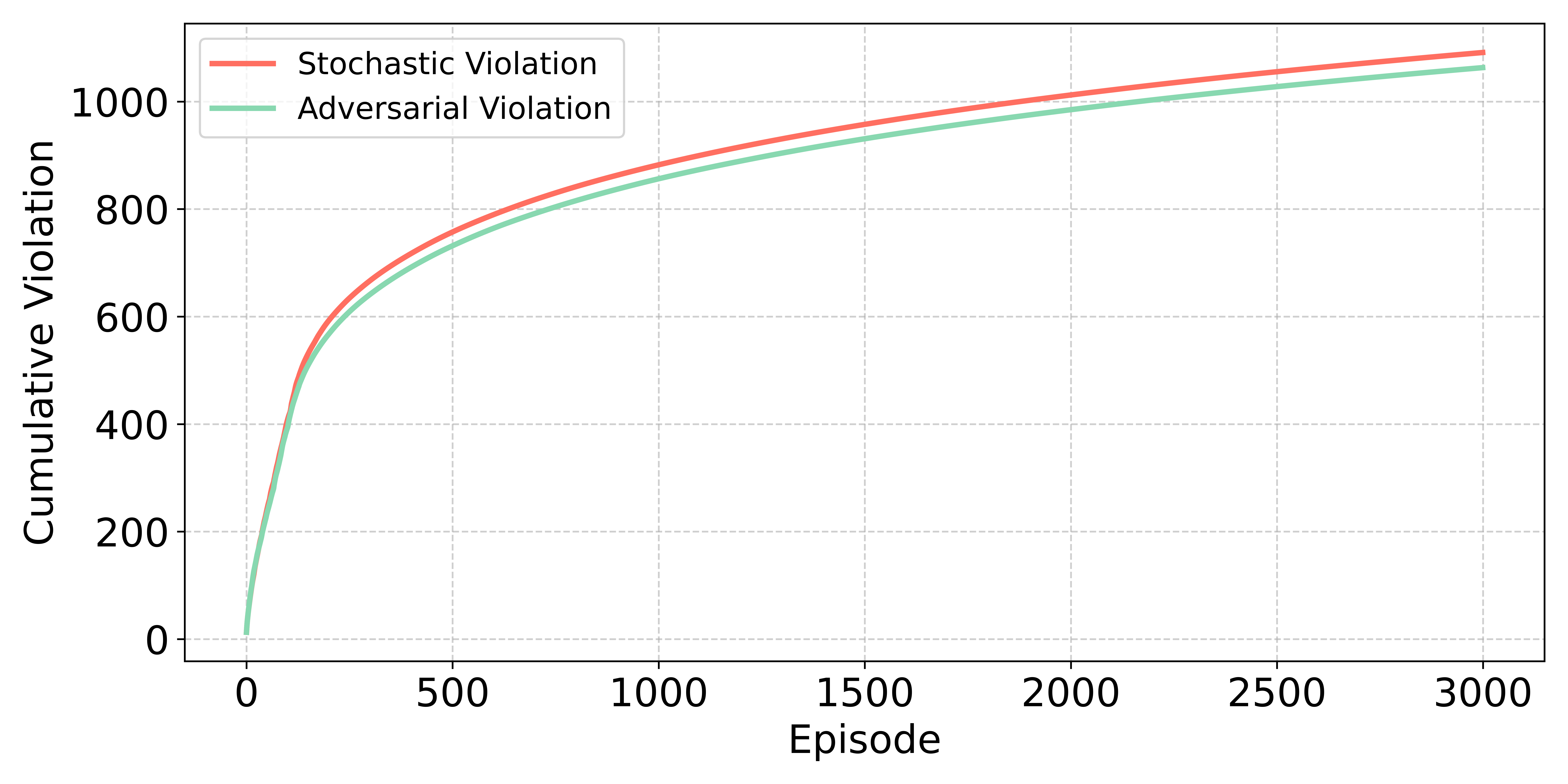}
    \vspace{-2em}
    \caption{Cumulative Violation over Learning Episodes}
    \label{fig:result}
\vspace{-2em}
\end{figure}

\section{Conclusion}
In this work, we addressed the challenge of online safe reinforcement learning in dynamic environments with adversarial constraints by proposing the {\bf O}ptimistic {\bf M}irror {\bf D}escent {\bf P}rimal-{\bf D}ual (OMDPD) algorithm. Our approach is the first to provide optimal guarantees in terms of both regret and strong constraint violation under anytime adversarial cost functions, without requiring Slater's condition or the existence of a strictly known safe policy. OMDPD achieves regret and violation bounds of \(\tilde{\mathcal{O}}(\sqrt{K})\), which are optimal with respect to the number of learning episodes \(K\). We also demonstrated that access to accurate estimates of rewards and transitions can further improve these performance guarantees. Our work advances the theoretical understanding of CMDPs and provides a robust solution for safe decision-making in adversarial and non-stationary environments. Future research directions include extending our framework to multi-agent settings and investigating scenarios with partial observability.

\section*{Impact Statement}
This paper presents work whose goal is to advance the field of Machine Learning. There are many potential societal consequences of our work, none of which we feel must be specifically highlighted here.

\section*{Acknowledgment}

This work was supported by the National Research Foundation of Korea (NRF) grants (No. RS-2024-00350703 and No. RS-2024-00410082).


\bibliography{icml2025/pubs, icml2025/example_paper, icml2025/pubs2}
\bibliographystyle{icml2025}

\newpage
\appendix
\onecolumn

\section{Missing Related Work}
\textbf{Online Constrained Optimization.} Recently, \citep{SinVaz_24} achieved sublinear regret in an adversarial violation setting. However, their approach relies on Online Gradient Descent and thus cannot attain a tighter bound even when the reward is fixed. Meanwhile, \citet{lekeufack2024optimistic} proposed an algorithm based on optimistic online mirror descent, attaining comparable regret and violation bounds to ours. 

\section{Optimistic estimates Related Lemmas}
\subsection{Proof of Lemma \ref{lem:good event}}
\label{sec: Optimistic estimates details}
\begin{customlemma}{\ref{lem:good event}}
\label{proof: good event}
 With probability at least $1-\delta$, $\Pr[G]\ge 1-\delta,$ where $G$ is the good event defined in Eq.~\eqref{eq:goode} for stochastic constraint setting and Eq.\eqref{eq: slight goode} for adversarial setting, where $\delta\in(0,1)$. 
\end{customlemma}
\textit{Proof:} 

Define the following failure events representing the set in which the transitions and observations are far from our current optimistic-estimation:
\begin{align}
     \nonumber F^p_k &= \left\{\exists s,a,s',h : \left| p_h(s'\vert s, a) - \hat{p}^{k-1}_h(s'\vert s, a)\right| >\beta^{p}_{k,h}(s, a, s')\right\}\\
     \nonumber {F^N}&=\Biggl\{
    \sum_{k=1}^K\sum_{(s,a,h)}\frac{q^{\pi_k}_h(s,a |p)}{n_h^{k-1}(s,a)\vee 1} > 4HSA + 2HSA\ln K + 4\ln\frac{2HK}{\delta'}, \\
     \nonumber 
    &\ \ \ \quad\sum_{k=1}^K\sum_{(s,a,h)}\frac{q^{\pi_k}_h(s,a |p)}{\sqrt{n_h^{k-1}(s,a)\vee 1}} > 6HSA + 2H\sqrt{SAK} + 2HSA\ln K  + 5\ln\frac{2HK}{\delta'}
     \Biggr\}\\
     \nonumber F^r_{k}&= \left\{\exists s,a,h : \left| {\bar r_h(s, a)} - \hat{r}^k_h(s, a)\right| >\beta^{r}_{k,h}(s, a)\right\}\\
     \nonumber F^d_{k}&= \left\{\exists s,a,h : \left| { \bar d_h(s, a)} - \hat{d}^k_h(s, a)\right| >\beta^{d}_{k,h}(s, a)\right\}\; \text{(Stochastic setting)}
\end{align}
we define $\beta^p_{k,h}$ as
\begin{align}
    \nonumber \beta^p_{k,h}(s,a,s')&:=2\sqrt{\frac{\hat{p}_h^{k-1}(s'|s,a)(1-\hat{p}_h^{k-1}(s'|s,a))L^{p}_{\delta}}{n_h^{k-1}(s,a) \lor 1}}+\frac{{\frac{14}{3}}L^{p}_{\delta}}{n_h^{k-1}(s,a) \lor 1}\\
    \nonumber \beta^{r}_{k,h}(s,a)&:=\beta^{d}_{k,h}(s,a):=\sqrt{\frac{L_{\delta}}{n^{k-1}_h(s,a)\lor 1}}
\end{align}
where we set $L_{\delta}=\ln(\frac{12SAHK}{\delta}), L^p_{\delta}=\ln(\frac{6SAHK}{\delta})$ and $\delta'=\frac{\delta}{3}$. 
Then set $F^p:=\bigcup_{k=1}^KF^p_k, F^r=\bigcup_{k=1}^K F^{r}_k, F^{d}=\bigcup_{k=1}^K F^d_{k}$. Hence the \textbf{\textit{Good event}} is denoted as:
\begin{align}
     G:=\overline{\left(F^N \bigcup F^p\bigcup F^r\bigcup F^d\right)} \label{eq:goode}
\end{align}
And because we did not estimate constraint $d$ in the adversarial setting, the \textbf{\textit{Good event}} is defined as:
\begin{align}
     G:=\overline{\left(F^N \bigcup F^p\bigcup F^r\right)} \label{eq: slight goode}
\end{align}

Finally, it is easy to show that the good event $G$ happens with probability at least $1-\delta$. Specifically, $\text{\rm Pr}[F^p \cup F^r \cup F^d]\leq \frac{2}{3}\delta$, where the detailed proof can be found in \cite{EfrManPir_20}(Appendix A.1). Furthermore, by \Cref{lemma:liu-lemma D.5}, $\text{\rm Pr}[F^N]\leq \delta'=\frac{1}{3}\delta$. Then we can prove that $\text{\rm Pr}[G]\geq 1-\delta$ by union bound.

\subsection{Proof of Lemma \ref{lem: feasible}}
\begin{customlemma}{\ref{lem: feasible}}
\label{proof: feasible}
 Conditioning on the \textbf{\textit{Good event}} $G,$ the optimal policy $\pi^*$ induced by the occupancy measure $q^{\pi^*}$ under the optimal policy for solving the CMDP problem \eqref{eq:opt_v} is a feasible solution policy for any episode $k\in[K]$ such that:
$$\pi^* \in \left\{ \pi \in \Delta^\mathcal{S}_\mathcal{A} : \tilde{d}^\top_{k}q^\pi(p') \leq 0, p' \in B_k \right\}$$
\end{customlemma}
\textit{Proof:} 
For the stochastic cost case, the good event $G$ implies that $|\bar d_{h}(s,a) - \hat d_{k,h}(s,a)|\leq \beta_{k,h}^d(s,a)$ for all $(s,a,h,k)\in\mathcal{S}\times\mathcal{A}\times\mathcal{S}\times[K]$. Due to the definition of the optimistic cost $\tilde d_k$, we have $\tilde d_{k,h}(s,a) \leq \bar d_{h}(s,a)$. Then we have $\tilde d_{k}^\top q^*\leq \bar d^\top q^*$. Since $\pi^*$ is a feasible solution of \eqref{eq:opt_v}, we have $\tilde d_{k}^\top q^*\leq \bar d^\top q^*\leq 0$. For the adversarial cost case, we take $\tilde d_k = d_k$. Furthermore, conditioned on the good event $G$, we know that the true transition kernel $p\in B_k$. Therefore, $q^{*}$ satisfies that $q^{*}\in \left\{q:\tilde d_k^\top q(p')\leq 0, p'\in B_k\right\}$.

\section{Key Lemmas proofs for Theorem \ref{thm: main}}
\subsection{Proof of Lemma \ref{lem: omd_upper_bound}}\label{sec: oomd_regret_alg proof}
\begin{customlemma}{\ref{lem: omd_upper_bound}}
Let \( \mathcal{C} = \sup_{q_1, q_2 \in Q} \mathcal{D}(q_1, q_2) \), \( \nabla_k = \nabla({f}_k(q_k)) \), \( \nabla_{k-1} = \nabla({f}_{k-1}(q_{k-1})) \), and define the learning rate as:
$
\eta_k = \sqrt{\mathcal{C}} \min \left\{ \frac{1}{\sqrt{\sum_{i=1}^{k-1} \|\nabla_i - \nabla_{i-1}\|_2^2} + \sqrt{\sum_{i=1}^{k-2} \|\nabla_i - \nabla_{i-1}\|_2^2}}, 1 \right\}.$
Then, the regret is bounded as:
\begin{align*}
    Regret_{alg} \leq 3.5 \sqrt{\mathcal{C}} \left( \sqrt{\sum_{k=1}^K \|\nabla_k - \nabla_{k-1}\|_2^2} + 1 \right)
\end{align*}
\end{customlemma}
\textit{Proof:} Updating rule we use for Optimistic OMD in Algorithm \ref{alg: optomd}:
\begin{align*}
    {f}_k(q) &= \alpha(-\tilde{r}^\top_{k} q+\Phi'(\lambda_{k})[\tilde{d}_{k}^\top q]^+)-\frac{1}{2}\|q-q_{k}\|^2
\end{align*}
where we know ${f}_k(q)$ is 1-strong convex.
By apply Optimistic OMD and convexity of ${f}_k(q)$, we have:
\begin{align*}
    ({f}_k(q_k) - {f}_k(q^*))\le \langle \nabla({f}_k(q_k)), q_k-q^* \rangle
\end{align*}
For easy notation, we denote $\nabla({f}_k(q_k))=\nabla_k, \nabla({f}_{k-1}(q_{k-1}))=\nabla_{k-1}$
Then, we can arrange for the following equal transformation:
\begin{align}
    \langle q_k-q^*, \nabla_k \rangle=\overbrace{\langle q_k-\hat{q}_k, \nabla_k-\nabla_{k-1} \rangle}^{\text{term 1}}+\overbrace{\langle q_k-\hat{q}_k, \nabla_{k-1} \rangle}^{\text{term 2}}+\overbrace{\langle \hat{q}_k-q^*, \nabla_k \rangle}^{\text{term 3}}\label{eq:first equal trans}
\end{align}
We can directly have upper bound for term 1:
\begin{align*}
    \langle q_k-\hat{q}_k, \nabla_k-\nabla_{k-1} \rangle\le \|q_k-\hat{q}_k\|_2\|\nabla_k-\nabla_{k-1}\|_2
\end{align*}
And any update of the form $a^*=\arg\min_{a\in A} \eta \langle a,x\rangle+\mathcal{D}(a,c)$ satisfies for any $d\in A$:
\begin{align*}
    \langle a^*-d, x\rangle\le \frac{1}{\eta}\left(\mathcal{D}(d,c)-\mathcal{D}(d,a^*)-\mathcal{D}(a^*,c)\right)
\end{align*}
In our form, replace $a^*=q_k, d=\hat{q}_k, c=\hat{q}_{k-1}, x=\nabla_{k-1}, \eta=\eta_k$, we have upper bound for term 2:
\begin{align}
    \langle q_k-\hat{q}_k, \nabla_{k-1}\rangle \le \frac{1}{\eta_k}\left(\mathcal{D}(\hat{q}_k,\hat{q}_{k-1})-\mathcal{D}(\hat{q}_k,q_{k})-\mathcal{D}(q_k,\hat{q}_{k-1})\right)
\end{align}
Replace $a^*=\hat{q}_k, d=q^*, c=\hat{q}_{k-1}, x=\nabla_{k}, \eta=\eta_k$, we have upper bound for term 3:
\begin{align}
    \langle \hat{q}_k-q^*, \nabla_{k}\rangle \le \frac{1}{\eta_k}\left(\mathcal{D}(q^*,\hat{q}_{k-1})-\mathcal{D}(q^*,\hat{q}_{k})-\mathcal{D}(\hat{q}_k,\hat{q}_{k-1})\right)
\end{align}
Combine their upper bound together we have:
\begin{align}
    \nonumber\langle q_k-q^*, \nabla_k \rangle 
    &\le \|q_k-\hat{q}_k\|_2\|\nabla_k-\nabla_{k-1}\|_2+\frac{1}{\eta_k}\left[\mathcal{D}(\hat{q}_k,\hat{q}_{k-1})-\mathcal{D}(\hat{q}_k,q_{k})-\mathcal{D}(q_k,\hat{q}_{k-1})\right]\\
    &+\frac{1}{\eta_k}\left[\mathcal{D}(q^*,\hat{q}_{k-1})-\mathcal{D}(q^*,\hat{q}_{k})-\mathcal{D}(\hat{q}_k,\hat{q}_{k-1})\right]\\
    \nonumber &\le \|q_k-\hat{q}_k\|_2\|\nabla_k-\nabla_{k-1}\|_2+\frac{1}{\eta_k}\left[\mathcal{D}(q^*,\hat{q}_{k-1})-\mathcal{D}(q^*,\hat{q}_{k})-\mathcal{D}(\hat{q}_k,q_{k})-\mathcal{D}(q_k,\hat{q}_{k-1})\right]
\end{align}
Because we set $U$ is a 1-strongly convex function, we have $\mathcal{D}(q_1, q_2)\ge \frac{1}{2}\|q_1-q_2\|^2_2$; then:
\begin{align*}
    \nonumber\langle q_k-q^*, \nabla_k \rangle \le \|q_k-\hat{q}_k\|_2\|\nabla_k-\nabla_{k-1}\|_2+\frac{1}{\eta_k}\left[\mathcal{D}(q^*,\hat{q}_{k-1})-\mathcal{D}(q^*,\hat{q}_{k})-\frac{1}{2}\|\hat{q}_k-q_{k}\|^2_2-\frac{1}{2}\|q_k-\hat{q}_{k-1}\|^2_2\right]
\end{align*}
Then, sum it:
\begin{align*}
    \sum_{k=1}^K\langle q_k-q^*, \nabla_k \rangle &\le \sum_{k=1}^K\|q_k-\hat{q}_k\|_2\|\nabla_k-\nabla_{k-1}\|_2+\frac{1}{\eta_1}\mathcal{D}(q^*, \hat{q}_0)+\sum_{k=2}^K \mathcal{D}(q^*-\hat{q}_{k-1})(\frac{1}{\eta_k}-\frac{1}{\eta_{k-1}})\\
    &-\sum_{k=1}^K \frac{1}{2\eta_k}(\|q_k-\hat{q}_k\|^2_2+\|q_k-\hat{q}_{k-1}\|^2_2)
\end{align*}
We define $\mathcal{C}=\sup_{q_1,q_2\in \mathcal{Q}}\mathcal{D}(q_1, q_2)$, then we have:
\begin{align*}
    \sum_{k=1}^K\langle q_k-q^*, \nabla_k \rangle &\le \overbrace{(\frac{1}{\eta_1}+\frac{1}{\eta_K})\mathcal{C}}^{\text{term (a)}}+\overbrace{\sum_{k=1}^K\|q_k-\hat{q}_k\|_2\|\nabla_k-\nabla_{k-1}\|_2}^{\text{term (b)}}-\overbrace{\sum_{k=1}^K \frac{1}{2\eta_k}(\|q_k-\hat{q}_k\|^2_2+\|q_k-\hat{q}_{k-1}\|^2_2)}^{\text{term (c)}}
\end{align*}
Besides, we will use the following fact:
\begin{align*}
    \|q_k-\hat{q}_k\|_2\|\nabla_k-\nabla_{k-1}\|_2=\inf_{\rho>0}\left\{\frac{\rho}{2}\|\nabla_k-\nabla_{k-1}\|^2_2+\frac{1}{2\rho}\|q_k-\hat{q}_k\|^2_2\right\}
\end{align*}
And by setting $\rho=\eta_{k+1}$, we have upper bound for term (b):
\begin{align*}
     \|q_k-\hat{q}_k\|_2\|\nabla_k-\nabla_{k-1}\|_2\le \frac{\eta_{k+1}}{2}\|\nabla_k-\nabla_{k-1}\|^2_2+\frac{1}{2\eta_{k+1}}\|q_k-\hat{q}_k\|^2_2
\end{align*}
Then, consider learning rate $\eta_k$ in the following condition:
\begin{align*}
    \eta_k&=\sqrt{\mathcal{C}}\min \left\{\frac{1}{\sqrt{\sum_{i=1}^{k-1}\|\nabla_i-\nabla_{i-1}\|^2_2}+\sqrt{\sum_{i=1}^{k-2}\|\nabla_i-\nabla_{i-1}\|^2_2}},1\right\}\\
    \eta_k&\ge \sqrt{\mathcal{C}}\min \left\{\frac{1}{2\sqrt{\sum_{i=1}^{k-1}\|\nabla_i-\nabla_{i-1}\|^2_2}},1\right\}\\
    \frac{1}{\eta_k}&\le \frac{1}{\sqrt{\mathcal{C}}}\max \left\{2\sqrt{\sum_{i=1}^{k-1}\|\nabla_i-\nabla_{i-1}\|^2_2},1\right\}
\end{align*}
Then for term (a), the upper bound is $(\frac{1}{\eta_1}+\frac{1}{\eta_K})\sqrt{\mathcal{C}}\le \sqrt{\mathcal{C}}(2\sqrt{\sum_{k=1}^{K-1}\|\nabla_k-\nabla_{k-1}\|^2_2}+2)$. 
Now, we get:
\begin{align*}
    \sum_{k=1}^K\langle q_k-q^*, \nabla_k \rangle &\le \sqrt{\mathcal{C}}\left(2\sqrt{\sum_{k=1}^{K}\|\nabla_k-\nabla_{k-1}\|^2_2}+2\right)+\sum_{k=1}^K \frac{\eta_{k+1}}{2}\|\nabla_k-\nabla_{k-1}\|^2_2\\
    &+\sum_{k=1}^K \frac{1}{2\eta_{k+1}}\|q_k-\hat{q}_k\|^2_2-\sum_{k=1}^K \frac{1}{2\eta_k}\|q_k-\hat{q}_k\|^2_2-\sum_{k=1}^K\frac{1}{2\eta_k}\|q_k-\hat{q}_{k-1}\|^2_2\\
    &\le \sqrt{\mathcal{C}}\left(2\sqrt{\sum_{k=1}^{K}\|\nabla_k-\nabla_{k-1}\|^2_2}+2\right)+\sum_{k=1}^K \frac{\eta_{k+1}}{2}\|\nabla_k-\nabla_{k-1}\|^2_2\\
    &+\sum_{k=1}^K \frac{1}{2\eta_{k+1}}\|q_k-\hat{q}_k\|^2_2-\sum_{k=1}^K \frac{1}{2\eta_k}\|q_k-\hat{q}_k\|^2_2
\end{align*}
where the second inequality arrived by dropping positive term $\|q_k-\hat{q}_{k-1}\|^2_2$. Now, we first deal with the last two terms:
\begin{align*}
    \sum_{k=1}^K \frac{1}{2\eta_{k+1}}\|q_k-\hat{q}_k\|^2_2-\sum_{k=1}^K \frac{1}{2\eta_k}\|q_k-\hat{q}_k\|^2_2\le \frac{\mathcal{C}}{2}\sum_{k=1}^K(\frac{1}{\eta_{k+1}}-\frac{1}{\eta_k})\le \frac{\mathcal{C}}{2\eta_{K+1}}
\end{align*}
Now, we get:
\begin{align*}
    \sum_{k=1}^K\langle q_k-q^*, \nabla_k \rangle &\le \sqrt{\mathcal{C}}\left(2\sqrt{\sum_{k=1}^{K}\|\nabla_k-\nabla_{k-1}\|^2_2}+2\right)+\sum_{k=1}^K \frac{\eta_{k+1}}{2}\|\nabla_k-\nabla_{k-1}\|^2_2+\frac{\mathcal{C}}{2\eta_{K+1}}\\
    &\le 3\sqrt{\mathcal{C}}\left(\sqrt{\sum_{k=1}^{K}\|\nabla_k-\nabla_{k-1}\|^2_2}+1\right)+\sum_{k=1}^K \frac{\eta_{k+1}}{2}\|\nabla_k-\nabla_{k-1}\|^2_2
\end{align*}
And notice $\eta_{k+1}$:
\begin{align*}
    \eta_{k+1}&=\sqrt{\mathcal{C}}\min \left\{\frac{1}{\sqrt{\sum_{i=1}^{k}\|\nabla_i-\nabla_{i-1}\|^2_2}+\sqrt{\sum_{i=1}^{k-1}\|\nabla_i-\nabla_{i-1}\|^2_2}},1\right\}\\
    &=\sqrt{\mathcal{C}}\min \left\{\frac{\sqrt{\sum_{i=1}^{k}\|\nabla_i-\nabla_{i-1}\|^2_2}-\sqrt{\sum_{i=1}^{k-1}\|\nabla_i-\nabla_{i-1}\|^2_2}}{\|\nabla_k-\nabla_{k-1}\|^2_2},1\right\}
\end{align*}
Thus, we get:
\begin{align*}
    \sum_{k=1}^K \frac{\eta_{k+1}}{2}\|\nabla_k-\nabla_{k-1}\|^2_2&\le \frac{\sqrt{\mathcal{C}}}{2}\sum_{k=1}^K \left(\sqrt{\sum_{i=1}^{k}\|\nabla_i-\nabla_{i-1}\|^2_2}-\sqrt{\sum_{i=1}^{k-1}\|\nabla_i-\nabla_{i-1}\|^2_2}\right)\\
    &\le \frac{\sqrt{\mathcal{C}}}{2}\sqrt{\sum_{k=1}^K\|\nabla_k-\nabla_{k-1}\|^2_2}
\end{align*}
Next, we can have the upper bound:
\begin{align*}
    \sum_{k=1}^K\langle q_k-q^*, \nabla_k \rangle &\le 3\sqrt{\mathcal{C}}\left(\sqrt{\sum_{k=1}^{K}\|\nabla_k-\nabla_{k-1}\|^2_2}+1\right)+\sum_{k=1}^K \frac{\eta_{k+1}}{2}\|\nabla_k-\nabla_{k-1}\|^2_2\\
    &\le 3.5\sqrt{\mathcal{C}}\left(\sqrt{\sum_{k=1}^{K}\|\nabla_k-\nabla_{k-1}\|^2_2}+1\right)
\end{align*}
Finally, we get:
\begin{align*}
    \sum_{k=1}^K({f}_k(q_k) - {f}_k(q^*))&\le \sum_{k=1}^K\langle \nabla({f}_k(q_k)), q_k-q^* \rangle=\sum_{k=1}^K \langle q_k-q^*, \nabla_k \rangle\\
    &\le 3.5\sqrt{\mathcal{C}}\left(\sqrt{\sum_{k=1}^{K}\|\nabla_k-\nabla_{k-1}\|^2_2}+1\right)
\end{align*}

\subsection{Proof of Lemma \ref{lem: est_error}}\label{sec: proof_est_error}
\begin{customlemma}{\ref{lem: est_error}}
Let $\tilde p_k$ denote the transition kernel in the candidate set $B_k$, and let $\tilde r_k$ and $\tilde d_k$ be the estimations used by OMDPD. Then, conditioned on the good event $G$, the estimation errors for the stochastic cost case can be bounded as follows:
\begin{align}\label{eq:lemma 6.6 - Eq 1 - appendix}
\begin{aligned}
    \sum_{k=1}^K \left[V^{\pi_k}(\tilde r_k, \tilde p_k) - V^{\pi_k}(\bar r, p)\right]&\leq \tilde{\mathcal{O}}(\sqrt{\mathcal{N}SAH^3K} + S^2 AH^3),\\
    \sum_{k=1}^K \left[V^{\pi_k}(\bar d, p) - V^{\pi_k}(\tilde d_k, \tilde p_k)\right]^+&\leq \tilde{\mathcal{O}}(\sqrt{\mathcal{N}SAH^3K} + S^2 AH^3).
\end{aligned}
\end{align}
The estimation error for the adversarial cost case can be bounded as follows:
\begin{equation}\label{eq:lemma 6.6 - Eq 2 - appendix}
    \sum_{k=1}^K \left[V^{\pi_k}(d_k, p) - V^{\pi_k}(d_k, \tilde p_k)\right]^+\leq \tilde{\mathcal{O}}(\sqrt{\mathcal{N}SAH^3K} + S^2 AH^3).  
\end{equation}
\end{customlemma}
\textit{Proof:} 
To prove \eqref{eq:lemma 6.6 - Eq 1 - appendix}, it is sufficient to show that 
\begin{align*}
\begin{aligned}
    \sum_{k=1}^K \left|V^{\pi_k}(\tilde \ell_k, \tilde p_k) - V^{\pi_k}(\bar \ell, p)\right| \leq \tilde{\mathcal{O}}\left(\sqrt{\mathcal{N}SAH^3 K} + S^2 AH^3\right)
\end{aligned}
\end{align*}
for $\ell = r,d$. The right-hand side of the above inequality can be decomposed as
 \begin{align*}
 \begin{aligned}
     \sum_{k=1}^K \left|V^{\pi_k}(\tilde \ell_k, \tilde p_k) - V^{\pi_k}(\bar \ell, p)\right| \leq 
     \underbrace{\sum_{k=1}^K \left|V^{\pi_k}(\tilde \ell_k, \tilde p_k) - V^{\pi_k}(\tilde\ell_k, p)\right|}_{\text{Term 1}}
     + 
     \underbrace{\sum_{k=1}^K \left|V^{\pi_k}(\tilde \ell_k, p) - V^{\pi_k}(\bar \ell, p)\right|}_{\text{Term 2}}.
 \end{aligned}
 \end{align*}
Note that $\beta_{k,h}^\ell(s,a)=\sqrt{L_\delta / (n_h^{k-1}(s,a)\vee 1)} \leq \sqrt{L_\delta}$. Then the estimated function $\tilde\ell_{k}$ satisfies $\tilde\ell_{k,h}(s,a) \in [-1-\sqrt{L_\delta}, 1+\sqrt{L_\delta}]$ for all $(s,a,h,k)\in\mathcal{S}\times\mathcal{A}\times[H]\times[K]$. By \Cref{lemma:est error for p} with $C = 1 + \sqrt{L_\delta}$, Term 1 is bounded as
\begin{align*}
\begin{aligned}
    \text{Term 1} \leq \tilde{\mathcal{O}}\left(\sqrt{\mathcal{N}SAH^3 K} + S^2 AH^3\right).
\end{aligned}
\end{align*}
To bound Term 2, by \Cref{lemma:value difference lemma}, we can write it as
    \begin{align*}
    \begin{aligned}
        \text{Term 2} &= \sum_{k=1}^K\mathbb{E}\left[ \sum_{h=1}^H \left| \bar\ell_{h}(s_h,a_h) - \tilde\ell_{k,h}(s_h,a_h)\right|\mid s_1,\pi_k,p\right].
    \end{aligned}
    \end{align*} 
Furthermore, conditioned on the good event $G$, it follows that
\begin{align*}
\begin{aligned}
    \left| \bar\ell_h(s,a) - \tilde\ell_{k,h}(s,a) \right|\leq \left| \bar\ell_h(s,a) - \hat\ell_{h}^{k-1}(s,a) \right| +  \left| \hat\ell_{h}^{k-1}(s,a) - \tilde\ell_{k,h}(s,a) \right|\leq 2\sqrt{\frac{L_\delta}{n_h^{k-1}(s,a)\vee 1}}.
\end{aligned}
\end{align*}
Applying this, Term 2 can be bounded as
\begin{align*}
\begin{aligned}
    \text{Term 2}\leq \sum_{k=1}\mathbb{E}\left[\sum_{h=1}^H 2\sqrt{\frac{L_\delta}{n_h^{k-1}(s,a)\vee 1}}\mid s_1,\pi_k,p\right] \leq \tilde{\mathcal{O}}\left(H\sqrt{SAK} + HSA\right)
\end{aligned}
\end{align*}
where the last inequality follows from \Cref{lemma:liu-lemma D.5}. Finally, we have
\begin{align*}
\begin{aligned}
    \sum_{k=1}^K \left|V^{\pi_k}(\tilde \ell_k, \tilde p_k) - V^{\pi_k}(\bar \ell, p)\right|=\text{Term 1}+\text{Term 2} \leq \tilde{\mathcal{O}}\left(\sqrt{\mathcal{N}SAH^3 K} + S^2 AH^3\right)
\end{aligned}
\end{align*}
as required.

Next, \eqref{eq:lemma 6.6 - Eq 2 - appendix} is a direct consequence of \Cref{lemma:est error for p} with $C = 1$. \qed
\subsection{Proof of Lemma \ref{lem: grad_bound}}\label{sec: proof_grad_bound}
\begin{customlemma}{\ref{lem: grad_bound}}
Let $\nabla_k=\nabla{f}_k(q_k)$ denote the subgradient of the potential function ${f}_k$ evaluated at $q_k$. Under OMDPD, the cumulative variation of consecutive gradients is bounded as:
\begin{align*}
    \sqrt{\sum_{k=1}^K \|\nabla_k-\nabla_{k-1}\|^2_2}\le \frac{\sqrt{6SAHK}(1+\Phi'(\lambda_K))}{SAH}
\end{align*}
\end{customlemma}
\textit{Proof:} In the algorithm, we have:
\begin{align*}
    \lambda_{k+1} = \lambda_k+\alpha[\tilde{d}_{k+1}^\top {q}_{k+1}]^+, \quad\Phi(x) = \exp(\beta x)-1,\quad {f}_k(q) = \alpha(-\tilde{r}^\top_{k} q+\Phi'(\lambda_{k})[\tilde{d}_{k}^\top q]^+)-\frac{1}{2}\|q-q_{k}\|^2
\end{align*}
Based on convexity, we have:
\begin{align*}
    \Phi(\lambda_k) &\le \Phi(\lambda_{k-1})+\Phi'(\lambda_k)(\lambda_k-\lambda_{k-1})\\
    &=\Phi(\lambda_{k-1})+\Phi'(\lambda_k)\cdot\alpha[\tilde{d}^\top_k q_k]^+
\end{align*}
Based on drift analysis, we have:
\begin{align}
\label{eq:dirft}
    \Phi(\lambda_k)-\Phi(\lambda_{k-1})\le \Phi'(\lambda_k)\cdot \alpha ([\tilde{d}^\top_k q_k]^+)
\end{align}
And we know:
\begin{align}
    \nonumber {f}_{k}(q_k) &= \alpha(-\tilde{r}^\top_{k} q_k+\Phi'(\lambda_{k})[\tilde{d}_{k}^\top q_k]^+)-\frac{1}{2}\|q_k-q_{k}\|^2\\
\label{eq:fk_change}
    {f}_{k}(q_k)+\alpha (\tilde{r}_k^{\top}q_k) &= \Phi'(\lambda_{k})\cdot\alpha([\tilde{d}_{k}^\top q_k]^+)
\end{align}
Combine \eqref{eq:dirft} and \eqref{eq:fk_change} together:
\begin{align*}
    \Phi(\lambda_k)-\Phi(\lambda_{k-1})\le {f}_{k}(q_k)+\alpha (\tilde{r}_k^{\top}q_k)
\end{align*}
Also, we have:
\begin{align*}
    {f}_{k}(q^*) &= \alpha(-\tilde{r}^\top_{k} q^*+\Phi'(\lambda_{k})[\tilde{d}_{k}^\top q^*]^+)-\frac{1}{2}\|q^*-q_{k}\|^2\\
    &=-\alpha\cdot \tilde{r}^\top_{k} q^*-\frac{1}{2}\|q^*-q_{k}\|^2
\end{align*}
Combine the equation together, we have:
\begin{align*}
    \Phi(\lambda_k)-\Phi(\lambda_{k-1})&\le {f}_{k}(q_k)+\alpha (\tilde{r}_k^{\top}q_k)\\
    \Phi(\lambda_k)-\Phi(\lambda_{k-1})-\alpha (\tilde{r}_k^{\top}q_k)-\hat{f}_{k}(q^*)&\le {f}_{k}(q_k)-{f}_{k}(q^*)\\
    \Phi(\lambda_k)-\Phi(\lambda_{k-1})+\alpha (\tilde{r}^\top_k q^*-\tilde{r}_k^{\top}q_k)&\le {f}_{k}(q_k)-{f}_{k}(q^*)
\end{align*}
Take the summation over $K$, we have:
\begin{align*}              
    \Phi(\lambda_{K})+\alpha\sum_{k=1}^{K}(\tilde{r}^\top_k q^*-\tilde{r}_k^{\top}q_k)\le \sum_{k=1}^{K}{f}_{k}(q_k)-{f}_{k}(q^*)
\end{align*}
Based on Lemma \ref{lem: omd_upper_bound}, we have:
\begin{align*}
    \sum_{k=1}^{K}{f}_{k}(q_k)-{f}_{k}(q^*)\le 3.5 \sqrt{\mathcal{C}}\left(\sqrt{\sum_{k=1}^K \|\nabla_k-\nabla_{k-1}\|^2_{2}}+1\right)
\end{align*}
Now, we analyze the upper bound part. $\nabla_k=\nabla {f}_k(q_k), \nabla_{k-1}=\nabla {f}_{k-1}(q_{k-1})$
\begin{align*}
    \nabla_k&=\nabla {f}_k(q_k)=\alpha(-\tilde{r}_k+\Phi'(\lambda_k)\tilde{d}_k)\\
    \nabla_{k-1}&=\nabla {f}_{k-1}(q_{k-1})=\alpha(-\tilde{r}_{k-1}+\Phi'(\lambda_{k-1})\tilde{d}_{k-1})
\end{align*}
Thus:
\begin{align*}
    \|\nabla_k-\nabla_{k-1}\|^2&=\|\nabla{f}_k(q_k)-\nabla{f}_{k-1}(q_{k-1})\|^2\\
    &=\alpha^2\|-(\tilde{r}_k\!-\!\tilde{r}_{k-1})\!+\Phi'(\lambda_k)\tilde{d}_k\!-\!\Phi'(\lambda_{k-1})\tilde{d}_{k-1}\|^2\\
    &=\alpha^2\|\tilde{r}_k-\tilde{r}_{k-1}\|^2 \!+\!\alpha^2\|\Phi'(\lambda_k)\tilde{d}_k\!-\!\Phi'(\lambda_{k-1})\tilde{d}_{k-1}\|^2+2\alpha\|\tilde{r}_k-\tilde{r}_{k-1}\|\|\Phi'(\lambda_k)\tilde{d}_k\!-\!\Phi'(\lambda_{k-1})\tilde{d}_{k-1}\|
\end{align*}
\begin{align*}
    \|\Phi'(\lambda_k)\tilde{d}_k\!-\!\Phi'(\lambda_{k-1})\tilde{d}_{k-1}\|^2\!&=\!\|\Phi'(\lambda_k)\tilde{d}_k\!-\Phi'(\lambda_{k-1})\tilde{d}_k+\Phi'(\lambda_{k-1})\tilde{d}_k-\!\Phi'(\lambda_{k-1})\tilde{d}_{k-1}\|^2\\
    &=\|\tilde{d}_k(\Phi'(\lambda_k)-\Phi'(\lambda_{k-1}))+\Phi'(\lambda_{k-1})(\tilde{d}_k-\tilde{d}_{k-1})\|^2\\
    &\le 2(1+\sqrt{L_{\delta}})^2 SAH\|\Phi'(\lambda_k)-\Phi'(\lambda_{k-1})\|^2+2(1+\sqrt{L_{\delta}})^2SAH\|\Phi'(\lambda_{k-1})\|^2
\end{align*}
First deal with $\|\Phi'(\lambda_{k-1})\|^2$, we define $\Phi(x)=\exp(\beta x)-1$, so we have $\Phi'(\lambda_k)=\beta\exp(\beta\lambda_k)>0, \forall k\in K.$ Because $\exp(x)$ is increasing, we have $\Phi'(\lambda_1)\le\Phi'(\lambda_2)\le...\le\Phi'(\lambda_K)$. Thus, $\forall k\in K$ we obtain:
\begin{align*}
    \|\Phi'(\lambda_{k-1})\|^2\le \|\Phi'(\lambda_{K})\|^2
\end{align*}
For $\|\Phi'(\lambda_k)-\Phi'(\lambda_{k-1})\|^2$, we use $(a-b)^2\le a^2+b^2$ and have: 
\begin{align*}
    \|\Phi'(\lambda_k)-\Phi'(\lambda_{k-1})\|^2&\le \|\Phi'(\lambda_k)\|^2+\|\Phi'(\lambda_{k-1})\|^2\\
    &\le  \|\Phi'(\lambda_{K})\|^2+ \|\Phi'(\lambda_{K})\|^2= 2\|\Phi'(\lambda_{K})\|^2
\end{align*}
Therefore, we have the upper bound:
\begin{align*}
    \|\Phi'(\lambda_k)\tilde{d}_k\!-\!\Phi'(\lambda_{k-1})\tilde{d}_{k-1}\|^2&\le 2(1+\sqrt{L_{\delta}})^2 SAH\|\Phi'(\lambda_k)-\Phi'(\lambda_{k-1})\|^2+2(1+\sqrt{L_{\delta}})^2SAH\|\Phi'(\lambda_{k-1})\|^2\\
    &\le 2(1+\sqrt{L_{\delta}})^2SAH \cdot 2\|\Phi'(\lambda_{K})\|^2+2(1+\sqrt{L_{\delta}})^2SAH \cdot \|\Phi'(\lambda_{K})\|^2\\
    &\le 6(1+\sqrt{L_{\delta}})^2SAH \|\Phi'(\lambda_{K})\|^2
\end{align*}
Therefore, 
\begin{align*}
    &\sqrt{\sum_{k=1}^K \|\nabla_k-\nabla_{k-1}\|^2_2}=\alpha\sqrt{\sum_{k=1}^K \|\tilde{r}_k-\tilde{r}_{k-1}\|^2\!+\!\sum_{k=1}^K\|\Phi'(\lambda_k)\tilde{d}_k\!-\!\Phi'(\lambda_{k-1})\tilde{d}_{k-1}\|^2\!+\!2\sum_{k=1}^K \|\tilde{r}_k-\tilde{r}_{k-1}\|\|\Phi'(\lambda_k)\tilde{d}_k\!-\!\Phi'(\lambda_{k-1})\tilde{d}_{k-1}\|}\\
    &\le \underbrace{\alpha\sqrt{\!\sum_{k=1}^K \!\|\tilde{r}_k-\tilde{r}_{k-1}\|^2\!}\!}_{\text{diff 1}}+\underbrace{\!\alpha \sqrt{\!\sum_{k=1}^K\!\|\Phi'(\lambda_k)\tilde{d}_k\!-\!\Phi'(\lambda_{k-1})\tilde{d}_{k-1}\|^2}\!}_{\text{diff 2}}+\underbrace{\!\alpha\sqrt{\!2\!\sum_{k=1}^K \|\tilde{r}_k-\tilde{r}_{k-1}\|\|\Phi'(\lambda_k)\tilde{d}_k\!-\!\Phi'(\lambda_{k-1})\tilde{d}_{k-1}\|}}_{\text{diff 3}}\\
\end{align*}
For diff 1:
\begin{align*}
    \alpha\sqrt{\!\sum_{k=1}^K \!\|\tilde{r}_k-\tilde{r}_{k-1}\|^2\!}\!\le \alpha \sqrt{(1+\sqrt{L_{\delta}})^2SAHK}
\end{align*}
For diff 2:
\begin{align*}
    \!\alpha \sqrt{\!\sum_{k=1}^K\!\|\Phi'(\lambda_k)\tilde{d}_k\!-\!\Phi'(\lambda_{k-1})\tilde{d}_{k-1}\|^2}\!\le \alpha\sqrt{\sum_{k=1}^K 6(1+\sqrt{L_{\delta}})^2SAH\|\Phi'(\lambda_K)\|^2}=\alpha\Phi'(\lambda_K)\sqrt{6(1+\sqrt{L_{\delta}})^2SAHK}
\end{align*}
For diff 3:
\begin{align*}
    \sqrt{2\sum_{k=1}^K \|\tilde{r}_k-\tilde{r}_{k-1}\|\|\Phi'(\lambda_k)\tilde{d}_k-\Phi'(\lambda_{k-1})\tilde{d}_{k-1}\|}&\le \sqrt{2\sum_{k=1}^K \sqrt{(1+\sqrt{L_{\delta}})SAH}\|\Phi'(\lambda_k)\tilde{d}_k-\Phi'(\lambda_{k-1})\tilde{d}_{k-1}\|}\\
    &\le \sqrt{2\sum_{k=1}^K \sqrt{(1+\sqrt{L_{\delta}})SAH}\sqrt{6(1+\sqrt{L_{\delta}})^2SAH}\Phi'(\lambda_K)}\\
    &=\sqrt{\sum_{k=1}^K SAH\sqrt{24(1+\sqrt{L_{\delta}})^3}\Phi'(\lambda_K)}\\
    &\le \sqrt{\sum_{k=1}^K SAH\sqrt{36(1+\sqrt{L_{\delta}})^4}\Phi'(\lambda_K)}\\
    &= \sqrt{\sum_{k=1}^K 6(1+\sqrt{L_{\delta}})^2SAH\Phi'(\lambda_K)}=\sqrt{\Phi'(\lambda_K)}\sqrt{6(1+\sqrt{L_{\delta}})^2SAHK}\\
    &\le \left(\Phi'(\lambda_K)+1\right)\sqrt{6(1+\sqrt{L_{\delta}})^2SAHK}
\end{align*}
where the last inequality holds for $\sqrt{a}\le a+1, \forall a>0$. Therefore, we have the upper bound of $\sqrt{\sum_{k=1}^K \|\nabla_k-\nabla_{k-1}\|^2_2}$:
\begin{align*}
    &\sqrt{\sum_{k=1}^K \|\nabla_k-\nabla_{k-1}\|^2_2}\le \underbrace{\alpha\sqrt{\!\sum_{k=1}^K \!\|\tilde{r}_k-\tilde{r}_{k-1}\|^2\!}\!}_{\text{diff 1}}+\underbrace{\!\alpha \sqrt{\!\sum_{k=1}^K\!\|\Phi'(\lambda_k)\tilde{d}_k\!-\!\Phi'(\lambda_{k-1})\tilde{d}_{k-1}\|^2}\!}_{\text{diff 2}}+\underbrace{\!\alpha\sqrt{\!2\!\sum_{k=1}^K \|\tilde{r}_k-\tilde{r}_{k-1}\|\|\Phi'(\lambda_k)\tilde{d}_k\!-\!\Phi'(\lambda_{k-1})\tilde{d}_{k-1}\|}}_{\text{diff 3}}\\
    &\le \underbrace{\alpha\sqrt{(1+\sqrt{L_{\delta}})^2SAHK}}_{\text{diff 1}}+\underbrace{\alpha\Phi'(\lambda_K)\sqrt{6(1+\sqrt{L_{\delta}})^2SAHK}}_{\text{diff 2}}+\underbrace{\alpha \sqrt{6(1+\sqrt{L_{\delta}})^2SAHK}+\alpha\Phi'(\lambda_K)\sqrt{6(1+\sqrt{L_{\delta})^2SAHK}}}_{\text{diff 3}}\\
    &\le 2\alpha\sqrt{6(1+\sqrt{L_{\delta}})^2SAHK}+2\alpha\Phi'(\lambda_K)\sqrt{6(1+\sqrt{L_{\delta}})^2SAHK}=2\alpha(1+\sqrt{L_{\delta}})\sqrt{6SAHK}\left(1+\Phi'(\lambda_K)\right)\\
    &= \frac{\sqrt{6SAHK}(1+\Phi'(\lambda_K))}{SAH}
\end{align*}
where the last equality holds for choosing $\alpha=\frac{1}{2(1+\sqrt{L_{\delta}})SAH}$.
\subsection{Proof of Lemma \ref{lem: term2_bound}}\label{sec: proof_term2_term1}
\begin{customlemma}{\ref{lem: term2_bound}}
Based on Lemma \ref{lem: omd_upper_bound}, \ref{lem: grad_bound}, the following upper bound holds:
\begin{align*}
    \sum_{k=1}^K \big[\tilde{r}_k^\top q^* - \tilde{r}_k^\top {q}_k\big] 
    \le 2(1+\sqrt{L_{\delta}})(SAH+4\sqrt{\mathcal{C}}\sqrt{6SAHK})
\end{align*}
\end{customlemma}
\textit{Proof:} Based on Lemma \ref{lem: omd_upper_bound}, \ref{lem: grad_bound}, we have the following relation:
\begin{align*}              
\Phi(\lambda_{K})+\alpha\sum_{k=1}^{K}(\tilde{r}^\top_k q^*-\tilde{r}_k^{\top}q_k)&\le \sum_{k=1}^{K}\hat{f}_{k}(q_k)-\hat{f}_{k}(q^*)\le 3.5 \sqrt{\mathcal{C}}\left(\sqrt{\sum_{k=1}^K \|\nabla_k-\nabla_{k-1}\|^2_{2}}+1\right)\\
&\le 3.5 \sqrt{\mathcal{C}}\left(\sqrt{\sum_{k=1}^K \|\nabla_k-\nabla_{k-1}\|^2_{2}}+1\right)\\
&\le 3.5 \sqrt{\mathcal{C}}\left( \frac{\sqrt{6SAHK}(1+\Phi'(\lambda_K))}{SAH}\right)
\end{align*}

Now we can have:
\begin{align*}
    \Phi(\lambda_{K})+\alpha \sum_{k=1}^K [\tilde{r}^\top_k q^*-\tilde{r}^\top_k q_k]&\le 3.5 \sqrt{\mathcal{C}}\left( \frac{\sqrt{6SAHK}(1+\Phi'(\lambda_K))}{SAH}\right)\\
    \Phi(\lambda_{K})+\alpha\sum_{k=1}^K [\tilde{r}^\top_k q^*-\tilde{r}^\top_k q_k]&\le 3.5 \sqrt{\mathcal{C}}\left( \frac{\sqrt{6SAHK}(1+\Phi'(\lambda_K))}{SAH}\right)\\
    \exp(\beta \lambda_K)-1+\alpha\sum_{k=1}^K [\tilde{r}^\top_k q^*-\tilde{r}^\top_k q_k]
    &\le 4 \sqrt{\mathcal{C}}\left( \frac{\sqrt{6SAHK}}{SAH}\right)+4 \sqrt{\mathcal{C}}\left( \frac{\beta\exp(\beta \lambda_K)\sqrt{6SAHK}}{SAH}\right)\\
    \alpha\sum_{k=1}^K [\tilde{r}^\top_k q^*-\tilde{r}^\top_k q_k]&\le \exp(\beta \lambda_K)\left(\frac{\beta\cdot4\sqrt{\mathcal{C}}\sqrt{6SAHK}}{SAH}-1\right)+ 4 \sqrt{\mathcal{C}}\left( \frac{\sqrt{6SAHK}}{SAH}\right)+1\\
    \sum_{k=1}^K [\tilde{r}^\top_k q^*-\tilde{r}^\top_k q_k]&\le\exp(\beta \lambda_K)\left(\frac{\beta\cdot4\sqrt{\mathcal{C}}\sqrt{6SAHK}}{\alpha SAH}-\frac{1}{\alpha}\right)+ 4 \sqrt{\mathcal{C}}\left( \frac{\sqrt{6SAHK}}{\alpha SAH}\right)+\frac{1}{\alpha}\\
    &=2(1+\sqrt{L_{\delta}})SAH+8(1+\sqrt{L_{\delta}})\sqrt{\mathcal{C}}\sqrt{6SAHK}\\
    &=2(1+\sqrt{L_{\delta}})(SAH+4\sqrt{\mathcal{C}}\sqrt{6SAHK})
\end{align*}
where the last equality obtained by $\alpha=\frac{1}{2(1+\sqrt{L_{\delta}})SAH}, \beta\le\frac{SAH}{4\sqrt{\mathcal{C}}\sqrt{6SAHK}}$.


\subsection{Proof of Lemma \ref{lem: term1_bound}}\label{sec: term1_bound_proof}
\begin{customlemma}{\ref{lem: term1_bound}}
Under the stochastic rewards setting, with probability at least 1-$2\delta$, we have:
\begin{align*}
    \sum_{k=1}^K \big[\bar{r}^\top q^* - \tilde{r}_k^\top q^*\big] 
    \le SAH\sqrt{\frac{K}{2}\ln(\frac{2}{\delta})}
\end{align*}
\end{customlemma}
\textit{Proof:} Based on the optimistic estimates, we know that: $\tilde{r}_k=\hat{r}_{k-1}+\beta^{r}_{k-1}(s,a)$. Then, we have the following relationship:
\begin{align*}
    \bar{r}^\top q^* - \tilde{r}_k^\top q^*&=\bar{r}^\top q^* - \hat{r}_{k-1}^\top q^*-\beta^{r}_{k-1}(s,a)q^*\\
    &\le \bar{r}^\top q^* - \hat{r}_{k-1}^\top q^*
\end{align*}
where the inequality holds for $\beta^{r}_{k-1}(s,a)$ and $q^*$ is non-negative.
Thus, we have the following relationship easily:
\begin{align*}
     \sum_{k=1}^K \big[\bar{r}^\top q^* - \tilde{r}_k^\top q^*\big] \le \sum_{k=1}^K \big[\bar{r}^\top q^* - \hat{r}_{k-1}^\top q^*\big]
\end{align*}
Based on norm property, and for each episode $k$, we have:
\begin{align}
    \nonumber \left|\bar{r}^\top q^*-\hat{r}^\top_{k-1} q^*\right|\le \|\bar{r}-\hat{r}_{k-1}\|_\infty\cdot \|q^*\|_1
\end{align}
and from the definition of reward $r$ and \textbf{Fact \ref{fact:convex set bound}}, we know that $\|\bar{r}-\hat{r}_{k-1}\|_\infty\le1$ and $\|q^*\|_1\le SAH$. Thus:
\begin{align}
    \nonumber \left|\bar{r}^\top q^*-\hat{r}^\top_{k-1} q^*\right|\le SAH
\end{align}
If the objective costs are stochastic under Lemma \ref{lem:good event}, and by the Azuma-Hoeffding inequality we have:
\begin{align}
    \nonumber \Pr\left[\left|\sum_{k=1}^{K-1}\bar{r}^\top q^*-\sum_{k=1}^{K-1} \hat{r}^\top_{k-1}q^*\right|\ge M\right]\le \delta= 2e^{(-\frac{2M^2}{(K-1)(SAH)^2})}
\end{align}
Thus by setting $M$ as:
\begin{align}
    \nonumber M=SAH\sqrt{\frac{(K-1)}{2}\ln(\frac{2}{\delta})}
\end{align}
we have when the objective costs are stochastic, with the probability  at least $\text{1}-2\delta$:
\begin{align}
    \nonumber \left|\sum_{k=1}^{K-1}\bar{r}^\top q^*-\sum_{k=1}^{K-1} \hat{r}^\top_{k-1}q^*\right|\le SAH\sqrt{\frac{(K-1)}{2}\ln(\frac{2}{\delta})}
\end{align}
Then, by the absolute value property, we can obtain:
\begin{align*}
    \sum_{k=1}^K \big[\bar{r}^\top q^* - \tilde{r}_k^\top q^*\big]&\le \sum_{k=1}^{K-1}\bar{r}^\top q^*-\sum_{k=1}^{K-1} \hat{r}^\top_{k-1}q^*+\bar{r}^\top q^*\\
    &\le \left|\sum_{k=1}^{K-1}\bar{r}^\top q^*-\sum_{k=1}^{K-1} \hat{r}^\top_{k-1}q^*\right|+|\bar{r}^\top q^*|\\
    &\le SAH\sqrt{\frac{(K-1)}{2}\ln(\frac{2}{\delta})}+SAH
\end{align*}
\subsection{Proof of Lemma \ref{lem: violation_bound}}\label{sec: violation_whole_proof}
\begin{customlemma}{\ref{lem: violation_bound}}
Based on Lemma \ref{lem: omd_upper_bound}, \ref{lem: grad_bound}. Then, the following upper bound holds:
\begin{align*}
    \sum_{k=1}^K \big[\mathbf{d}_k^\top q_k\big]^+ 
    \le {16(1+\sqrt{L}_{\delta})\sqrt{\mathcal{C}}\sqrt{6SAHK}}\ln\left(K+8 \sqrt{\mathcal{C}}\left( \frac{\sqrt{6SAHK}}{SAH}\right)+2\right)
\end{align*}
where $\mathbf{d}_k=\tilde{d}_{k}$ under stochastic setting and $\mathbf{d}_k=d_{k}$ in adversarial setting.
\end{customlemma}
\textit{Proof:} To begin with, we deal with the stochastic setting first, which is $\mathbf{d}_k=\tilde{d}_{k}$. Since we have the fact that min and max value for regret that: $-|\tilde{r}^\top_k q^*-\tilde{r}_k^{\top}q_k|\le(\tilde{r}^\top_k q^*-\tilde{r}_k^{\top}q_k)\le |\tilde{r}^\top_k q^*-\tilde{r}_k^{\top}q_k|.$ And $|\tilde{r}^\top_k q^*-\tilde{r}_k^{\top}q_k|\le \|\tilde{r}_k\|_2 \|q^*-q_k\|_2=(1+\sqrt{L_{\delta}})SAH$. Thus, we have:
\begin{align*}
    \sum_{k=1}^{K}(\tilde{r}^\top_k q^*-\tilde{r}_k^{\top}q_k)\ge  \sum_{k=1}^{K}-|\tilde{r}^\top_k q^*-\tilde{r}_k^{\top}q_k|=-(1+\sqrt{L_{\delta}})SAHK
\end{align*}
Therefore, we have:
\begin{align*}
    \Phi(\lambda_{K})+\alpha\sum_{k=1}^K [\tilde{r}^\top_k q^*-\tilde{r}^\top_k q_k]&\le 3.5 \sqrt{\mathcal{C}}\left( \frac{\sqrt{6SAHK}(1+\Phi'(\lambda_K))}{SAH}\right)\\
    \Phi(\lambda_{K})+\alpha(-(1+\sqrt{L_{\delta}})SAHK)&\le 3.5 \sqrt{\mathcal{C}}\left( \frac{\sqrt{6SAHK}(1+\Phi'(\lambda_K))}{SAH}\right)\\
    \exp(\beta \lambda_K)-1+\alpha(-(1+\sqrt{L_{\delta}})SAHK)&\le 4 \sqrt{\mathcal{C}}\left( \frac{\sqrt{6SAHK}}{SAH}\right)+4 \sqrt{\mathcal{C}}\left( \frac{\beta\exp(\beta \lambda_K)\sqrt{6SAHK}}{SAH}\right)\\
    \exp(\beta \lambda_K)\left(1- \beta\cdot 4 
    \sqrt{\mathcal{C}}\left( \frac{\sqrt{6SAHK}}{SAH}\right)\right)&\le \alpha((1+\sqrt{L_{\delta}})SAHK)+4 \sqrt{\mathcal{C}}\left( \frac{\sqrt{6SAHK}}{SAH}\right)+1\\
    \exp(\beta \lambda_K)&\le \frac{\alpha((1+\sqrt{L_{\delta}})SAHK)+4 \sqrt{\mathcal{C}}\left( \frac{\sqrt{6SAHK}}{SAH}\right)+1}{\left(1- \beta\cdot 4 \sqrt{\mathcal{C}}\left( \frac{\sqrt{6SAHK}}{SAH}\right)\right)}
\end{align*}
where the last inequality holds for choosing $\beta$ such that $1- \beta\cdot 4 \sqrt{\mathcal{C}}\left( \frac{\sqrt{6SAHK}}{SAH}\right)>0$:
\begin{align*}
    \beta\cdot 4 \sqrt{\mathcal{C}}\left( \frac{\sqrt{6SAHK}}{SAH}\right)<1 \rightarrow
    \beta<\frac{SAH}{4\sqrt{\mathcal{C}}\sqrt{6SAHK}}
\end{align*}
which the choosing of $\beta$ match when we prove the regret bound in that case we choose $\beta\le\frac{SAH}{4\sqrt{\mathcal{C}}\sqrt{6SAHK}}$. Here, we let $\beta=\frac{SAH}{8\sqrt{\mathcal{C}}\sqrt{6SAHK}}$ and we have:
\begin{align*}
    \exp(\beta \lambda_K)&\le \frac{\alpha((1+\sqrt{L_{\delta}})SAHK)+4 \sqrt{\mathcal{C}}\left( \frac{\sqrt{6SAHK}}{SAH}\right)+1}{\left(1- \beta\cdot 4 \sqrt{\mathcal{C}}\left( \frac{\sqrt{6SAHK}}{SAH}\right)\right)}\\
    &= \frac{\alpha((1+\sqrt{L_{\delta}})SAHK)+4 \sqrt{\mathcal{C}}\left( \frac{\sqrt{6SAHK}}{SAH}\right)+1}{\frac{1}{2}}\\
    &=2\alpha((1+\sqrt{L_{\delta}})SAHK)+8 \sqrt{\mathcal{C}}\left( \frac{\sqrt{6SAHK}}{SAH}\right)+2\\
    &=K+8 \sqrt{\mathcal{C}}\left( \frac{\sqrt{6SAHK}}{SAH}\right)+2
\end{align*}
where the last inequality holds for take $\alpha=\frac{1}{2(1+\sqrt{L_{\delta}})SAH}$. Then, recall the definition $\lambda_k = \lambda_{k-1}+\alpha[\tilde{d}_k^\top q_k]^+$, so $\lambda_K = \alpha \sum_{k=1}^K [\tilde{d}_k^\top q_k]^+$. Thus, take the log operation and we have:
\begin{align*}
    \beta\lambda_K &\le \ln\left(K+8 \sqrt{\mathcal{C}}\left( \frac{\sqrt{6SAHK}}{SAH}\right)+2\right)\\
    \lambda_K&\le \frac{1}{\beta}\ln\left(K+8 \sqrt{\mathcal{C}}\left( \frac{\sqrt{6SAHK}}{SAH}\right)+2\right)\\
    \alpha \sum_{k=1}^K [\tilde{d}_k^\top q_k]^+&\le \frac{1}{\beta}\ln\left(K+8 \sqrt{\mathcal{C}}\left( \frac{\sqrt{6SAHK}}{SAH}\right)+2\right)\\
    \sum_{k=1}^K [\tilde{d}_k^\top q_k]^+&\le \frac{1}{\alpha\beta}\ln\left(K+8 \sqrt{\mathcal{C}}\left( \frac{\sqrt{6SAHK}}{SAH}\right)+2\right)\\
    &={16(1+\sqrt{L}_{\delta})\sqrt{\mathcal{C}}\sqrt{6SAHK}}\ln\left(K+8 \sqrt{\mathcal{C}}\left( \frac{\sqrt{6SAHK}}{SAH}\right)+2\right)
\end{align*}
The proof of $\mathbf{d}_k=d_{k}$ is almost same with the situation that $\mathbf{d}_k=\tilde{d}_{k}$. Hence, we can directly have:
\begin{align*}
    \sum_{k=1}^K [{d}_k^\top q_k]^+&\le {16(1+\sqrt{L}_{\delta})\sqrt{\mathcal{C}}\sqrt{6SAHK}}\ln\left(K+8 \sqrt{\mathcal{C}}\left( \frac{\sqrt{6SAHK}}{SAH}\right)+2\right)
\end{align*}
\section{Main Theoretical Analysis}
\subsection{Proof of Theorem \ref{thm: main}}\label{sec: proof_theorem1}
\textit{Proof:} In this section, our proof is based on the roadmap described in Figure \ref{fig: proof_flow}. The context is divided into Regret and Violation parts, respectively.
\begin{figure}[H]
    \centering
    \includegraphics[width=0.75\linewidth]{final_proof.png}
    \caption{Proof Roadmap of Theorem \ref{thm: main}}\label{fig: proof_flow}
\end{figure}
\subsubsection{Regret Bound Proof} 
Recall the definition of Regret:
\begin{align*}
    \text{Regret}(K) 
    &= \sum_{k=1}^K \left[V^{\pi^*}(\bar{r},p) - V^{\pi_k}(\bar{r},p)\right] \nonumber \\
    &= \underbrace{\sum_{k=1}^K \left[V^{\pi_k}(\tilde{r}_k, \tilde{p}_k) - V^{\pi_k}(\bar{r}, p)\right]}_{\text{Estimation Error}}+ \underbrace{\sum_{k=1}^K \left[V^{\pi^*}(\bar{r}, p) - V^{\pi_k}(\tilde{r}_k, \tilde{p}_k)\right]}_{\text{Optimization Error}}
\end{align*}
We can bound the ``Estimation Error'' term by using Lemma \ref{lem: est_error}. Now, let's go through the details of the ``Optimization Error'' term. We will first decompose it as follows:
\begin{align*}
    \sum_{k=1}^K \left[V^{\pi^*}(\bar{r}, p) - V^{\pi_k}(\tilde{r}_k, \tilde{p}_k)\right] = \sum_{k=1}^K \left[\mathbb{E}[\bar{r}^\top q^*] - \mathbb{E}[\tilde{r}_k^\top {q}_k]\right]
    = \underbrace{\sum_{k=1}^K \left[\mathbb{E}[\bar{r}^\top q^*] - \mathbb{E}[\tilde{r}_k^\top q^*]\right]}_{\text{Term 1}} + \underbrace{\sum_{k=1}^K \left[\mathbb{E}[\tilde{r}_k^\top q^*] - \mathbb{E}[\tilde{r}_k^\top {q}_k]\right]}_{\text{Term 2}}.
\end{align*}
Thus, it's clear that we can use Lemma \ref{lem: term2_bound} and Lemma \ref{lem: term1_bound} to bound these two terms, respectively. Therefore, the Regret is bounded in the following inequality:
\begin{align*}
    \text{Regret}(K) 
    &= \sum_{k=1}^K \left[V^{\pi^*}(\bar{r},p) - V^{\pi_k}(\bar{r},p)\right]=\sum_{k=1}^K \left[V^{\pi_k}(\tilde{r}_k, \tilde{p}_k) - V^{\pi_k}(\bar{r}, p)\right]+ \sum_{k=1}^K \left[V^{\pi^*}(\bar{r}, p) - V^{\pi_k}(\tilde{r}_k, \tilde{p}_k)\right] \\
    &= \underbrace{\sum_{k=1}^K \left[V^{\pi_k}(\tilde{r}_k, \tilde{p}_k) - V^{\pi_k}(\bar{r}, p)\right]}_{\text{Lemma \ref{lem: est_error}}}+ \underbrace{\sum_{k=1}^K \left[\mathbb{E}[\bar{r}^\top q^*] - \mathbb{E}[\tilde{r}_k^\top q^*]\right]}_{\text{Lemma \ref{lem: term1_bound}}} + \underbrace{\sum_{k=1}^K \left[\mathbb{E}[\tilde{r}_k^\top q^*] - \mathbb{E}[\tilde{r}_k^\top {q}_k]\right]}_{\text{Lemma \ref{lem: term2_bound}}}\\
    &\le \Tilde{\mathcal{O}}\big(\!\sqrt{\mathcal{N} SAH^3K} \!\!+\!\! S^2AH^3\big)+SAH\sqrt{\frac{(K-1)}{2}\ln(\frac{2}{\delta})}+SAH+2(1+\sqrt{L_{\delta}})(SAH+4\sqrt{\mathcal{C}}\sqrt{6SAHK})\\
    &\le \Tilde{\mathcal{O}}\big(\!\sqrt{\mathcal{N} SAH^3K} \!\!+\!\! S^2AH^3\big)+SAH\sqrt{\frac{(K-1)}{2}\ln(\frac{2}{\delta})}+3(1+\sqrt{L_{\delta}})(SAH+4\sqrt{\mathcal{C}}\sqrt{6SAHK})
\end{align*}
\subsubsection{Violation Bound Proof}
\textbf{Stochastic setting.} Recall the definition of stochastic Violation:
\begin{align*}
    \text{Violation}(K) 
    = \sum_{k=1}^K \left[V^{\pi_k}(\bar{d}, p)\right]^+ \nonumber 
    = \underbrace{\sum_{k=1}^K \left[V^{\pi_k}(\bar{d}, p) - V^{\pi_k}(\tilde{d}_k, \tilde{p}_k)\right]^+}_{\text{Estimation Error}} + \underbrace{\sum_{k=1}^K \left[V^{\pi_k}(\tilde{d}_k, \tilde{p}_k)\right]^+}_{\text{Optimization Error}}
\end{align*}
Similarly, we first use Lemma \ref{lem: est_error} to bound ``Estimation Error'' term under the stochastic setting. Next, we will deal with the optimization error term by Lemma \ref{lem: violation_bound}:
\begin{align*}
    \sum_{k=1}^K \left[V^{\pi_k}(\tilde{d}_k, \tilde{p}_k)\right]^+=\sum_{k=1}^K [\mathbb{E}[\tilde{d}^\top_k q_k]]^+\le  {16(1+\sqrt{L}_{\delta})\sqrt{\mathcal{C}}\sqrt{6SAHK}}\ln\left(K+8 \sqrt{\mathcal{C}}\left( \frac{\sqrt{6SAHK}}{SAH}\right)+2\right)
\end{align*}
Hence, the whole stochastic Violation is bounded as:
\begin{align*}
    \text{Violation}(K)&= \sum_{k=1}^K \left[V^{\pi_k}(\bar{d}, p)\right]^+ \nonumber 
    = \underbrace{\sum_{k=1}^K \left[V^{\pi_k}(\bar{d}, p) - V^{\pi_k}(\tilde{d}_k, \tilde{p}_k)\right]^+}_{\text{Lemma \ref{lem: est_error}}} + \underbrace{\sum_{k=1}^K \left[V^{\pi_k}(\tilde{d}_k, \tilde{p}_k)\right]^+}_{\text{Lemma \ref{lem: violation_bound}}}\\
    &\le \Tilde{\mathcal{O}}\big(\!\sqrt{\mathcal{N} SAH^3K} \!\!+\!\! S^2AH^3\big)+{16(1+\sqrt{L}_{\delta})\sqrt{\mathcal{C}}\sqrt{6SAHK}}\ln\left(K+8 \sqrt{\mathcal{C}}\left( \frac{\sqrt{6SAHK}}{SAH}\right)+2\right)
\end{align*}

\textbf{Adversarial setting.} When we deal with adversarial constraint, by the definition:
\begin{align*}
    \text{Violation}(K)=\sum_{k=1}^K [V^{\pi_k}(d_k, p)]^+=\underbrace{\sum_{k=1}^K \left[V^{\pi_k}(d_k, p)-V^{\pi_k}(d_k, \tilde{p}_k)\right]^+}_{\text{Estimation Error}}+\underbrace{\sum_{k=1}^K [V^{\pi_k}(d_k, \tilde{p}_k)]^+}_{\text{Optimization Error}}
\end{align*}
In this situation, we proved an additional estimation error bound in Lemma \ref{lem: est_error} with adversarial case and with Lemma \ref{lem: violation_bound}, the following bound can be obtained:
\begin{align*}
    \text{Violation}(K)&= \sum_{k=1}^K \left[V^{\pi_k}(d_k, p)\right]^+ \nonumber 
    = \underbrace{\sum_{k=1}^K \left[V^{\pi_k}(d_k, p) - V^{\pi_k}(d_k, \tilde{p}_k)\right]^+}_{\text{Lemma \ref{lem: est_error}}} + \underbrace{\sum_{k=1}^K \left[V^{\pi_k}(d_k, \tilde{p}_k)\right]^+}_{\text{Lemma \ref{lem: violation_bound}}}\\
    &\le \Tilde{\mathcal{O}}\big(\!\sqrt{\mathcal{N} SAH^3K} \!\!+\!\! S^2AH^3\big)+{16(1+\sqrt{L}_{\delta})\sqrt{\mathcal{C}}\sqrt{6SAHK}}\ln\left(K+8 \sqrt{\mathcal{C}}\left( \frac{\sqrt{6SAHK}}{SAH}\right)+2\right)
\end{align*}

\subsection{Proof of Remark \ref{remark: fix_main}}\label{sec: proof_fixmain}
If we fix the reward and constraint, where $\tilde{r}_k=\tilde{r}_{k-1}, \tilde{d}_k=\tilde{d}_{k-1}$, then we have the following relationship adapted from Lemma \ref{lem: grad_bound}:
\begin{align*}
    \sqrt{\sum_{k=1}^K \|\nabla_k-\nabla_{k-1}\|^2}&\le \alpha\sqrt{\sum_{k=1}^K \Phi'(\lambda_k)\tilde{d}_k-\Phi'(\lambda_{k-1})\tilde{d}_{k-1}}\\
    &\le \alpha(1+\sqrt{L}_{\delta})\Phi'(\lambda_K)\sqrt{2SAHK}
\end{align*}
\textbf{Tighter Bound Analysis.} Similar to proof in Appendix \ref{sec: proof_term2_term1} we can obtain:
\begin{align*}
    \Phi(\lambda_{K})+\alpha \sum_{k=1}^K [\tilde{r}^\top_k q^*-\tilde{r}^\top_k q_k]&\le 3.5\sqrt{\mathcal{C}}\left(\frac{\Phi'(\lambda_K)\sqrt{2SAHK}}{2SAH}\right)\\
    \exp(\beta \lambda_K)-1+\alpha\sum_{k=1}^K [\tilde{r}^\top_k q^*-\tilde{r}^\top_k q_k]&\le \beta\exp(\beta \lambda_K)\cdot 3.5\sqrt{\mathcal{C}}\left(\frac{\sqrt{2SAHK}}{2SAH}\right) \\
    \alpha\sum_{k=1}^K [\tilde{r}^\top_k q^*-\tilde{r}^\top_k q_k]&\le \exp(\beta \lambda_K)\left(\beta \cdot 3.5\sqrt{\mathcal{C}}\left(\frac{\sqrt{2SAHK}}{2SAH}\right)-1\right)+1\\
    \sum_{k=1}^K [\tilde{r}^\top_k q^*-\tilde{r}^\top_k q_k]&\le\exp(\beta \lambda_K)\left(\frac{\beta \cdot 3.5\sqrt{\mathcal{C}}\left(\frac{\sqrt{2SAHK}}{2SAH}\right)}{\alpha}-\frac{1}{\alpha}\right)+\frac{1}{\alpha}\\
    &\le 2(1+\sqrt{L}_{\delta})SAH
\end{align*}
where the last equality obtained by $\alpha=\frac{1}{2(1+\sqrt{L}_{\delta})SAH}, \beta\le\frac{2SAH}{3.5\sqrt{\mathcal{C}}\sqrt{2SAHK}}$. Now, we clearly prove a $\mathcal{O}(1)$ bound for $\sum_{k=1}^K [\tilde{r}^\top_k q^*-\tilde{r}^\top_k q_k]$.

\section{Useful Lemmas}
\begin{lemma}[Lemma E.15 of \cite{dann2017unifying}] \label{lemma:value difference lemma}
Consider two MDPs $\mathcal{M}_1=(\mathcal{S}, \mathcal{A}, \{p^1_h\}_{h=1}^H, \{r_h^1\}_{h=1}^H)$ and $\mathcal{M}_2=(\mathcal{S}, \mathcal{A}, \{p^2_h\}_{h=1}^H, \{r_h^2\}_{h=1}^H)$. For any policy $\pi$ and $s,h$, the following relation holds.
\begin{align}
\begin{aligned}
    &V_h^{\pi}(s;r^1, p^1) - V_h^{\pi}(s;r^2, p^2)\\
    &=\mathbb{E}\left[ \sum_{h'=h}^H r_h^1(s_h,a_h) - r_h^2(s_h,a_h) + (p_h^1 - p_h^2)(\cdot\mid s_h,a_h) V_{h+1}^\pi(\cdot;r^1, p^1) \mid s_h=s,\pi,p^2\right]
\end{aligned}
\end{align}
where $(p_h^1 - p_h^2)(\cdot\mid s_h,a_h) V_{h+1}^\pi(\cdot;r^1, p^1) = \sum_{s\in\mathcal{S}} (p_h^1 - p_h^2)(s'\mid s_h,a_h)V_{h+1}^\pi(s';r^1, p^1)$.
\end{lemma}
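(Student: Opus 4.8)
The plan is to establish the identity by backward induction on $h$, running from $h = H+1$ down to $h$, using only the one-step Bellman expansion of the policy-value function together with an add-and-subtract trick applied to the next-step value. No probabilistic concentration or optimization is involved; the whole content is a careful Bellman manipulation.

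For the base case $h = H+1$, by convention $V_{H+1}^\pi(s; r^1, p^1) = V_{H+1}^\pi(s; r^2, p^2) = 0$, and the sum on the right-hand side is empty, so the identity holds trivially. For the inductive step, suppose the claim holds at step $h+1$ for every state. I would write the Bellman expansions
\[
V_h^\pi(s; r^i, p^i) = \sum_{a} \pi_h(a\mid s)\Bigl[\, r_h^i(s,a) + \sum_{s'} p_h^i(s'\mid s,a)\, V_{h+1}^\pi(s'; r^i, p^i) \,\Bigr], \qquad i \in \{1,2\},
\]
subtract the $i=2$ line from the $i=1$ line, and in the transition term add and subtract $\sum_{s'} p_h^2(s'\mid s,a)\, V_{h+1}^\pi(s'; r^1, p^1)$. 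This decomposes the difference (under $a \sim \pi_h(\cdot\mid s)$) into three pieces: the reward gap $r_h^1(s,a) - r_h^2(s,a)$; the transition-mismatch term $(p_h^1 - p_h^2)(\cdot\mid s,a)\, V_{h+1}^\pi(\cdot; r^1, p^1)$; and a residual $\sum_{s'} p_h^2(s'\mid s,a)\bigl(V_{h+1}^\pi(s'; r^1, p^1) - V_{h+1}^\pi(s'; r^2, p^2)\bigr)$.

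To the value difference inside the residual I would apply the inductive hypothesis. Observing that sampling $a \sim \pi_h(\cdot\mid s)$ followed by $s' \sim p_h^2(\cdot\mid s,a)$ is precisely one step of the trajectory law induced by $\pi$ under the transitions $p^2$ started at $s_h = s$, the nested expectation $\sum_a \pi_h(a\mid s)\sum_{s'} p_h^2(s'\mid s,a)\, \mathbb{E}[\,\cdots \mid s_{h+1}=s', \pi, p^2\,]$ collapses to $\mathbb{E}[\,\sum_{h'=h+1}^H(\cdots) \mid s_h = s, \pi, p^2\,]$; meanwhile the reward gap and transition-mismatch pieces, taken in expectation over $a\sim\pi_h(\cdot\mid s)$, are exactly the $h'=h$ summand of that same conditional expectation. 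Adding them together produces the sum from $h'=h$ to $H$ and closes the induction.

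The step demanding the most care is the bookkeeping in the residual term: one must keep track that the trajectory distribution on the right-hand side is governed by the second kernel $p^2$, even though every value function appearing inside the summand belongs to the first MDP, and that the policy $\pi$ is common to both MDPs so that $\pi_h(a\mid s)$ factors out of both expansions cleanly. Apart from this asymmetry, the argument is routine. An alternative route would be to unroll both value functions as expectations over full trajectories --- under $p^1$ and $p^2$ respectively --- and telescope term by term; I prefer the backward induction here because it keeps the asymmetry between the two transition kernels the most transparent.
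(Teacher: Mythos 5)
Your proof is correct. The paper does not prove this lemma at all---it is imported verbatim as Lemma E.15 of \cite{dann2017unifying} and used as a black box---so there is no in-paper argument to compare against; your backward induction with the add-and-subtract of $\sum_{s'} p_h^2(s'\mid s,a)\,V_{h+1}^\pi(s';r^1,p^1)$ is the standard derivation and matches the one in the cited reference. The one point worth noting is that the summand in the paper's display is written with index $h$ where it should read $h'$ (i.e., $r_{h'}^1(s_{h'},a_{h'})$ and $(p_{h'}^1-p_{h'}^2)(\cdot\mid s_{h'},a_{h'})$); your argument implicitly uses the corrected indexing, which is the intended statement.
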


\begin{lemma}[Lemma D.5 of \cite{liu2021learning}]
\label{lemma:liu-lemma D.5}
With probability at least $1-\delta$,
\begin{align}
\begin{aligned}
    &\sum_{k=1}^K \sum_{h=1}^H 
    \sum_{(s,a)}\frac{q_h^{\pi_k}(s,a)}{\sqrt{n_h^{k-1}(s, a)\vee 1}} \leq 6HSA + 2H\sqrt{SAK} + 2HSA\ln K  + 5\ln\frac{2HK}{\delta}\\
    &\sum_{k=1}^K \sum_{h=1}^H \sum_{(s,a)}\frac{q_h^{\pi_k}(s,a)}{n_h^{k-1}(s,a)\vee 1} \leq 4HSA + 2HSA\ln K + 4\ln\frac{2HK}{\delta}
\end{aligned}
\end{align}
where $q_h^{\pi_k}(s,a)=\text{\rm Pr}(s_h=s, a_h=a \mid s_1,\pi_k,p)$.
\end{lemma}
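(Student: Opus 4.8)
\textit{Proof sketch.} The plan is to prove both bounds by fixing a triple $(s,a,h)$, controlling the random visit count $n_h^{k-1}(s,a)$ in terms of its predictable analogue, and then aggregating over $(s,a)$ and $h$. Fix $(s,a,h)$ and write $X_k=\mathds{1}\{s_h^k=s,\,a_h^k=a\}$, so that $n_h^{k-1}(s,a)=\sum_{k'<k}X_{k'}$; let $\mathcal{F}_{k-1}$ be the $\sigma$-field generated by the first $k-1$ episodes, under which $\pi_k$ is determined, so that $\mathbb{E}[X_k\mid\mathcal{F}_{k-1}]=q_h^{\pi_k}(s,a)=:p_k$ and, crucially, the factors $1/\sqrt{n_h^{k-1}(s,a)\vee1}$ and $1/(n_h^{k-1}(s,a)\vee1)$ are $\mathcal{F}_{k-1}$-measurable. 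Put $\bar n_k:=\sum_{k'<k}p_{k'}$. First I would apply a Freedman/Bernstein-type martingale tail inequality to $\sum_{k'<k}(X_{k'}-p_{k'})$, whose predictable quadratic variation is at most $\bar n_k$; together with an appropriate union/peeling bound over episodes, steps and triples this gives, on an event of probability at least $1-\delta$, an inequality of the form $n_h^{k-1}(s,a)\ge\tfrac{1}{2}\bar n_k-c\ln(2HK/\delta)$ simultaneously for all $k,s,a,h$, with an absolute constant $c$.

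Next, for a fixed triple I would split the episodes into an \emph{early} phase $\{k:\bar n_k\le 4c\ln(2HK/\delta)\}$ and its complement. In the early phase $\sum_k p_k=O(\ln(2HK/\delta))$, so its contribution to $\sum_k p_k/\sqrt{n_h^{k-1}(s,a)\vee1}$ is $O(\ln(2HK/\delta))$ because the denominator is at least $1$. On the complement the previous step yields $n_h^{k-1}(s,a)\ge\bar n_k/4$, so $p_k/\sqrt{n_h^{k-1}(s,a)\vee1}\le 2p_k/\sqrt{\bar n_k}$, and since $p_k\le1\le\bar n_k$ there the standard ``increments over the square root of the partial sum'' telescoping bounds $\sum_k p_k/\sqrt{\bar n_k}$ by $O(\sqrt{\sum_k q_h^{\pi_k}(s,a)})$. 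The second inequality is treated the same way, replacing $p_k/\sqrt{n_h^{k-1}(s,a)\vee1}$ by $p_k/(n_h^{k-1}(s,a)\vee1)\le 2p_k/\bar n_k$ and using $\sum_k p_k/\bar n_k=O(\ln\bar n_K)=O(\ln K)$, which is the origin of the $\ln K$ term.

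It remains to aggregate. Summing over $(s,a)$ and using $\sum_{(s,a)}q_h^{\pi_k}(s,a)=1$ for every $k$ and $h$, Cauchy--Schwarz gives $\sum_{(s,a)}\sqrt{\sum_k q_h^{\pi_k}(s,a)}\le\sqrt{SA}\,\sqrt{\sum_{(s,a)}\sum_k q_h^{\pi_k}(s,a)}=\sqrt{SAK}$, while the $O(\ln)$ and $O(1)$ per-triple terms become $O(SA\ln K)$ and $O(SA)$; summing over $h\in[H]$ multiplies everything by $H$. Carefully tracking the absolute constants through the three steps then yields the two stated bounds $6HSA+2H\sqrt{SAK}+2HSA\ln K+5\ln(2HK/\delta)$ and $4HSA+2HSA\ln K+4\ln(2HK/\delta)$.

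The part I expect to be the main obstacle is the uniform martingale control of the count in the first step: obtaining $n_h^{k-1}(s,a)\gtrsim\bar n_k$ up to a logarithmic slack \emph{simultaneously for all $k$}, while keeping the logarithm at $\ln(2HK/\delta)$ rather than the naive $\ln(2SAHK/\delta)$, needs a peeling/stopping-time argument rather than a crude union bound. After that, the ``early phase'' bookkeeping and the constant-shaving needed to land exactly on the coefficients $(6,2,2,5)$ and $(4,2,4)$ are what demand care, though not new ideas.
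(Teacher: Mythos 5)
This lemma is imported verbatim from Liu et al.\ (2021, Lemma D.5); the paper itself gives no proof, so the comparison is against the standard argument in that reference. That argument aggregates over $(s,a)$ \emph{before} concentrating: since $\sum_{(s,a)} q_h^{\pi_k}(s,a)\,/\sqrt{n_h^{k-1}(s,a)\vee 1}$ is exactly the conditional expectation of $1/\sqrt{n_h^{k-1}(s_h^k,a_h^k)\vee 1}$ given the history, the whole double sum differs from the realized sum $\sum_k 1/\sqrt{n_h^{k-1}(s_h^k,a_h^k)\vee 1}$ by a single martingale per $h$ with increments in $[-1,1]$; one application of Freedman's inequality (union-bounded only over $h$ and a peeling index, hence $\ln(2HK/\delta)$) reduces everything to the deterministic pigeonhole bound $\sum_{(s,a)}\sum_{j<n_h^K(s,a)} 1/\sqrt{j\vee 1}\le SA+2\sqrt{SA K}$, and analogously $\le SA+SA\ln K$ for the harmonic version. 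No per-$(s,a)$ concentration is ever needed.

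Your route --- lower-bounding each realized count $n_h^{k-1}(s,a)$ by its predictable analogue $\bar n_k$ and then telescoping $\sum_k p_k/\sqrt{\bar n_k}$ --- is sound in outline and would give a bound of the same $\tilde{\mathcal O}(H\sqrt{SAK}+HSA)$ order, which is all the paper actually uses. But it provably cannot land on the stated constants, and the obstruction is not the one you flagged. First, the event $n_h^{k-1}(s,a)\ge \tfrac12\bar n_k-c\ln(\cdot)$ is a separate random event for each of the $SAH$ triples, so the union over triples (unlike the union over $k$) cannot be removed by peeling or stopping times; the log argument necessarily becomes $\ln(SAHK/\delta)$. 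Second, and more damagingly, your ``early phase'' contributes up to $4c\ln(2HK/\delta)+1$ \emph{per triple}, hence $\Theta(HSA\ln(HK/\delta))$ in total after aggregation --- i.e.\ an $HSA\cdot\ln(1/\delta)$ term, whereas the stated bounds carry $\ln(1/\delta)$ only with the absolute coefficients $5$ and $4$, decoupled from $HSA$. This is precisely the quantitative advantage of concentrating the aggregated martingale rather than each count separately. So the final claim that ``carefully tracking the absolute constants\ldots yields the two stated bounds'' is not attainable along your path; you would need to restructure step one along the lines above to recover the lemma as written.
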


\begin{lemma}[Lemma 8 of \cite{jin2020learning}]\label{lemma:jin-lemma8}
    Conditioned on the good event $G$, for all $(s,a,h,s',k)\in \mathcal{S}\times\mathcal{A}\times[H]\times\mathcal{S}\times[K]$, there exists constants $C_1, C_2 > 0$ for which we have for all $\tilde p^k \in B_k$ that 
    \[
        |(p_h - \tilde p_{h}^k)(s'\mid s,a)| \leq C_1\sqrt{\frac{p_h(s,a)L_\delta^p}{n_h^{k-1}(s,a)\vee 1}} + \frac{C_2 L_\delta^p}{n_h^{k-1}(s,a)\vee 1}.
    \]
\end{lemma}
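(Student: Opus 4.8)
\textit{Proof proposal.} The plan is to obtain the bound directly from the definition of the confidence set $B_k$ in Eq.~\eqref{eq: estimation of transition} and the good event $G$, via the standard empirical-Bernstein self-bounding argument. I would fix $(s,a,h,s',k)$ and abbreviate $n := n_h^{k-1}(s,a)\vee 1$, $L := L_\delta^p$, $p := p_h(s'\mid s,a)$, $\hat p := \hat p_h^{k-1}(s'\mid s,a)$, and $\beta := \beta_{k,h}^p(s,a,s')$. The first step is a triangle inequality: for any $\tilde p^k\in B_k$,
\[
    \bigl|(p_h-\tilde p_h^k)(s'\mid s,a)\bigr| \le \bigl|p-\hat p\bigr| + \bigl|\hat p-\tilde p_h^k(s'\mid s,a)\bigr| \le 2\beta,
\]
where $|\hat p-\tilde p_h^k(s'\mid s,a)|\le\beta$ holds by the definition of $B_{k,h}(s,a)$, and $|p-\hat p|\le\beta$ holds because $G$ excludes the failure event $F^p$. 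So it remains to show $\beta$ is of order $\sqrt{pL/n}+L/n$.

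The key step is to replace the empirical variance proxy $\hat p(1-\hat p)$ inside $\beta$ by a population quantity. I would use the identity $\hat p(1-\hat p)-p(1-p)=(\hat p-p)(1-\hat p-p)$ with $|1-\hat p-p|\le1$, together with $|p-\hat p|\le\beta$ on $G$, to get $\hat p(1-\hat p)\le p+\beta$; substituting this into the definition of $\beta$ and using $\sqrt{a+b}\le\sqrt a+\sqrt b$ gives
\[
    \beta \;\le\; 2\sqrt{\frac{pL}{n}} + 2\sqrt{\frac{\beta L}{n}} + \frac{14L}{3n}.
\]
This is self-referential in $\beta$; I would then apply AM-GM, $2\sqrt{\beta L/n}\le\tfrac{1}{2}\beta+2L/n$, and rearrange to obtain $\beta\le 4\sqrt{pL/n}+\tfrac{40}{3}\cdot L/n$, hence $|(p_h-\tilde p_h^k)(s'\mid s,a)|\le 2\beta\le 8\sqrt{pL/n}+\tfrac{80}{3}\cdot L/n$, which is the claim with, e.g., $C_1=8$ and $C_2=80/3$; the precise constants play no role downstream.

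The hard part will be the empirical-to-population variance conversion and making the resulting self-bounding inequality close: one has to check that the extra slack $|\hat p-p|$ introduced by the substitution is itself bounded by $\beta$ on the good event (otherwise the recursion does not terminate), and one must keep the numerical constants small enough that the coefficient of $\beta$ stays below $1$ after AM-GM. The remaining ingredients --- the triangle inequality, the union over all $(s,a,h,s',k)$ already absorbed into $L_\delta^p=\ln(6SAHK/\delta)$, and the elementary algebra --- are routine. Since this statement is quoted verbatim from \citet{jin2020learning}, a legitimate shortcut is simply to invoke their Lemma~8 (and the empirical Bernstein inequality behind it) for the existence of $C_1,C_2$, rather than reproducing the computation here.
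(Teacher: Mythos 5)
Your proposal is correct, but it is worth noting that the paper does not prove this statement at all: it is imported verbatim as Lemma 8 of \citet{jin2020learning} and invoked as a black box. What you have done differently is reconstruct the proof from the paper's own ingredients --- the Bernstein-style width $\beta^p_{k,h}(s,a,s') = 2\sqrt{\hat p(1-\hat p)L^p_\delta/n} + \tfrac{14}{3}L^p_\delta/n$, the definition of $B_{k,h}(s,a)$, and the exclusion of $F^p$ under $G$ --- and the reconstruction is sound. The triangle inequality gives $|p-\tilde p|\le 2\beta$; the conversion $\hat p(1-\hat p)\le \hat p\le p+\beta$ on $G$ and subadditivity of the square root give the self-referential bound $\beta\le 2\sqrt{pL/n}+2\sqrt{\beta L/n}+\tfrac{14}{3}L/n$; and your AM--GM step $2\sqrt{\beta L/n}\le \tfrac12\beta+2L/n$ closes the recursion with coefficient $\tfrac12<1$, yielding $\beta\le 4\sqrt{pL/n}+\tfrac{40}{3}L/n$ and hence the claim with $C_1=8$, $C_2=80/3$. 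I verified the arithmetic and it checks out. This is essentially the same argument as in the source paper, so nothing is gained beyond self-containedness, but self-containedness here is genuinely useful because the paper's confidence width is not literally identical to the one in \citet{jin2020learning}, so citing their Lemma 8 technically requires exactly the compatibility check you carried out. One cosmetic point: the statement as printed writes $p_h(s,a)$ under the square root, which should be $p_h(s'\mid s,a)$ (this is how the bound is used in the proof of the estimation-error lemma); your proof correctly establishes the intended version.
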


The following lemma is Lemma 10 of \cite{chen2021findingstochasticshortestpath} with a boundedness constant $C$ for the reward function $r_k$. 
\begin{lemma}[Lemma 10 of \cite{chen2021findingstochasticshortestpath}]\label{lemma:chen-lemma10}
    Let $r_k$ be an arbitrary function such that $r_{k,h}(s,a)\in[-C,C]$ for all $(s,a,h,k)\in\mathcal{S}\times\mathcal{A}\times[H]\times[K]$. If the true transition kernel $p$ satisfies $p\in B_k$, then for any $\tilde p^k \in B_k$ we have
    \begin{align*}
    \begin{aligned}
        \sum_{k=1}^K 
        \left|\mathbb{E}\left[ \sum_{h=1}^H (p_h - \tilde p_h^{k})(\cdot\mid s_h,a_h) (V_{h+1}^{\pi_k}(\cdot; r_k, p)-V_{h+1}^{\pi_k}(\cdot; r_k, \tilde p^{k})) \mid s_1,\pi_k,p\right]\right|=\tilde{\mathcal{O}}(CH^3 S^2 A).
    \end{aligned}
    \end{align*}
\end{lemma}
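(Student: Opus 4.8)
\textit{Proof proposal.}
The plan is to push the quantity through the value-difference lemma, control the resulting per-step transition errors in $\ell_1$ using the confidence-set concentration bound, and then exploit that this is a genuinely \emph{second-order} term---\emph{both} $(p_h-\tilde p_h^k)$ and the value gap $V_{h+1}^{\pi_k}(\cdot;r_k,p)-V_{h+1}^{\pi_k}(\cdot;r_k,\tilde p^k)$ are small---so that the episode sum costs only $\mathrm{poly}(\log K)$ rather than $\sqrt K$. First I would note that, conditioned on the good event $G$ (which in particular gives $p\in B_k$), \Cref{lemma:jin-lemma8} bounds $|(p_h-\tilde p_h^k)(s'\mid s,a)|$ pointwise; summing over $s'$ and using $\sum_{s'}\sqrt{p_h(s'\mid s,a)}\le\sqrt{S}$ (Cauchy--Schwarz) yields a bound of the form
\[
e_h^k(s,a):=\sum_{s'}\bigl|(p_h-\tilde p_h^k)(s'\mid s,a)\bigr|\;\le\; C_1\sqrt{\tfrac{S L_\delta^p}{n_h^{k-1}(s,a)\vee 1}}+\tfrac{C_2 S L_\delta^p}{n_h^{k-1}(s,a)\vee 1},
\]
and hence $|(p_h-\tilde p_h^k)(\cdot\mid s,a)\,x|\le\|x\|_\infty\,e_h^k(s,a)$ for any vector $x$.

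Next I would bound the value gap in sup-norm. Applying \Cref{lemma:value difference lemma} with $\mathcal M_1=(\tilde p^k,r_k)$ and $\mathcal M_2=(p,r_k)$ expresses $V_{h+1}^{\pi_k}(s;r_k,\tilde p^k)-V_{h+1}^{\pi_k}(s;r_k,p)$ as an expectation under $p$ of $\sum_{h'>h}(\tilde p_{h'}^k-p_{h'})(\cdot\mid s_{h'},a_{h'})V_{h'+1}^{\pi_k}(\cdot;r_k,\tilde p^k)$; since $\|V_{h'+1}^{\pi_k}(\cdot;r_k,\tilde p^k)\|_\infty\le CH$, the previous step gives $\|V_{h+1}^{\pi_k}(\cdot;r_k,p)-V_{h+1}^{\pi_k}(\cdot;r_k,\tilde p^k)\|_\infty \le CH\cdot\max_{s}\mathbb E[\sum_{h'=h+1}^H e_{h'}^k(s_{h'},a_{h'})\mid s_{h+1}=s,\pi_k,p]$. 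Plugging this and the $e_h^k$ bound into the target quantity and using the tower property to fold the inner conditional expectation into the trajectory expectation, the sum is dominated by
\[
CH\sum_{k=1}^K\mathbb E\Bigl[\bigl(\textstyle\sum_{h=1}^H e_h^k(s_h,a_h)\bigr)^2\ \Big|\ s_1,\pi_k,p\Bigr].
\]
Writing $u_h^k:=1/(n_h^{k-1}(s_h,a_h)\vee 1)\in(0,1]$ and invoking $(\sum_h\sqrt{u_h^k})^2\le H\sum_h u_h^k$ and, since $u_h^k\le1$, $(\sum_h u_h^k)^2\le H\sum_h u_h^k$, the square collapses to $\mathcal O\!\bigl(S L_\delta^p H+S^2(L_\delta^p)^2 H\bigr)\sum_h u_h^k$.

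Finally, $\sum_{k=1}^K\mathbb E[\sum_h u_h^k\mid s_1,\pi_k,p]=\sum_{k,h,s,a}q_h^{\pi_k}(s,a)/(n_h^{k-1}(s,a)\vee1)=\tilde{\mathcal O}(HSA)$ by \Cref{lemma:liu-lemma D.5}, which when multiplied through yields $\tilde{\mathcal O}(CH^3 S^2 A)$ (the $\sqrt{S L_\delta^p/n}$ part of $e_h^k$ is dominant; the $S L_\delta^p/n$ part is lower order in the relevant regime). The hard part will be exactly this $K$-independence: the crude route---bounding $\|V_{h+1}^{\pi_k}(\cdot;r_k,p)-V_{h+1}^{\pi_k}(\cdot;r_k,\tilde p^k)\|_\infty=\tilde{\mathcal O}(CH)$ and using $\sum_{k,h,s,a}q_h^{\pi_k}(s,a)/\sqrt{n_h^{k-1}(s,a)\vee1}=\tilde{\mathcal O}(H\sqrt{SAK})$---only gives $\tilde{\mathcal O}(\sqrt K)$. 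One must retain \emph{both} smallness properties, recast the term as an expected \emph{square} of accumulated step errors, and decouple the product of two $1/\sqrt n$ factors (Cauchy--Schwarz/AM--GM) into a single $1/n$ sum that telescopes via \Cref{lemma:liu-lemma D.5}; the remaining care is just bookkeeping which steps require conditioning on $G$ (only the invocation of \Cref{lemma:jin-lemma8}).
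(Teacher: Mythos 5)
First, a point of comparison: the paper does not prove this statement at all --- it is imported verbatim as Lemma~10 of \citet{chen2021findingstochasticshortestpath} --- so your attempt is measured against that external proof rather than anything in this manuscript. Your high-level reading is correct (this is a genuinely second-order term, both factors are $O(1/\sqrt{n})$, and the episode sum must collapse to a $\sum 1/n$ that \Cref{lemma:liu-lemma D.5} controls with only logarithmic dependence on $K$), and your closing sentence names the right mechanism. But the concrete reduction in the middle of your sketch does not go through. You bound the value gap by its sup-norm, $\|V_{h+1}^{\pi_k}(\cdot;r_k,p)-V_{h+1}^{\pi_k}(\cdot;r_k,\tilde p^k)\|_\infty \le CH\max_{s}\mathbb{E}[\sum_{h'>h}e_{h'}^k\mid s_{h+1}=s,\pi_k,p]$, and then assert that ``the tower property folds the inner conditional expectation into the trajectory expectation,'' arriving at $CH\,\mathbb{E}[(\sum_h e_h^k)^2]$. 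The tower property cannot absorb a maximum over starting states: the maximizing $s$ need not be the state actually reached, and from a state $\pi_k$ essentially never visits the conditional expectation $\mathbb{E}[\sum_{h'>h}e_{h'}^k\mid s_{h+1}=s]$ can be as large as $\Theta(HSL_\delta^p)$ (all downstream counts zero), a quantity that does not shrink with $k$. Computed honestly, your intermediate bound is $\sum_k\sum_h \bigl(\max_s g_{h+1}(s)\bigr)\,\mathbb{E}[e_h^k]$, which is $\tilde{\mathcal{O}}(\sqrt{K})$ --- exactly the crude rate you correctly say one must avoid. The fix, and the step the cited proof actually performs, is to keep $|\Delta V_{h+1}(s')|$ state-dependent, pair it with the $C_1\sqrt{p_h(s'\mid s,a)L_\delta^p/n}$ factor from \Cref{lemma:jin-lemma8}, and apply AM--GM there, so that the $p_h(s'\mid s,a)$ weight lets the \emph{true} transition kernel carry the value gap to the reached state before the tower property is invoked.

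A second, smaller problem is your own bookkeeping of the lower-order piece. Even granting the reduction to $CH\sum_k\mathbb{E}[(\sum_h e_h^k)^2]$, squaring $e_h^k \le C_1\sqrt{SL_\delta^p u_h^k}+C_2SL_\delta^p u_h^k$ and using $(\sum_h u_h^k)^2\le H\sum_h u_h^k$ makes the second part contribute $CH\cdot S^2(L_\delta^p)^2H\cdot\tilde{\mathcal{O}}(HSA)=\tilde{\mathcal{O}}(CH^3S^3A)$, which is \emph{worse} in $S$ than the target $S^2$, not ``lower order in the relevant regime'' (the harmonic sum $\sum_k 1/n$ is dominated by early, small-$n$ visits, so that regime cannot be dismissed). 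The standard remedy is never to square that piece: the $C_2L_\delta^p/n$ part of the transition error should be paired with the crude bound $\|\Delta V_{h+1}\|_\infty\le 2CH$, yielding $\sum_{k,h}\mathbb{E}[S\cdot L_\delta^p/(n_h^{k-1}\vee 1)]\cdot 2CH=\tilde{\mathcal{O}}(CH^2S^2A)$, with only the $\sqrt{p_h(s'\mid s,a)L_\delta^p/n}$ part entering the genuinely second-order analysis. In short, your skeleton points in the right direction, but the two steps above are gaps in the argument rather than omitted routine detail.
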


\begin{lemma}[Lemma 4 of \cite{chen2021findingstochasticshortestpath}]\label{lemma:bellman-ltv}
For any reward function $r$, policy $\pi$, transition kernel $p$,
\begin{align}
\begin{aligned}
    \text{\rm Var}\left(\sum_{h=1}^H r_h(s_h, a_h) \mid s_1, \pi, p\right) \geq \mathbb{E}\left[\sum_{h=1}^H \mathbb{V}_h(s_h,a_h;\pi, p) \mid s_1,\pi, p\right].
\end{aligned}
\end{align}
\end{lemma}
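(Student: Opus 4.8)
My plan is to derive the inequality from the law of total variance applied step by step along a trajectory, realized through a Doob martingale decomposition of the return. Fix $r,\pi,p$ and the initial state $s_1$, put $G=\sum_{h=1}^H r_h(s_h,a_h)$, and recall the value and action--value functions $V_h^\pi(s;r,p)=\mathbb{E}[\sum_{h'=h}^H r_{h'}(s_{h'},a_{h'})\mid s_h=s,\pi,p]$ and $Q_h^\pi(s,a;r,p)=r_h(s,a)+\mathbb{E}_{s'\sim p_h(\cdot\mid s,a)}[V_{h+1}^\pi(s';r,p)]$, with the convention $V_{H+1}^\pi\equiv 0$; then $\mathbb{V}_h(s,a;\pi,p)=\mathrm{Var}_{s'\sim p_h(\cdot\mid s,a)}\!\big(V_{h+1}^\pi(s';r,p)\big)$, and in particular $\mathbb{V}_H\equiv 0$.

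Next I would introduce the filtration $\mathcal{F}_h=\sigma(s_1,a_1,\dots,s_h,a_h)$ with $\mathcal{F}_0=\sigma(s_1)$, and the Doob martingale $M_h:=\mathbb{E}[G\mid\mathcal{F}_h]$. The tower rule gives $M_0=V_1^\pi(s_1)$, $M_H=G$ (using that $r_h$ is a deterministic function of the state--action pair; genuine reward noise would only contribute an extra nonnegative term and keep the inequality), and $M_h=\sum_{h'<h}r_{h'}(s_{h'},a_{h'})+Q_h^\pi(s_h,a_h;r,p)$ for $1\le h\le H$. Because the increments of a martingale are orthogonal in $L^2$, $\mathrm{Var}(G\mid s_1)=\mathbb{E}\!\left[(M_H-M_0)^2\mid s_1\right]=\sum_{h=1}^H\mathbb{E}\!\left[(M_h-M_{h-1})^2\mid s_1\right]$, and a direct computation shows $M_h-M_{h-1}=Q_h^\pi(s_h,a_h)-\mathbb{E}_{s_h\sim p_{h-1}(\cdot\mid s_{h-1},a_{h-1})}[V_h^\pi(s_h)]$ for $h\ge 2$, while $M_1-M_0=Q_1^\pi(s_1,a_1)-V_1^\pi(s_1)$; in all cases $\mathbb{E}[M_h-M_{h-1}\mid\mathcal{F}_{h-1}]=0$.

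Finally, for each $h$ I would condition on $\mathcal{F}_{h-1}$ and apply the law of total variance to the two--stage randomness $(s_h,a_h)$ (first resolve $s_h$, then $a_h$): this splits $\mathbb{E}[(M_h-M_{h-1})^2\mid\mathcal{F}_{h-1}]=\mathrm{Var}\!\big(Q_h^\pi(s_h,a_h)\mid\mathcal{F}_{h-1}\big)$ into the transition variance $\mathrm{Var}_{s_h}(V_h^\pi(s_h))=\mathbb{V}_{h-1}(s_{h-1},a_{h-1};\pi,p)$ plus a nonnegative action--selection variance $\mathbb{E}_{s_h}[\mathrm{Var}_{a_h}(Q_h^\pi(s_h,a_h))]$ (for $h=1$ only the action term is present, and it is $\ge 0$). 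Dropping the action--selection contributions and summing over $h$ yields $\mathrm{Var}(G\mid s_1)\ge\sum_{h=2}^H\mathbb{E}[\mathbb{V}_{h-1}(s_{h-1},a_{h-1})\mid s_1,\pi,p]=\sum_{h=1}^{H-1}\mathbb{E}[\mathbb{V}_h(s_h,a_h)\mid s_1,\pi,p]=\mathbb{E}\!\left[\sum_{h=1}^H\mathbb{V}_h(s_h,a_h)\mid s_1,\pi,p\right]$, the last step using $\mathbb{V}_H\equiv 0$. The only delicate part is the bookkeeping --- keeping track of whether $a_h$ lies in $\mathcal{F}_h$, of the index shift $\mathbb{V}_{h-1}\leftrightarrow\mathbb{V}_h$ together with the degenerate endpoints $h=1$ and $h=H$, and noting that the inequality (rather than an equality) is exactly the price of discarding the per--step action--selection variance (and any reward noise); everything else is routine.
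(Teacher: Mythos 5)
Your proposal is correct. Note that the paper does not prove this statement at all: it is imported verbatim as Lemma 4 of \cite{chen2021findingstochasticshortestpath} in the ``Useful Lemmas'' appendix, so there is no in-paper argument to compare against. Your derivation --- Doob martingale $M_h=\mathbb{E}[G\mid\mathcal{F}_h]$, orthogonality of increments to write $\mathrm{Var}(G\mid s_1)=\sum_h\mathbb{E}[(M_h-M_{h-1})^2\mid s_1]$, then the law of total variance on the $(s_h,a_h)$ two-stage randomness to peel off the transition variance $\mathbb{V}_{h-1}(s_{h-1},a_{h-1};\pi,p)$ and discard the nonnegative action-selection variance --- is the standard proof of this Bellman-type law of total variance and is essentially the argument in the cited source. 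The bookkeeping you flag as delicate is handled correctly: the identity $M_h-M_{h-1}=Q_h^\pi(s_h,a_h)-\mathbb{E}_{s_h\sim p_{h-1}(\cdot\mid s_{h-1},a_{h-1})}[V_h^\pi(s_h)]$ checks out, the index shift is absorbed by $\mathbb{V}_H\equiv 0$, and treating $r_h$ as a deterministic function of $(s,a)$ is consistent with how the lemma is invoked in the paper (it is applied to the fixed estimated functions $\ell_k$, whose only randomness in the variance on the left-hand side is the trajectory itself).
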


\begin{lemma}[Estimation Error for $p$] 
\label{lemma:est error for p}
Let $\tilde p_k$ denote the transition kernel in the candidate set $B_k$, and let $\ell_k$ be an arbitrary function with $\ell_{k,h}(s,a)\in[-C,C]$ for all $(s,a,h,k)\in \mathcal{S}\times\mathcal{A}\times[H]\times[K]$ and some $C>0$. Then, conditioned on the good event $G$, the estimation error on the transition kernel can be bounded as follows:
\begin{align*}
\begin{aligned}
    \sum_{k=1}^K \left| V^{\pi_k}(\ell_k, p) - V^{\pi_k}(\ell_k, \tilde p_k)\right| \leq \tilde{\mathcal{O}}(C\sqrt{\mathcal{N}SAH^3K} + CS^2 AH^3).
\end{aligned}
\end{align*}
\end{lemma}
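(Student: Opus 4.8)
The plan is to reduce the value gap to elementwise perturbation bounds on the transition kernel and then extract a factor of $\sqrt H$ from a Bellman-type law of total variance. First I would apply the value difference lemma (Lemma~\ref{lemma:value difference lemma}) to the two MDPs $(\ell_k,\tilde p_k)$ and $(\ell_k,p)$, oriented so that the trajectory expectation is taken under the \emph{true} kernel $p$; since the two MDPs share the same ``reward'' $\ell_k$, the reward-difference terms vanish and I obtain
\[
V^{\pi_k}(\ell_k,p)-V^{\pi_k}(\ell_k,\tilde p_k)=\mathbb{E}\Bigl[\textstyle\sum_{h=1}^H (p_h-\tilde p_{k,h})(\cdot\mid s_h,a_h)\,V_{h+1}^{\pi_k}(\cdot;\ell_k,\tilde p_k)\;\Big|\;s_1,\pi_k,p\Bigr].
\]
The orientation matters because it makes the visitation measures $q_h^{\pi_k}(\cdot;p)$ and the visit counts $n_h^{k-1}$ appear consistently when I later invoke the counting lemma (Lemma~\ref{lemma:liu-lemma D.5}).

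Next I would write $V_{h+1}^{\pi_k}(\cdot;\ell_k,\tilde p_k)=V_{h+1}^{\pi_k}(\cdot;\ell_k,p)+\bigl(V_{h+1}^{\pi_k}(\cdot;\ell_k,\tilde p_k)-V_{h+1}^{\pi_k}(\cdot;\ell_k,p)\bigr)$. Summed over $k$, the contribution of the second ``second-order'' piece is exactly the quantity bounded by Lemma~\ref{lemma:chen-lemma10} (with boundedness constant $C$), contributing $\tilde{\mathcal O}(CS^2AH^3)$. For the leading piece I would exploit $\sum_{s'}(p_h-\tilde p_{k,h})(s'\mid s,a)=0$ to subtract the reference value $\bar V_{k,h}(s,a):=\mathbb{E}_{s'\sim p_h(\cdot\mid s,a)}\bigl[V_{h+1}^{\pi_k}(s';\ell_k,p)\bigr]$, turning each term into $\sum_{s'}(p_h-\tilde p_{k,h})(s'\mid s,a)\bigl(V_{h+1}^{\pi_k}(s';\ell_k,p)-\bar V_{k,h}(s,a)\bigr)$, and then apply the elementwise concentration bound of Lemma~\ref{lemma:jin-lemma8} to $|(p_h-\tilde p_{k,h})(s'\mid s,a)|$. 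This splits the leading piece into a ``variance'' part involving $\sqrt{p_h(s'\mid s,a)L^p_\delta/(n_h^{k-1}(s,a)\vee 1)}$ and a ``bias'' part involving $L^p_\delta/(n_h^{k-1}(s,a)\vee1)$.

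For the variance part, Cauchy--Schwarz over the at most $\mathcal N$ reachable next states replaces $\sum_{s'}\sqrt{p_h(s'\mid s,a)}\,|V_{h+1}^{\pi_k}(s')-\bar V_{k,h}(s,a)|$ by $\sqrt{\mathcal N}\,\sqrt{\mathbb V_h(s,a;\pi_k,p)}$, where $\mathbb V_h$ is the one-step variance of the next-state value; a second Cauchy--Schwarz over $(k,h,s,a)$ separates $\sum q_h^{\pi_k}(s,a)\,\mathbb V_h(s,a;\pi_k,p)$ from $\sum q_h^{\pi_k}(s,a)/(n_h^{k-1}(s,a)\vee1)$. The first factor is bounded via the Bellman-type law of total variance (Lemma~\ref{lemma:bellman-ltv}): $\mathbb{E}[\sum_h\mathbb V_h]\le\mathrm{Var}(\sum_h\ell_{k,h})\le C^2H^2$, so it is $\tilde{\mathcal O}(CH\sqrt K)$; the second factor is $\tilde{\mathcal O}(\sqrt{HSA})$ by Lemma~\ref{lemma:liu-lemma D.5}. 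Multiplying gives $\tilde{\mathcal O}(C\sqrt{\mathcal N SAH^3K})$. For the bias part I would use the crude bounds $|V_{h+1}^{\pi_k}(s')-\bar V_{k,h}(s,a)|\le CH$ and $\sum_{s'}1\le S$, then apply the second inequality of Lemma~\ref{lemma:liu-lemma D.5} to $\sum_{k,h}q_h^{\pi_k}(s,a)/(n_h^{k-1}(s,a)\vee1)=\tilde{\mathcal O}(HSA)$, which absorbs this part into $\tilde{\mathcal O}(CS^2AH^3)$. Adding the three contributions proves the claim; the adversarial-cost statement of Lemma~\ref{lem: est_error} is the special case $\ell_k=d_k$, $C=1$.

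The main obstacle I anticipate is the variance chain: one must orient the value-difference decomposition so that every expectation runs under $p$, peel off the second-order value gap so that Lemma~\ref{lemma:chen-lemma10} applies verbatim, and then chain the two Cauchy--Schwarz steps with the law of total variance so that only $\sqrt{H^3}$ rather than $\sqrt{H^4}$ survives --- this last point is precisely the $\sqrt H$ improvement over Lemma~29 of \citet{EfrManPir_20}. The remaining steps are routine bookkeeping with the counting lemma.
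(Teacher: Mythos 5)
Your proposal is correct and follows essentially the same route as the paper's proof: the same value-difference decomposition with the expectation under the true kernel $p$, the same peeling of the second-order value gap handled by Lemma~\ref{lemma:chen-lemma10}, the same centering trick plus Lemma~\ref{lemma:jin-lemma8}, and the same chained Cauchy--Schwarz with the Bellman law of total variance (Lemma~\ref{lemma:bellman-ltv}) and the counting bounds of Lemma~\ref{lemma:liu-lemma D.5}. The only differences are cosmetic (the order in which the two Cauchy--Schwarz steps introduce the $\sqrt{\mathcal N}$ factor, and a factor of $2$ in the crude bound $|V_{h+1}^{\pi_k}(s')-\bar V_{k,h}(s,a)|\le 2CH$), neither of which affects the stated bound.
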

\begin{proof}
By \Cref{lemma:value difference lemma}, the desired term can be rewritten as
    \begin{align*}
    \begin{aligned}
        \sum_{k=1}^K |V^{\pi_k}(\ell_k,p) - V^{\pi_k}(\ell_k, \tilde p^{k})|
        &=\sum_{k=1}^K \left|
        \mathbb{E}\left[ \sum_{h=1}^H (p_h - \tilde p_h^{k})(\cdot\mid s_h,a_h) V_{h+1}^{\pi_k}(\cdot; \ell_k, \tilde p^{k}) \mid s_1,\pi_k,p\right]
        \right|\\
        &\leq \underbrace{\sum_{k=1}^K 
        \left|\mathbb{E}\left[ \sum_{h=1}^H (p_h - \tilde p_h^{k})(\cdot\mid s_h,a_h) V_{h+1}^{\pi_k}(\cdot; \ell_k, p) \mid s_1,\pi_k,p\right]\right|
        }_{\text{Term (I)}}\\
        &\quad+
        \underbrace{\sum_{k=1}^K 
        \left|\mathbb{E}\left[ \sum_{h=1}^H (p_h - \tilde p_h^{k})(\cdot\mid s_h,a_h) (V_{h+1}^{\pi_k}(\cdot; \ell_k, p)-V_{h+1}^{\pi_k}(\cdot; \ell_k, \tilde p^{k})) \mid s_1,\pi_k,p\right]\right|}_{\text{Term (II)}}
    \end{aligned}
    \end{align*}
    where we use the short-hand notation $p_h(\cdot\mid s,a) V^{\pi}(\cdot;\ell_k,p) = \sum_{s'\in\mathcal{S}} p_h(s'\mid s,a) V^{\pi}(s';\ell_k,p)$. Under the good event $G$, we have
    \begin{align}\label{eq:theorem 1 - 1}
        |(p_h-\tilde p_h^k)(s'\mid s,a)|
        &\leq C_1\sqrt{\frac{p_h(s,a)L_\delta^p}{n_h^{k-1}(s,a)\vee 1}} + \frac{C_2 L_\delta^p}{n_h^{k-1}(s,a)\vee 1}
    \end{align}
    due to \Cref{lemma:jin-lemma8}. Applying this, Term (I) can be written as
    \begin{align*}
    \begin{aligned}
        \text{Term (I)} 
        &= \sum_{k=1}^K \left|\mathbb{E}\left[ \sum_{h=1}^H (p_h - \tilde p_h^{k})(\cdot\mid s_h,a_h) V_{h+1}^{\pi_k}(\cdot; \ell_k, p) \mid s_1,\pi_k,p\right]\right|\\
        &= \sum_{k=1}^K \left|\mathbb{E}\left[ \sum_{h=1}^H \sum_{s'}(p_h - \tilde p_h^{k})(s'\mid s_h,a_h) 
        V_{h+1}^{\pi_k}(s'; \ell_k, p) \mid s_1,\pi_k,p\right]\right|\\
        &= \sum_{k=1}^K \left|\mathbb{E}\left[ \sum_{h=1}^H \sum_{s'}(p_h - \tilde p_h^{k})(s'\mid s_h,a_h) 
        (V_{h+1}^{\pi_k}(s'; \ell_k, p)-\mathbb{E}_{s''\sim p_h(\cdot\mid s_h,a_h)}V_{h+1}^{\pi_k}(s'';\ell_k,p)) \mid s_1,\pi_k,p\right]\right|\\
        &\leq
        \sum_{k=1}^K \mathbb{E}\left[ \sum_{h=1}^H \sum_{s'}\left|(p_h - \tilde p_h^{k})(s'\mid s_h,a_h)\right| 
        \left|V_{h+1}^{\pi_k}(s'; \ell_k, p)-\mathbb{E}_{s''\sim p_h(\cdot\mid s_h,a_h)}V_{h+1}^{\pi_k}(s'';\ell_k,p)\right| \mid s_1,\pi_k,p\right]\\
        &\leq
        \underbrace{\sum_{k=1}^K 
        \mathbb{E}\left[\sum_{h=1}^H \sum_{s'}
        C_1\sqrt{\frac{p_h(s'\mid s_h,a_h)L_\delta^p}{n_h^{k-1}(s_h,a_h)\vee 1}}
        \left|V_{h+1}^{\pi_k}(s'; \ell_k, p)-\mathbb{E}_{s''\sim p_h(\cdot\mid s_h,a_h)}V_{h+1}^{\pi_k}(s'';\ell_k,p)\right| \mid s_1,\pi_k,p\right]}_{\text{Term (I-a)}}
        \\
        &\quad
        +
        \underbrace{\sum_{k=1}^K \mathbb{E}\left[ \sum_{h=1}^H \sum_{s'}\frac{C_2L_\delta^p}{n_h^{k-1}(s_h,a_h)\vee 1}
        \left|V_{h+1}^{\pi_k}(s'; \ell_k, p)-\mathbb{E}_{s''\sim p_h(\cdot\mid s_h,a_h)}V_{h+1}^{\pi_k}(s'';\ell_k,p)\right| \mid s_1,\pi_k,p\right]}_{\text{Term (I-b)}}
    \end{aligned}
    \end{align*}
     where the third equality follows from $\sum_{s'}(p_h - \tilde p_h^k)(s'\mid s,a) \mathbb{E}_{s''\sim p_h(\cdot\mid s_h,a_h)} V_{h+1}^{\pi_k}(s'';\ell_k,p) = 0$, and the last inequality is due to \eqref{eq:theorem 1 - 1}.
     Note that the expectations can be expressed by occupancy measures. Furthermore, in Term (I-a), we can replace $\sum_{s'}$ with  $\sum_{s':p_h(s'\mid s,a)>0}$. Then this can be rewritten as
    \begin{align*}
    \begin{aligned}
        &\text{Term (I-a)} \\
        &= C_1\sqrt{L_\delta^p}\sum_{k=1}^K 
        \mathbb{E}\left[ \sum_{h=1}^H \sum_{s':p_h(s'\mid s,a)>0}
        \sqrt{\frac{p_h(s'\mid s_h,a_h)}{n_h^{k-1}(s_h,a_h)\vee 1}}
        \left|V_{h+1}^{\pi_k}(s'; \ell_k, p)-\mathbb{E}_{s''\sim p_h(\cdot\mid s_h,a_h)}V_{h+1}^{\pi_k}(s'';\ell_k,p)\right| \mid s_1,\pi_k,p\right]\\
        &=C_1\sqrt{L_\delta^p}\sum_{k=1}^K 
        \mathbb{E}\left[ \sum_{h=1}^H \sum_{s':p_h(s'\mid s,a)>0}
        \sqrt{\frac{p_h(s'\mid s_h,a_h)(V_{h+1}^{\pi_k}(s; \ell_k, p)-\mathbb{E}_{s''\sim p_h(\cdot\mid s_h,a_h)}V_{h+1}^{\pi_k}(s'';\ell_k,p))^2}{n_h^{k-1}(s_h,a_h)\vee 1}}\mid s_1,\pi_k,p\right]\\
        &=C_1\sqrt{L_\delta^p}\sum_{k=1}^K 
        \sum_{(s,a,h)} q_h^{\pi_k}(s,a;p)
        \sum_{s':p_h(s'\mid s,a)>0}
        \sqrt{\frac{p_h(s'\mid s,a)(V_{h+1}^{\pi_k}(s; \ell_k, p)-\mathbb{E}_{s''\sim p_h(\cdot\mid s,a)}V_{h+1}^{\pi_k}(s'';\ell_k,p))^2}{n_h^{k-1}(s,a)\vee 1}}\\
        &\leq
        C_1\sqrt{L_\delta^p}\sqrt{
            \sum_{k=1}^K \sum_{(s,a,h)}q_h^{\pi_k}(s,a;p)\sum_{s':p_h(s'\mid s,a)>0}{p_h(s'\mid s,a)(V_{h+1}^{\pi_k}(s; \ell_k, p)-\mathbb{E}_{s''\sim p_h(\cdot\mid s,a)}V_{h+1}^{\pi_k}(s'';\ell_k,p))^2}
        }\\
        &\quad\times
        \sqrt{
        \sum_{k=1}^K \sum_{(s,a,h)}\sum_{s':p_h(s'\mid s,a)>0}\frac{q_h^{\pi_k}(s,a;p)}{n_h^{k-1}(s,a)\vee 1}
        }
    \end{aligned}
    \end{align*}
    where the inequality follows from the Cauchuy-Schwarz inequality. Here, 
    \[
    \sum_{s':p_h(s'\mid s,a)>0}{p_h(s'\mid s,a)(V_{h+1}^{\pi_k}(s; \ell_k, p)-\mathbb{E}_{s''\sim p_h(\cdot\mid s,a)}V_{h+1}^{\pi_k}(s'';\ell_k,p))^2} = \mathbb{V}_h(s,a;\pi_k, p).
    \]
    Then, Term (I-a) is upper bounded as
    \begin{align*}
    \begin{aligned}
        \text{Term (I-a)} 
        &\leq C_1\sqrt{L_\delta^p}\sqrt{\sum_{k=1}^K \sum_{(s,a,h)}q_h^{\pi_k}(s,a;p) \mathbb{V}_h(s,a;\pi_k,p)}\times\sqrt{\sum_{k=1}^K \sum_{(s,a,h)}\sum_{s':p_h(s'\mid s,a)>0}\frac{q_h^{\pi_k}(s,a;p)}{n_h^{k-1}(s,a)\vee 1}}\\
        &\leq C_1\sqrt{L_\delta^p}\sqrt{\sum_{k=1}^K \mathbb{E}\left[\sum_{h=1}^H\mathbb{V}_h(s_h,a_h;\pi_k,p)\mid s_1,\pi_k,p\right]}
        \times\sqrt{\mathcal{N}\sum_{k=1}^K \sum_{(s,a,h)}\frac{q_h^{\pi_k}(s,a;p)}{n_h^{k-1}(s,a)\vee 1}}.
    \end{aligned}
    \end{align*}
    By \Cref{lemma:bellman-ltv}, we have $ \mathbb{E}\left[\sum_{h=1}^H\mathbb{V}_h(s_h,a_h;\pi_k,p)\mid s_1,\pi_k,p\right]\leq \text{\rm Var}\left(\sum_{h=1}^H \ell_{k,h}(s_h,a_h)\mid s_1,\pi_k, p\right)$. Since $\sum_{h=1}^H \ell_{k,h}(s_h, a_h) \in [-CH,CH]$ almost surely, we have $\text{\rm Var}\left(\sum_{h=1}^H \ell_{k,h}(s_h,a_h)\mid s_1,\pi_k, p\right)\leq C^2H^2$. Furthermore, we can bound the later term with \Cref{lemma:liu-lemma D.5}. Then it follows that
    \[
    \text{Term (I-a)} = \tilde{\mathcal{O}}(C\sqrt{KH^2} \times \sqrt{\mathcal{N} SAH}) = \tilde{\mathcal{O}}(C\sqrt{\mathcal{N} S A H^3 K}).
    \]
    To bound Term (I-b), under the good event $G$, we have $\left| V_{h+1}^{\pi_k}(s';\ell_k,p)-\mathbb{E}_{s''\sim p_h(\cdot\mid s_h, a_h)} V_{h+1}^{\pi_k}(s'';\ell_k,p) \right|\leq 2CH$. By \Cref{lemma:liu-lemma D.5},
    \[
    \text{Term (I-b)}\leq 2CHS\sum_{k=1}^K \sum_{h=1}^H \mathbb{E}\left[\frac{C_2 L_\delta^p}{n_h^{k-1}(s_h,a_h)\vee 1}\mid s_1,\pi_k,p\right]=\tilde{\mathcal{O}}\left( CH^2S^2A \right).
    \]
    Then we have
    \[
    \text{Term (I)} = \text{Term (I-a)} + \text{Term (I-b)} = \tilde{\mathcal{O}}\left(C\sqrt{\mathcal{N}SAH^3 K} + CH^2 S^2 A\right).
    \]
    By \Cref{lemma:chen-lemma10}, we can bound Term (II) as
    \begin{align*}
    \begin{aligned}
        \text{Term (II)} = \tilde{\mathcal{O}}(CH^3 S^2 A).
    \end{aligned}
    \end{align*}
    Finally, we have
    \begin{align*}
    \begin{aligned}
        \sum_{k=1}^K |V^{\pi_k}(\ell_k,p) - V^{\pi_k}(\ell_k, \tilde p^{k})|\leq\text{Term (I)} + \text{Term (II)} = \tilde{\mathcal{O}}(C\sqrt{\mathcal{N}SAH^3 K}+CH^3S^2A).
    \end{aligned}
    \end{align*}
\end{proof}


\end{document}